\def\eqref#1{equation~\ref{#1}}
\def\1{\bm{1}}
\def\rx{{\textnormal{x}}}
\def\ry{{\textnormal{y}}}
\def\vzero{{\bm{0}}}
\def\vb{{\bm{b}}}
\def\ve{{\bm{e}}}
\def\vq{{\bm{q}}}
\def\vs{{\bm{s}}}
\def\vv{{\bm{v}}}
\def\vx{{\bm{x}}}
\def\vy{{\bm{y}}}
\def\mA{{\bm{A}}}
\def\mB{{\bm{B}}}
\def\mD{{\bm{D}}}
\def\mE{{\bm{E}}}
\def\mG{{\bm{G}}}
\def\mJ{{\bm{J}}}
\def\mP{{\bm{P}}}
\def\mQ{{\bm{Q}}}
\def\mR{{\bm{R}}}
\def\mT{{\bm{T}}}
\def\mV{{\bm{V}}}
\DeclareMathAlphabet{\mathsfit}{\encodingdefault}{\sfdefault}{m}{sl}
\SetMathAlphabet{\mathsfit}{bold}{\encodingdefault}{\sfdefault}{bx}{n}
\newcommand{\E}{\mathbb{E}}
\newcommand{\R}{\mathbb{R}}
\DeclareMathOperator{\Tr}{Tr}
\newtheorem{theorem}{Theorem}[section]
\newtheorem{lemma}[theorem]{Lemma}
\newtheorem{proposition}[theorem]{Proposition}
\newtheorem{definition}[theorem]{Definition}
\newtheorem{remark}[theorem]{Remark}
\newcommand{\defn}[1]{{\textbf{\textit{#1}}}}
\DeclareMathOperator{\exact}{{\rm EXACT}}
\DeclareMathOperator{\hamming}{{\rm Hamming}}
\DeclareMathOperator{\dist}{{dist}}
\DeclareMathOperator{\blackwell}{{\rm Blackwell}}
\newcommand{\squishlist}{
   \begin{list}{$\bullet$}
    { \setlength{\itemsep}{0pt}      \setlength{\parsep}{3pt}
      \setlength{\topsep}{3pt}       \setlength{\partopsep}{0pt}
      \setlength{\leftmargin}{1.5em} \setlength{\labelwidth}{1em}
      \setlength{\labelsep}{0.5em} } }
\newcommand{\squishend}{  \end{list}  }
\title[Data Reliability Scoring]{Data Reliability Scoring}
\author{Yiling Chen}
\affiliation{
  \institution{Harvard University}
  \city{Cambridge}
  \country{United States}
}
\author{Shi Feng}
\affiliation{
  \institution{Harvard University}
  \city{Cambridge}
  \country{United States}
}
\author{Paul Kattuman}
\affiliation{
  \institution{University of Cambridge}
  \city{Cambridge}
  \country{United Kingdom}
}
\author{Fang-Yi Yu}
\affiliation{
  \institution{George Mason University}
  \city{Fairfax}
  \country{United States}
}
\begin{abstract}

How can we assess the reliability of a dataset without access to ground truth? We introduce the problem of {\em reliability scoring} for datasets collected from potentially strategic sources. The true data are unobserved, but we see outcomes of an unknown statistical experiment that depends on them. To benchmark reliability, we define ground-truth–based orderings that capture how much reported data deviate from the truth. We then propose the {\em Gram determinant score}, which measures the volume spanned by vectors describing the empirical distribution of the observed data and experiment outcomes. We show that this score preserves several ground-truth-based reliability orderings and, uniquely up to scaling, yields the same reliability ranking of datasets regardless of the experiment -- a property we term experiment agnosticism. Experiments on synthetic noise models, CIFAR-10 embeddings, and real employment data demonstrate that the Gram determinant score effectively captures data quality across diverse observation processes.

\end{abstract}
\begin{document}

\settopmatter{printfolios=false}
\acmYear{2026}\copyrightyear{2026}
\setcopyright{cc}
\setcctype[4.0]{by}
\acmConference[EC '26]{The 27th ACM Conference on Economics and Computation}{July 6--10, 2026}{Rome, Italy}
\acmBooktitle{The 27th ACM Conference on Economics and Computation (EC '26), July 6--10, 2026, Rome, Italy}
\acmDOI{10.1145/3821539.3827689}
\acmISBN{979-8-4007-2813-6/26/07}

\maketitle

\section{Introduction}
Reliable data can effectively inform decision-making. For example, vehicle condition and driving behavior data help insurance companies set policies; investors' positions guide regulators in adjusting financial market rules; and
during the COVID-19 pandemic, case numbers were used by governments to allocate medical resources.
Yet, such data are typically reported by people. They can be noisy, and more importantly, strategically or maliciously distorted. Direct verification is often impossible or impractical. This raises a central question: how can we tell whether a dataset is reliable? Answering this would greatly enhance the value of data-driven methods for decision-making.

Without further knowledge, this question is unresolvable. But in practice, we often have access to data that are related to the private data in question. For instance, insurance companies may use telematic devices--albeit imperfect--to estimate vehicle condition; regulators can observe trading volumes correlated with investors' positions; and governments track COVID mortality numbers linked to true case counts through disease fatality rates.
Such auxiliary observations can provide useful information to assess how well the reported data are consistent with the unobservable ground truth.

In this paper, we initiate the study of reliability scoring for datasets collected from potentially strategic or noisy sources.  Although the underlying truth remains unknown, we assume access to outcomes of unknown statistical experiments that depend on
it. Our contributions include:
\squishlist
\item We formalize the problem of reliability scoring from observations generated by unknown experiments. (\cref{sec:model})
\item We introduce ground-truth-based dataset reliability orderings as benchmarks for evaluating reliability scores. (\cref{sec:ordering})
\item We propose a novel reliability measure, the \defn{Gram Determinant Score}, along with its kernel variant, which preserves several ground-truth-based dataset reliability orderings under certain conditions. Moreover, we show that the Gram Determinant Score is, up to scaling, the unique reliability score that produces the same dataset ranking for all experiments – a property we term {\em experiment agnosticism}. (\cref{sec:gram})
\item We analyze the limitations of reliability scoring and show that the conditions under which the Gram Determinant Score preserves reliability orderings are nearly tight. (\cref{sec:impossible})
\item We empirically validate the Gram Determinant Score using synthetic data, the CIFAR-10 image dataset, and employment data.\footnote{Code is available at \url{https://github.com/chen-lab-seas/Data-Reliability-Scoring}.} (\cref{sec:experiment})
\squishend

The Gram Determinant Score admits a geometric interpretation: it measures the volume of the parallelepiped spanned by the joint distribution of the reported data and the experiment outcomes. Under the conditions analyzed below, this volume decreases as the reported data deviate further from the truth. (\Cref{fig:illustrate_vol}) %

\subsection{Related Work}

Early frameworks categorize data reliability into intrinsic, contextual, accessibility, and representational dimensions~\citep{wang1996beyond,priestley2023survey}.
Our work focuses on intrinsic reliability---the extent to which reported data match the true data---using auxiliary observations.

Our approach is inspired by information elicitation, which designs scoring mechanisms that incentivize truthful reporting. A key distinction is our emphasis on preserving ordinal relationships: assigning higher scores to more reliable data. Traditional elicitation instead focuses solely on ensuring that truthful reporting is strictly optimal among alternatives.
Information elicitation has two main settings (1) when the scoring mechanism can access the ground truth, e.g., proper scoring rules for predictions of future observable events~\citep{gneiting2007strictly,osband1985providing,lambert2008eliciting,frongillo2015vector,liuchen};
and (2) peer prediction mechanisms, which do not have access to ground truth but rely on multiple agents' reports~\citep{MRZ05,dasgupta2013crowdsourced}.  The most relevant work to ours is \citet{10.1145/3638239}, which introduces determinant mutual information and inspires our Gram Determinant Score.  We provide a more detailed comparison with \citet{10.1145/3638239} in \Cref{sec:kernels}.  More recent work~\citep{zheng2025properdatasetvaluationpointwise} uses Shannon (pointwise) mutual information to evaluate datasets and introduce the Blackwell ordering to compare reported dataset.

Traditional statistical approaches~\citep{huber2004robust,meeker2021statistical} often assess reliability under distributional assumptions.
In contrast, our method evaluates reliability agnostic to the underlying distribution.
There are several general-purpose scores that measure the stochastic relationship between random variables, e.g., KL-divergence~\citep{kullback1951information}, $f$-divergence~\citep{csiszar1972class}, determinant~\citep{10.5555/2999325.2999469,xu2019l_dmi}, PCA~\citep{amiri2022fundamentalstaskagnosticdatavaluation}.
But they often lack clear connections to standard, interpretable criteria such as accuracy or data integrity.  On the other hand, one line of data valuation focuses on task-dependent utility---quantifying the value of a dataset or individual samples for a specific objective.  Examples include value of information in decision theory~\citep{howard2007information,7782936, FrankelKamenica2019AER}, influence-based valuation~\citep{ade490ae-4e2e-3d0a-b973-8fce242c5658,pmlr-v70-koh17a}, and data Shapley~\citep{ghorbani2019data}.  In contrast, our reliability scoring aims to evaluate datasets in a task-agnostic and experiment-agnostic manner.




Other related areas include learning with noisy labels~\citep{natarajan2013learning}, which typically assumes that reports are corrupted by independent noise. Some works (e.g.,~\citep{liu2020peer}) relax this by allowing unknown noise, but our setting is more general: auxiliary observations may lie in an entirely different space.
Anomaly detection~\citep{chandola2009anomaly} addresses distribution shifts, but focuses on adaptive detection rather than reliability scoring. Finally, reliability theory primarily studies system robustness to failure ~\citep{gnedenko2014mathematical}, a concept distinct from data reliability.

\section{Model}\label{sec:model}
In this section, we introduce the problem of designing data reliability scores to assess how much a dataset deviates from its inaccessible ground truth. To benchmark reliability, we propose ground-truth-based reliability orderings---partial orders over datasets that compare their relative deviations from the same true dataset. The ideal goal of a reliability score is to preserve these orderings, assigning higher scores to datasets that more faithfully reflect the true data.

\subsection{Basic Setup}
There is a single data source (an agent) who has access to a set of \emph{true data} $\vx = (x_1,\dots,x_N)$ of size $N$.\footnote{$\vx$ is non-time-series data. Hence, the order of the data within the set is not important.}
The agent provides \emph{reported data}
$\hat{\vx}=(\hat{x}_1,\dots,\hat{x}_N)$, which can potentially be different from $\vx$. Let $\mathcal{X} = [d]$ be the set of $d$ possible data values. Thus, $x_n\in \mathcal{X}$ and $\hat{x}_n \in \mathcal{X}$ for all $n$.%

Our goal is to evaluate how reliably the reported data $\hat{\vx}$ reflects the true data $\vx$. Although $\vx$ is unobserved, we have access to additional observable data $\vy=(y_1,\dots,y_N)$, called \emph{observations}, which are indirectly related to $\vx$. The observation space $\mathcal{Y}$ may differ from $\mathcal{X}$. We model the relationship between $\vy$ and $\vx$ as an unknown, statistical \emph{experiment}, represented by a column-stochastic matrix $\mP = (P_x)_{x\in \mathcal{X}}$, where each column $P_x$ is a distribution over $\mathcal{Y}$.  Given true data $\vx = (x_1,\dots,x_N)$, observations are generated according to $\mP$ with $y_n\sim P_{x_n}$ independently for all $n\in [N]$. We denote this generation as $\vy \sim \mP(\vx)$.

%
For instance, $\vx$ may represent patients' true disease state (having or not having the disease), $\hat{\vx}$ the diagnoses reported by a hospital to an insurance database for reimbursement, and $\vy$ the results of inexpensive blood tests or imaging biomarkers correlated with the disease. As another example, in an image-labeling dataset, $\vx$ denotes the true image labels, while $\hat{\vx}$ are the reported labels. The observations $\vy$ may come from encoder representations, such as those produced from contrastive learning methods~\citep{zbontar2021barlowtwinsselfsupervisedlearning}.

Having access to $\vy$ and knowing that $\vy$ are generated by unknown experiment $\mP$, we want to design a \emph{reliability score} $S:\mathcal{X}^N\times \mathcal{Y}^N\to \mathbb{R}$ such that, if a dataset $\hat{\vx}$ aligns with $\vx$ more than a dataset $\hat{\vx}'$ does, dataset $\hat{\vx}$ receives a higher reliability score in expectation than dataset $\hat{\vx}'$: $\E_{\vy\sim \mP(\vx)}[S(\hat{\vx}, \vy)]> \E_{\vy\sim \mP(\vx)}[S(\hat{\vx}', \vy)]$. However, to formalize this goal, we will first need metrics to quantify how much reported data align with the true data. In \cref{sec:datarelation}, we describe how to use a misreport matrix to represent the relationship between reported data and true data. Then, we introduce four notions of ground-truth-based reliability ordering of reported datasets in \cref{sec:ordering} before returning to define the ideal goal of reliability scoring in \cref{sec:scoring}.

\subsection{Representation of Dataset Relationships} \label{sec:datarelation}

The relationship between the true dataset $\vx$ and a reported dataset $\hat{\vx}$ can be summarized by the size of the datasets $N$ and a $d \times d$-dimension \emph{misreport matrix} $\mQ$ where each entry $\mQ (i, j)$ represents the frequency of misreporting true value $i$ in $\vx$ for value $j$ in $\hat{\vx}$:
$$\mQ(i,j) = \frac{1}{N}\sum_{n = 1}^N \mathbf{1}[{x}_n = i, \hat{x}_n = j].$$
$\mQ$ is the joint frequency of true data and reported data. It can be further decomposed into marginal frequency and conditional frequency.
Let $\vq_\vx(i) = \frac{1}{N}\sum_{n=1}^N \mathbf{1}[x_n = i]$ and $\vq_{\hat{\vx}}(i)  = \frac{1}{N}\sum_{n=1}^N \mathbf{1}[\hat{x}_n = i]$ $\forall i\in \mathcal{X}$, the marginal frequency matrices are defined as $d\times d$ diagonal matrices $\mQ_\vx,\mQ_{\hat{\vx}}$ with $\vq_\vx$ and $\vq_{\hat{\vx}}$ respectively as diagonal and zeros everywhere else.
When the relevant label has positive frequency, define the conditional-frequency matrices $\mQ_{\hat{\vx}\mid \vx}$ and $\mQ_{\vx\mid \hat{\vx}}$ by, for all $i, j\in \mathcal{X}$,
$\mQ_{\hat{\vx}\mid \vx}(i,j) = \frac{\sum_n \mathbf{1}[x_n = j, \hat{x}_n = i]}{\sum_n \mathbf{1}[x_n = j]}$ and $\mQ_{\vx\mid \hat{\vx}}(i,j) = \frac{\sum_n \mathbf{1}[x_n = i,\hat{x}_n = j]}{\sum_n \mathbf{1}[\hat{x}_n = j]}$.
Hence,
\begin{equation}\label{eq:m2j}
    \mQ = (\mQ_{\hat{\vx}\mid \vx}\mQ_{\vx})^\intercal\text{ and }\mQ = \mQ_{\vx\mid \hat{\vx}}\mQ_{\hat{\vx}}.
\end{equation}
The joint and marginal frequency matrices exist for every pair of $\vx$ and $\hat{\vx}$; the conditional-frequency matrices are defined when every conditioning label occurs at least once. All of these matrices are unobserved because $\vx$ is unknown. We introduce them to quantify the deviation of $\hat{\vx}$ from $\vx$.
In this paper, we use $\mathcal{Q}$ to denote a set of misreporting matrices and, by abuse of notation, the pairs $(\vx,\hat{\vx})$ whose associated misreport matrix belongs to $\mathcal{Q}$.

Given a statistical experiment $\mP$, the matrix product $\mP\mQ$ is a $|\mathcal{Y}|\times |\mathcal{X}|$ matrix representing the joint distribution\footnote{In our setting, $\vx$ and $\hat{\vx}$ are fixed, while $y$ is random. For convenience, we use the term joint distribution, rather than expected frequency, for $\mP\mQ$ and $\mP\mQ_{\vx}$.} of observations and reported data; its $(k,i)$ entry is $\Pr(y=k, \hat{x}=i)$. The matrix product $\mP\mQ_{\vx}$ is a $|\mathcal{Y}|\times |\mathcal{X}|$ matrix representing the joint distribution of observations and true data; its $(k,i)$ entry is $\Pr(y=k, x=i)$. Although both $\mP\mQ$ and $\mP\mQ_{\vx}$ are unknown, the paired data $(\hat{x}_n,y_n)_{n=1}^N$ provide the observations available for reliability scoring.

\subsection{Reliability Orderings of Datasets} \label{sec:ordering}

To compare the reliability of reported datasets relative to the true data $\vx$, some preference on relative dataset reliability is needed. While the preference may depend on applications, we suggest three natural strict partial orderings of reported datasets, each defined with respect to true data $\vx$.
\begin{enumerate}
    \item[1.] {\bf Exact Match Ordering}: $\hat{\vx}\succ_{\exact}^{\vx}\hat{\vx}'$ if $\hat{\vx} = \vx$ but $\hat{\vx}'\neq \vx$. When every true label occurs at least once, this is equivalent to $\mQ_{\hat{\vx}\mid \vx}'\neq \mathbb{I}$ and $\mQ_{\hat{\vx}\mid \vx} = \mathbb{I}$. This ordering picks up only complete agreement with the true data, and does not differentiate any pair of reported datasets if neither agrees with the true data. This order captures the notion of data integrity~\citep{10.1145/191177.191183}.
    \item[2.] {\bf Blackwell dominant ordering}: $\hat{\vx}\succ_{\blackwell}^{\vx}\hat{\vx}'$ if $\mQ$ and $\mQ'$ are both invertible and (row) diagonally maximized (i.e. $\mQ(i, j) \le \mQ(i,i)$ and $\mQ'(i, j) \le \mQ'(i,i)$ for all $i$ and $j$) and there exists a (column) stochastic matrix $\mT\neq \mathbb{I}$ so that $\mT \mQ_{\hat{\vx}\mid \vx} =  \mQ_{\hat{\vx}\mid \vx}'$ (equivalently, $\mQ' = \mQ\mT^\intercal$ by \cref{eq:m2j}).
    This ordering captures the idea that post-processing that transforms $\hat{\vx}$ into $\hat{\vx}'$ can only reduce the reliability or informativeness of the data~\citep{blackwell1953equivalent}. In particular, this ordering ensures that the true data ranks highest and that uninformative random reports rank lowest.
    \item[3.] {\bf dist ordering}: Given a distance function $\dist: \mathcal{X}\times \mathcal{X}\to \mathbb{R}$ so that $\dist(x,x') = \dist(x',x)$, $\dist(x,x) = 0$ and $\dist(x,x')>0$ if $x\neq x'$,\footnote{Any metric, e.g., $\ell_2$-norm, satisfies the above three conditions. Additionally, a function with these properties is often referred to as a semimetric.} we say $\hat{\vx}\succ_{\dist}^{\vx}\hat{\vx}'$ if $\sum_{n = 1}^N \dist(\hat{x}_n, x_n)<\sum_{n = 1}^N \dist(\hat{x}_n', x_n)$. This ordering captures the coordinate-wise difference between true and reported data.  We may also consider a weaker notion, \emph{$\alpha$-$\dist$ ordering} with some $\alpha\in (0,1]$.  We say $\hat{\vx}\succ_{\dist, \alpha}^{\vx}\hat{\vx}'$ if $\sum_{n = 1}^N \dist(\hat{x}_n, x_n)<\alpha\sum_{n = 1}^N \dist(\hat{x}_n', x_n)$.
 In other words, the distance between $\hat{\vx}$ and $\vx$ is at least a factor of $\alpha$ smaller than that of $\hat{\vx}'$ and $\vx$, in order to rank $\hat{\vx}$ and $\hat{\vx}'$.

   A special case of $\dist$ ordering is \defn{Hamming ordering}, when $\dist$ is the discrete metric $\dist(i,j) = \mathbf{1}[i\neq j]$ for all $i,j\in \mathcal{X}$. We say $\hat{\vx}\succ_{\hamming}^{\vx}\hat{\vx}'$ if $\sum_{n = 1}^N \mathbf{1}[\hat{x}_n \neq x_n]< \sum_{n = 1}^N \mathbf{1}[\hat{x}'_n \neq x_n]$ or, equivalently, $\Tr(\mQ)> \Tr(\mQ')$.  Hamming ordering~\citep{hamming1950error} counts the number of disagreements between the true data and the reported data.

\end{enumerate}

Blackwell dominant ordering is intentionally defined for a subset of misreport matrices: $\mQ, \mQ' \in \mathcal{Q}_{\text{reg}}$, which is the collection of invertible and (row) diagonally maximal matrices such that $\mQ(i,j)\le \mQ(i,i)$ for all $i$ and $j$. Intuitively, diagonal maximality requires the true data values to dominate any misreport in a reported dataset. The restriction to $\mathcal{Q}_{\text{reg}}$ is necessary for Blackwell dominant ordering to be a strict partial order. In \cref{app:model}, we formally prove that all of the above orderings are strict partial orders. In particular, the Blackwell ordering fails to be strict if either invertibility or diagonal maximality of $\mQ$ and $\mQ'$ is not enforced.\footnote{Instead of $\mathcal{Q}_{\text{reg}}$, we can alternatively require (a) $\mQ$ and $\mQ'$ are invertible and (b) $\mT$ is not a permutation matrix (i.e. $\mQ\mT^\intercal$ is not a permutation of columns of $\mQ$) to ensure that Blackwell dominant ordering is strict. However, this set of conditions does not support the result in \cref{prop:comparison} that Hamming ordering refines the Blackwell dominant ordering.%
}

These orderings reflect different ways of measuring the extent of misreporting, with some providing finer distinctions between datasets than others.
Formally, given a set of misreport matrices $\mathcal{Q}$, partial ordering $\succ_1^\cdot$ \emph{refines} partial ordering $\succ_2^\cdot$ on $\mathcal{Q}$ if $\forall \vx, \hat{\vx}, \hat{\vx}'$ with associated misreport matrices $\mQ, \mQ'\in \mathcal{Q}$,
$\hat{\vx}\succ_2^{\vx}\hat{\vx}'\Rightarrow \hat{\vx}\succ_1^{\vx}\hat{\vx}'.$
The following proposition shows that Blackwell dominant ordering refines exact-match ordering, and Hamming ordering refines Blackwell dominant ordering.  The proofs are in~\cref{app:model}.  %
\begin{proposition}[Refinement]\label{prop:comparison}
The reliability orderings have the following relationships:
\begin{enumerate}
    \item[1.] Blackwell dominant ordering refines the exact match ordering on $\mathcal{Q}_{\text{reg}}$.
    \item[2.] Hamming ordering refines the Blackwell dominant ordering on $\mathcal{Q}_{\text{reg}}$.
    \item[3.] For all $\alpha\ge \alpha'$ and distance function $\dist$, $\alpha$-$\dist$ ordering refines $\alpha'$-$\dist$ ordering.
\end{enumerate}
\end{proposition}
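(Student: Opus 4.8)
The plan is to treat the three parts separately, since parts 1 and 3 are immediate from the definitions while part 2 carries essentially all the difficulty.

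For Part 1, suppose $\hat{\vx}\succ_{\exact}^{\vx}\hat{\vx}'$ with $\mQ,\mQ'\in\mathcal{Q}_{\text{reg}}$. By the equivalent formulation of the exact match ordering, $\mQ_{\hat{\vx}\mid\vx}=\mathbb{I}$ while $\mQ_{\hat{\vx}\mid\vx}'\neq\mathbb{I}$. I would simply exhibit the garbling $\mT:=\mQ_{\hat{\vx}\mid\vx}'$, which is column-stochastic, satisfies $\mT\neq\mathbb{I}$, and gives $\mT\,\mQ_{\hat{\vx}\mid\vx}=\mT=\mQ_{\hat{\vx}\mid\vx}'$; invertibility and diagonal-maximality of $\mQ,\mQ'$ hold by the standing hypothesis $\mQ,\mQ'\in\mathcal{Q}_{\text{reg}}$. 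Hence $\hat{\vx}\succ_{\blackwell}^{\vx}\hat{\vx}'$. For Part 3, if $\hat{\vx}\succ_{\dist,\alpha'}^{\vx}\hat{\vx}'$ then
$\sum_n\dist(\hat{x}_n,x_n)<\alpha'\sum_n\dist(\hat{x}_n',x_n)\le\alpha\sum_n\dist(\hat{x}_n',x_n)$,
using $\alpha\ge\alpha'$ and the nonnegativity of $\dist$; this is exactly $\hat{\vx}\succ_{\dist,\alpha}^{\vx}\hat{\vx}'$.

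The real work is Part 2. Using the equivalence $\mQ'=\mQ\mT^\intercal$ from \cref{eq:m2j} with $\mT$ column-stochastic and $\mT\neq\mathbb{I}$, I would first reduce the goal to the trace inequality $\Tr(\mQ)>\Tr(\mQ')$, which is precisely the Hamming comparison. Expanding, $\Tr(\mQ')=\sum_{i,k}\mT(i,k)\,\mQ(i,k)$, and since each column of $\mT$ sums to one we may write $\Tr(\mQ)=\sum_{k}\mQ(k,k)\sum_i\mT(i,k)$, so that
\[
\Tr(\mQ)-\Tr(\mQ')=\sum_{k}\sum_{i}\mT(i,k)\bigl(\mQ(k,k)-\mQ(i,k)\bigr).
\]
The crux is to show this quantity is strictly positive.

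The subtlety, and the step I expect to be the main obstacle, is that the summand compares entries \emph{down a column} of $\mQ$, whereas the hypothesis $\mQ\in\mathcal{Q}_{\text{reg}}$ only supplies the row-wise bound $\mQ(i,k)\le\mQ(i,i)$, which points the wrong way and does not by itself sign the terms. Thus diagonal-maximality of $\mQ$ alone is insufficient, and the argument must crucially invoke diagonal-maximality of $\mQ'$ to constrain which garblings $\mT$ are admissible: whenever $\mT(i,k)>0$ for a pair with $\mQ(i,k)>\mQ(k,k)$, the requirement that $\mQ'=\mQ\mT^\intercal$ remain row-diagonally-maximal (i.e. $\mQ'(i,i)\ge\mQ'(i,j)$) should force compensating mass elsewhere, keeping the full sum nonnegative. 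I expect the cleanest route is to regard $\Tr(\mQ)-\Tr(\mQ')$ as a linear functional of $\mT$ over the polytope cut out by column-stochasticity together with the linear inequalities encoding row-diagonal-maximality of $\mQ'$, and to argue that its minimum is attained at $\mT=\mathbb{I}$ with value $0$. Invertibility of both $\mQ$ and $\mQ'$, together with $\mT\neq\mathbb{I}$, is then what I would use to exclude the degenerate equality cases, in particular permutation-type $\mT$ that merely relabel tied columns without changing the trace, thereby upgrading nonnegativity to the strict inequality $\Tr(\mQ)>\Tr(\mQ')$. Managing the interplay between the two diagonal-maximality conditions, the column-stochasticity of $\mT$, and invertibility is where I anticipate essentially all of the effort to lie.
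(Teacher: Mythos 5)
Parts 1 and 3 of your proposal are correct and coincide with the paper's own arguments (the paper treats part 3 as immediate, as do you). The genuine problem is Part 2, which in your write-up is a plan rather than a proof: you never show that the linear functional $\mT\mapsto\Tr(\mQ)-\Tr(\mQ')$ is minimized at $\mT=\mathbb{I}$ over your constraint polytope, and in fact no such argument can exist, because the inequality you are aiming for is \emph{false} under the hypotheses as literally stated. Take $N=100$ and
\begin{equation*}
\mQ=\frac{1}{100}\begin{pmatrix}20&15\\30&35\end{pmatrix},\qquad
\mT=\begin{pmatrix}9/10&0\\1/10&1\end{pmatrix},\qquad
\mQ'=\mQ\mT^\intercal=\frac{1}{100}\begin{pmatrix}18&17\\27&38\end{pmatrix}.
\end{equation*}
Both $\mQ$ and $\mQ'$ are invertible and row-diagonally maximal ($0.20\ge 0.15$, $0.35\ge 0.30$; $0.18\ge 0.17$, $0.38\ge 0.27$), and they share the row sums $(0.35,0.65)$, so they arise as misreport matrices in $\mathcal{Q}_{\text{reg}}$ for a common true dataset $\vx$; moreover $\mT$ is column stochastic with $\mT\neq\mathbb{I}$, so $\hat{\vx}\succ^{\vx}_{\blackwell}\hat{\vx}'$. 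Yet $\Tr(\mQ)=0.55<0.56=\Tr(\mQ')$, i.e.\ the Hamming comparison is reversed. In particular, the mechanism you hoped for---that row-diagonal maximality of $\mQ'=\mQ\mT^\intercal$ forces compensating mass that keeps your sum nonnegative---does not operate: here $\mT(2,1)>0$ while $\mQ(2,1)>\mQ(1,1)$, and nothing compensates.

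The obstruction you flagged is thus completely real, and it is exactly where the paper's own proof slips. The paper bounds $\sum_{i\neq j}\mQ(i,j)\mT(i,j)\le\sum_{i\neq j}\mQ(i,i)\mT(i,j)$ by row maximality and then collapses $\sum_i\mQ(i,i)\sum_j\mT(i,j)$ to $\sum_i\mQ(i,i)$ ``because $\mT$ is column stochastic''; but $\sum_j\mT(i,j)$ is a \emph{row} sum of $\mT$, which need not equal $1$ (above, the row sums are $0.9$ and $1.1$). Both the paper's chain and the proposition are rescued only if ``diagonally maximized'' is read column-wise, $\mQ(i,k)\le\mQ(k,k)$ for all $i,k$ (among points \emph{reported} as $k$, the plurality are truly $k$): then every term of your expansion $\sum_{i,k}\mT(i,k)\bigl(\mQ(k,k)-\mQ(i,k)\bigr)$ is nonnegative and the weak inequality follows in one line, with no polytope argument and no appeal to invertibility. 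Even then, strictness is not automatic: if there are ties $\mQ(i,k)=\mQ(k,k)$ on the support of $\mT$, one can have $\Tr(\mQ)=\Tr(\mQ')$ with both matrices invertible, so the strict conclusion requires an additional no-tie argument. In summary: your Parts 1 and 3 are fine, your Part 2 correctly diagnoses the crux but leaves a genuine gap, and that gap cannot be closed along your proposed route---the statement under its literal hypotheses is false, and the repair lies in changing the maximality hypothesis, after which the proof is a short computation rather than an optimization argument.
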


\subsection{Reliability Scoring} \label{sec:scoring}
We now return to formally define the ideal goals of reliability scoring.
%
\begin{definition}\label{def:proper}
    Given a reliability ordering $\succ^\cdot$ over $\mathcal{X}^N$, a reliability score $S:\mathcal{X}^N\times \mathcal{Y}^N\to \mathbb{R}$ \emph{preserves} partial ordering $\succ^\cdot$ under experiment $\mP$, if for all $\vx, \hat{\vx}, \hat{\vx}' \in \mathcal{X}^N$ with $\hat{\vx}\succ^\vx \hat{\vx}'$ we have
\begin{equation}\label{eq:proper}
    \E_{\vy\sim P(\vx)}[S(\hat{\vx}, \vy)]> \E_{\vy\sim P(\vx)}[S(\hat{\vx}', \vy)].
\end{equation}
\end{definition}
Given a set of experiments $\mathcal{P}$, a set of misreport matrices $\mathcal{Q}$, and a minimum size of reported datasets $N_0\in \mathbb{N}$, we say that a reliability score preserves $\succ^\cdot$ under $\mathcal{P}, \mathcal{Q}$ and $N_0$ if \cref{eq:proper} holds for all $\mP\in \mathcal{P}$ and tuples $\vx, \hat{\vx}, \hat{\vx}'$ of size at least $N_0$ with $\hat{\vx}\succ^\vx\hat{\vx}'$ and $\mQ, \mQ'\in \mathcal{Q}$.  We further say $S$ \emph{asymptotically} preserves $\succ^\cdot$ under $\mathcal{P}, \mathcal{Q}$, if for all $\mP\in \mathcal{P}$ and $\mQ, \mQ'\in \mathcal{Q}$ there exists $N_0$ so that $S$ preserve $\succ^\cdot$ under $\mP$ for all  $\vx, \hat{\vx}, \hat{\vx}'$ of size at least $N_0$ with $\hat{\vx}\succ^\vx\hat{\vx}'$ and misreport matrices $\mQ, \mQ'$.

In the remainder of the paper, we study the problem of designing reliability scores $S(\hat{\vx}, \vy)$ that preserve partial orderings of interest. We refer to this as the detail-free setting, since scoring does not rely on knowledge of $\mQ$ or $\mP$. For the analysis, however, we also consider a partial-knowledge setting, where the score can take the joint distribution \(\mP\mQ\) as input, \(S(\mP\mQ)\). This setting serves as a technical tool: it allows us to establish impossibility results (\cref{sec:impossible}) and to illustrate the core ideas underlying our approach to detail-free scoring (\cref{sec:gram}).


\section{Impossibility Results for Reliability Scoring}\label{sec:impossible}
We explore innate limitations of reliability scoring. These impossibility results form a foundation for charting the feasible combinations of $\mathcal{P}$ and $\mathcal{Q}$ for reliability scoring and motivate~\cref{sec:gram}.

This section focuses on the partial knowledge setting, where the joint distribution of observations and reported data, $\mP\mQ$, is assumed to be known, and provided as input to the score. Impossibility results in this setting extend to the detail-free setting for reliability scores that rely on estimates of $\mP\mQ$. In particular, the impossibility results apply to the Gram determinant score that we will introduce in \cref{sec:gram}.
We provide a more detailed discussion in~\cref{app:impossible}. %

We first introduce the class of independent experiments and a few classes of misreport matrices that'll be used in this paper. %
\begin{itemize}
\item $\mathcal{P}_{\text{indep}}$: the set of linearly independent experiments, where $\mP\in \mathcal{P}_{\text{indep}}$ if and only if all columns of $\mP$ are linearly independent.
\item $\mathcal{Q}_{\text{nonperm}}$: the set of misreport matrices $\mQ$ so that the associated $\mQ_{\hat{\vx}|\vx}$ is neither a permutation matrix nor an identity matrix.
\item $\mathcal{Q}_{\text{reg}}$: the set of invertible and diagonally maximal misreport matrices where $\mQ(i,j)\le \mQ(i,i)$ for all $i$ and $j$. This was also defined earlier in \cref{sec:ordering}. %
\item $\mathcal{Q}_{\text{dom}}$: the set of (row) diagonally dominant misreport matrices where $\sum_{j: j\neq i}|\mQ(i,j)|\le |\mQ(i,i)|$ for all $i$.\footnote{Note that diagonally dominant matrices are invertible by Gershgorin circle theorem.} 
\item $\mathcal{Q}_{L,\delta}$: the set of (row) diagonally dominant misreport matrices where the true data are $L$ balanced and the Hamming distance is bounded above by $N\delta$.
True data $\vx$ is $L$-balanced if $\vq_\vx(x)\le L\vq_\vx(x')$ for all $x, x'\in \mathcal{X}$. We use $\mathcal{Q}_L :=\mathcal{Q}_{L,1}$ to denote the set of (row) diagonally dominant misreport matrices where the true data are $L$ balanced, with no restriction on Hamming distance.
\end{itemize}

We note that $\mathcal{Q}_{L,\delta}\subseteq \mathcal{Q}_{L}\subset \mathcal{Q}_{\text{dom}}\subset \mathcal{Q}_{\text{reg}}\subset \mathcal{Q}_{\text{nonperm}}$ for all $L$ and $\delta$.


%

\begin{proposition}\label{prop:impossible}
In the partial-knowledge setting, it is sometimes impossible for any reliability score to preserve reliability orderings. In particular,
\begin{enumerate}
\item[1.] \textbf{Exact match ordering:} There exists a $\mathcal{P}$ so that no score preserves the exact match ordering under $\mathcal{P}$ and $\mathcal{Q}_{\text{nonperm}}$.  Additionally, for all $\mathcal{Q}\supsetneq \mathcal{Q}_{\text{nonperm}}$, no score preserves the exact match ordering on $\mathcal{P}_{\text{indep}}$ and $\mathcal{Q}$.
\item[2.] \textbf{Blackwell dominant ordering:} For any $\mathcal{P}$, if there exists $\mP\in \mathcal{P}$ and a rational vector $\vv\neq \vzero$ so that $\mP \vv = \vzero$, no score preserves the Blackwell ordering on $\mathcal{P}$ and $\mathcal{Q}_{\text{reg}}$.
    \item[3.] \textbf{Hamming and dist orderings}: No score preserves the Hamming ordering under $\mathcal{P}_{\text{indep}}$ and $\mathcal{Q}_{\text{dom}}$.  Additionally, no score preserves the $\dist$ ordering under $\mathcal{P}_{\text{indep}}$ and $\mathcal{Q}_{\text{dom}}$ for any $\dist$.
\end{enumerate}

\end{proposition}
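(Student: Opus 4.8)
The common engine behind all three parts is that a partial-knowledge score is a function of the joint matrix $\mP\mQ$ alone, so to refute preservation it suffices to produce instances that force contradictory constraints on a single value $S(\cdot)$. Two mechanisms generate such instances. When $\mP$ may be rank-deficient (Parts 1a and 2) I would exploit collisions $\mP\mQ=\mP\mQ'$ with $\mQ\neq\mQ'$: since $\mP\mQ=\mP\mQ'$ iff every column of $\mQ-\mQ'$ lies in $\ker\mP$, any nonzero rational kernel vector lets me move frequency mass invisibly to the score. When $\mP$ is forced to be independent (Parts 1b and 3) the map $\mQ\mapsto\mP\mQ$ is injective, so no single-experiment collision exists; instead I would use two experiments related by a label permutation together with the fact that the same observed joint admits several (truth, experiment) explanations whose induced orderings disagree.

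For Part 2 (Blackwell), given $\mP\vv=\vzero$ with rational $\vv\neq\vzero$, I would pick a strictly positive, strictly diagonally maximal, invertible $\mQ\in\mathcal{Q}_{\text{reg}}$ with rational entries and set $\mT^\intercal=\mathbb{I}-\vw\vc^\intercal$ with $\vw=\mQ^{-1}\vv$ and $\vc$ a small rational vector satisfying $\mathbf{1}^\intercal\vc=0$ and $\vc\neq\vzero$. Then $\mT$ is column-stochastic with $\mT\neq\mathbb{I}$, and $\mQ'=\mQ\mT^\intercal$ obeys $\mQ-\mQ'=\mQ\vw\vc^\intercal=\vv\vc^\intercal$, whose columns are multiples of $\vv\in\ker\mP$, so $\mP\mQ=\mP\mQ'$. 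Smallness of $\vc$ keeps $\mQ'$ strictly positive, diagonally maximal, and invertible, so $\mQ'\in\mathcal{Q}_{\text{reg}}$ and $\hat{\vx}\succ_{\blackwell}^{\vx}\hat{\vx}'$ by definition and \cref{eq:m2j}, while rationality lets me realize $\mQ,\mQ'$ as empirical matrices of a common truth at some finite $N$. The demanded strict inequality $S(\mP\mQ)>S(\mP\mQ')$ then contradicts $\mP\mQ=\mP\mQ'$. Part 1a is the same idea at its simplest: take $\mP$ to collapse all values to one outcome, $\vx$ balanced, and $\hat{\vx}'$ a genuine non-permutation mixing with the same reported marginal as $\vx$; then $\hat{\vx}=\vx\succ_{\exact}^{\vx}\hat{\vx}'$ yet $\mP\mQ=\mP\mQ'$ equals the shared marginal. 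The first half of Part 1 follows; the second half needs independence, which makes $\ker\mP$ trivial, so only enlarging $\mathcal{Q}$ beyond $\mathcal{Q}_{\text{nonperm}}$ (which adds a non-identity permutation conditional) can reintroduce a failure, handled next.

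For Part 1b I would run a cyclic relabeling argument. Given a non-identity permutation $\pi$ of order $r$ whose conditional lies in $\mathcal{Q}\setminus\mathcal{Q}_{\text{nonperm}}$, set $\vx_k=\pi^k(\vx_0)$ and $\mP_k=\mP_0\Pi^k$ for an independent $\mP_0$, where $\Pi$ is the permutation matrix of $\pi$. A column computation shows the joint of (report $\pi(\vx_k)$, observation) under $(\mP_k,\vx_k)$ equals the joint of (exact report, observation) under $(\mP_{k+1},\vx_{k+1})$, both having column $j$ equal to $\vq_{\vx_k}(\pi^{-1}j)\,(\mP_k)_{\cdot,\pi^{-1}j}$. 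Exact match forces $S$ to strictly drop from the exact to the permuted report at every step, so $S(\mP_k\mQ_{\vx_k})>S(\mP_k\mQ_k')=S(\mP_{k+1}\mQ_{\vx_{k+1}})$; since $\pi^r=\mathrm{id}$ closes the chain back to $\mP_0\mQ_{\vx_0}$, we obtain $S(\mP_0\mQ_{\vx_0})>S(\mP_0\mQ_{\vx_0})$. Every comparison uses only $\pi$ and the identity, so all matrices lie in $\mathcal{Q}$, and each $\mP_k$ is independent.

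Part 3 is the main obstacle. Independence rules out kernel collisions, and the pure-permutation reports of Part 1b are not diagonally dominant, hence outside $\mathcal{Q}_{\text{dom}}$, so the clean relabeling cycle is unavailable. Instead I would seek two independent experiments $\mP_1,\mP_2$ and diagonally dominant matrices with $\mP_1\mQ_1=\mP_2\mQ_2=:M$ and $\mP_1\mQ_1'=\mP_2\mQ_2'=:M'$ such that $\Tr\mQ_1>\Tr\mQ_1'$ but $\Tr\mQ_2<\Tr\mQ_2'$; since $\Tr\mQ=1-(\text{error rate})$, the Hamming order of $\{M,M'\}$ is reversed between the two explanations, forcing both $S(M)>S(M')$ and $S(M')>S(M)$. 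The delicate point is to meet simultaneously: both experiments independent, all four matrices $\mQ_1,\mQ_1',\mQ_2,\mQ_2'$ genuinely nonnegative joints that are diagonally dominant, and a strict reversal of the trace comparison; this is where I expect most effort, tuning a small ($d=2$ or $d=3$) instance and checking the nonnegativity and dominance constraints before abstracting. Finally, for the $\dist$ statement with arbitrary $\dist$, I would constrain the construction so every discrepancy swaps the same value pair $\{u,v\}$; then $\sum_n\dist(\hat{x}_n,x_n)=\dist(u,v)\cdot(\text{number of errors})$ with $\dist(u,v)>0$ fixed, so the $\dist$ ordering coincides with the Hamming ordering on these instances and the same contradiction applies verbatim.
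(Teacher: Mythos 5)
Parts 1a and 1b of your proposal are sound and essentially match the paper's mechanism: the paper also uses an identical-columns experiment for 1a, and for 1b it runs the same relabeling contradiction as your cycle, just closed in two steps (it re-explains $\frac1d\mP\mT^\intercal$ as truthful data under the experiment $\mP' = \mP\mT^\intercal$) rather than after $r$ steps. Part 2, however, has a genuine flaw. For $\mT = \mathbb{I}-\vc\vw^\intercal$ to be column stochastic it must be entrywise nonnegative, and its off-diagonal $(i,j)$ entry is $-c_i w_j$, so you need $c_i w_j \le 0$ for all $i\neq j$. Since $\mathbf{1}^\intercal \vc = 0$ and $\vc\neq\vzero$ force $\vc$ to have entries of both signs, these inequalities force $\vw = \mQ^{-1}\vv$ to vanish outside the two coordinates carrying the signs of $\vc$, with one positive and one negative entry there. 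For an arbitrarily chosen strictly positive, diagonally maximal $\mQ$, the vector $\mQ^{-1}\vv$ will generically have full support with mixed signs, so no admissible $\vc\neq\vzero$ exists; and shrinking $\vc$ cannot help, because this is a sign violation, not a magnitude one (any negative perturbation of a zero off-diagonal entry of $\mathbb{I}$ is fatal). The missing idea is that $\mQ$ must be constructed \emph{from} $\vv$: you need two columns of $\mQ$ whose difference is proportional to $\vv$, equivalently two identical columns of $\mP\mQ$, and you must verify such a $\mQ$ can simultaneously be nonnegative, invertible, and diagonally maximal. That is precisely the paper's construction: decompose $\vv = \vv_+-\vv_-$ into its positive and negative parts, plant $\vv_+$ and $\vv_-$ as two columns of a normalized identity-like matrix at the coordinates of their largest entries (which yields diagonal maximality), and then mix only those two columns with an $\epsilon$-garbling $\mT$.

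Part 3 is a correct reduction but not a proof. You have identified exactly the paper's frame---find $\mP_1,\mP_2\in\mathcal{P}_{\text{indep}}$ and $\mQ_1,\mQ_1',\mQ_2,\mQ_2'\in\mathcal{Q}_{\text{dom}}$ with $\mP_1\mQ_1=\mP_2\mQ_2$, $\mP_1\mQ_1'=\mP_2\mQ_2'$, and the trace comparison reversed---but the existence of such an instance \emph{is} the content of the claim, and you defer it to later tuning. It is not automatic: nonnegativity, diagonal dominance, and the two exact matrix identities fight against the trace reversal, and in the paper's instance the reversal margin is razor thin ($\Tr\mQ_2 = 0.592$ versus $\Tr\mQ_2' = 0.596$). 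The paper resolves it with $3\times 3$ circulant matrices and the key simplification $\mP_2=\mathbb{I}$, $\mQ_2=\mP_1\mQ_1$, $\mQ_2'=\mP_1\mQ_1'$, which makes the matrix identities and the validity of $\mQ_2,\mQ_2'$ as joints automatic, leaving only their diagonal dominance and traces to check. Your single-pair trick for general $\dist$ is also unverified: confining all discrepancies of all four matrices to one off-diagonal pair imposes structural constraints you have not shown compatible with the reversal. The paper instead exploits the circulant symmetry of its instance, under which $\langle \mQ,\dist\rangle_F$ depends only on $\dist(1,2)+\dist(2,3)+\dist(3,1)$, so the \emph{same} instance reverses every $\dist$ ordering at once.
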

The first part of \cref{prop:impossible} establishes that no score can respect the exact-match reliability ordering across all experiment sets.   The non-permutation condition is needed here to exclude degenerate cases such as label permutations. By \cref{prop:comparison}, these impossibility results also extend to the other orderings.  The second part further shows that even a single linearly dependent experiment is enough to make preservation of the Blackwell ordering impossible.  We therefore focus on the class of linearly independent experiments, $\mathcal{P}_{\text{indep}}$.  Finally, the third part shows that no reliability score can preserve the Hamming or any other $\dist$ ordering, even under diagonally dominant misreport matrices $\mathcal{Q}_{\text{dom}}$. In ~\cref{sec:gram}, we thus further restrict our attention to $\mathcal{Q}_{L,\delta}$.

\section{Gram Determinant Reliability Score}\label{sec:gram}

Our idea for measuring data reliability is to leverage the diversity of observations. We formalize this idea with the Gram determinant score---the determinant of a Gram matrix of the observation distributions conditional on reported labels.

\begin{definition}\label{def:gramdet}
    Given finite sets $\mathcal{X} = [d]$ and $\mathcal{Y}$, and an experiment $\mP$, we define Gram matrix of labels as $\mG=\mP^\intercal\mP \in \R^{|\mathcal{X}|\times |\mathcal{X}|}$ where $\mG(x,x') = \langle P_x, P_{x'}\rangle = \Pr_{y\sim P_x, y'\sim P_{x'}}[y = y']$.  Moreover, given $\vx$ and $\hat{\vx}$, we define the Gram matrix of reports $\hat{\vx}$ as $\hat{\mG} =(\mP\mQ)^\intercal(\mP\mQ) \in \R^{|\mathcal{X}|\times |\mathcal{X}|}$ where
$\hat{\mG}(x,x') :  = \frac{1}{N^2}\sum_{n, n': \hat{x}_n = x, \hat{x}_{n'} = x'}\langle P_{x_n}, P_{x_{n'}}\rangle.$
The \defn{Gram determinant score} is
    \begin{equation}\label{eq:gramdet}
    \Gamma:= \det\left(\hat{\mG}\right) = \sum_{\sigma\in symm(d)} sgn(\sigma)\prod_{i = 1}^d \hat{\mG}(i, \sigma(i)).
\end{equation}
where $symm(d)$ is the set of all permutations on $[d]$ and $sgn$ the sign function of permutations.  We further denote $\Gamma(\mP\mQ):= \Gamma$ to highlight that the Gram determinant score takes $\mP\mQ$ as input.
\end{definition}

\begin{figure}
    \centering
    \includegraphics[width=0.5\linewidth]{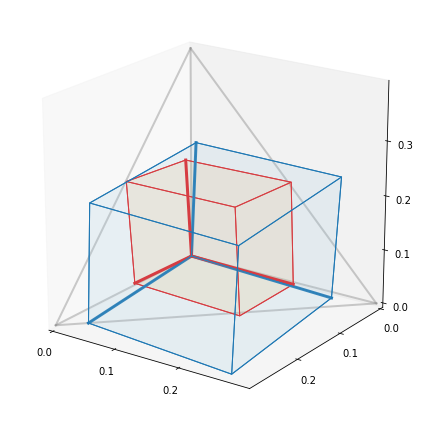}
    \caption{The Gram determinant score of the true data $\Gamma(\mP\mQ_\vx)$ is the squared volume of the blue parallelepiped spanned by column vectors in $\mP\mQ_\vx$, $vol(\mP\mQ_\vx)^2$.  Additionally, as ${\Gamma(\mP\mQ)}=  {\Gamma(\mP\mQ_\vx\mQ_{\hat{\vx}|\vx}^\intercal)} = {\Gamma(\mP\mQ_\vx\mQ_{\hat{\vx}|\vx})}$, the Gram determinant score of reported data is the squared volume of the red parallelepiped $vol(\mP\mQ_\vx \mQ_{\hat{\vx}|\vx})^2$, which is smaller than the volume of the truth data because $\mQ_{\hat{\vx}|\vx}$ is column stochastic and each column of $\mP\mQ_\vx \mQ_{\hat{\vx}|\vx}$ is a convex combination of columns of $\mP\mQ_\vx$.\protect\footnotemark}
    \label{fig:illustrate_vol}
\end{figure}

\footnotetext{
\Cref{fig:illustrate_vol} uses $\mP = \begin{pmatrix}
        0.1&0.1&0.7\\
        0.9&0.1&0.2\\
        0&0.8&0.1
    \end{pmatrix}$,
    and $\mQ=\begin{pmatrix}
        0.03&0.27&0\\
        0.03&0.03&0.24\\
        0.28&0.08&0.04
    \end{pmatrix}$ with
    $\mQ_\vx = \begin{pmatrix}
        0.3&0&0\\
        0&0.3&0\\
        0&0&0.4
    \end{pmatrix}$, and $\mQ_{\hat{\vx}|\vx} = \begin{pmatrix}
        0.1&0.1&0.7\\
        0.9&0.1&0.2\\
        0&0.8&0.1
    \end{pmatrix}$.}

Before proving properties of the Gram determinant score, we first present some intuitions. $\mG(x,x')$ corresponds to the probability that true data $x$ and $x'$ have the same observation. The Gram matrix of reports is
\begin{equation}\label{eq:gram2gram}
    \hat{\mG} = \mQ^\intercal \mP^\intercal\mP\mQ = \mQ^\intercal\mG\mQ.
\end{equation}
%
%
Moreover, $\det(\hat{\mG}) = \det((\mP\mQ)^\intercal\mP\mQ)$ is the Gram determinant of $\mP\mQ\in \R^{|\mathcal{Y}|\times |\mathcal{X}|}$ which is the square of the volume of the parallelepiped spanned by the column vectors of $\mP\mQ$~\citep{horn2012matrix}, as illustrated in~\cref{fig:illustrate_vol}.  The Gram determinant score of true data, $\Gamma(\mP\mQ_\vx)$, is the squared volume of the blue parallelepiped spanned by column vectors in $\mP\mQ_\vx$, $vol(\mP\mQ_\vx)^2$.  As ${\Gamma(\mP\mQ)}=  {\Gamma(\mP\mQ_\vx\mQ_{\hat{\vx}|\vx}^\intercal)} = {\Gamma(\mP\mQ_\vx\mQ_{\hat{\vx}|\vx})}$, the Gram determinant score of reported data is the squared volume of the red parallelepiped, $vol(\mP\mQ_\vx \mQ_{\hat{\vx}|\vx})^2$, which is smaller than that of the true data because $\mQ_{\hat{\vx}|\vx}$ is column stochastic and each column of $\mP\mQ_\vx \mQ_{\hat{\vx}|\vx}$ is a convex combination of columns of $\mP\mQ_\vx$.  A symbolic example is presented in~\cref{sec:delta_kernel}.

In the remainder of this section, we first show that the Gram determinant score preserves several reliability orderings and is invariant under experiments (\cref{sec:preserve}). We then introduce two estimators of the Gram determinant score for the detail-free setting (\cref{sec:estimator}).
Finally, we introduce kernels to generalize Gram determinant score to handle non-finite observation spaces $\mathcal{Y}$ (\cref{sec:examples}).

\subsection{Preserving Reliability Orderings and Invariance}\label{sec:preserve}

We show that Gram determinant reliability score preserves the exact, the Blackwell dominant, and the approximated Hamming (or $\dist$) ordering.

\begin{theorem}\label{thm:gram_preserve}
    Given $\mathcal{X} = [d]$, a finite set $\mathcal{Y}$, and $L\ge 1$, the Gram determinant score in \cref{def:gramdet} preserves
   \begin{enumerate}
        \item[1.] exact match ordering under $\mathcal{P}_{\text{indep}}$ and $\mathcal{Q}_{\text{nonperm}}$,
        \item[2.] Blackwell ordering under $\mathcal{P}_{\text{indep}}$ and $\mathcal{Q}_{\text{reg}}$, and
        \item[3.] $\frac{1}{4L\Delta}$-$\dist$ ordering under $\mathcal{P}_{\text{indep}}$ and $\mathcal{Q}_{L,1/64L^2d^2}$ for all $\dist$ with $\Delta = \frac{\max_{x,x'\in \mathcal{X}} \dist(x,x')}{\min_{x\neq x'\in \mathcal{X}} \dist(x,x')}$.
   \end{enumerate}
\end{theorem}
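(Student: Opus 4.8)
The plan is to reduce all three claims to a single multiplicative identity for the score and then study the determinant of the conditional misreport matrix $\mQ_{\hat{\vx}\mid \vx}$. Since $\mQ=\mQ_{\vx}\mQ_{\hat{\vx}\mid \vx}^\intercal$ by \cref{eq:m2j}, we have $\mP\mQ=(\mP\mQ_{\vx})\mQ_{\hat{\vx}\mid \vx}^\intercal$, and because $\mQ_{\hat{\vx}\mid \vx}$ is square, multiplicativity of the determinant gives
\[
\Gamma(\mP\mQ)=\det\!\big((\mP\mQ)^\intercal\mP\mQ\big)=\det(\mQ_{\hat{\vx}\mid \vx})^2\,\Gamma(\mP\mQ_{\vx}).
\]
The factor $\Gamma(\mP\mQ_{\vx})$ depends only on $\vx$ and $\mP$ and is strictly positive whenever $\mP\in\mathcal{P}_{\text{indep}}$ and every label occurs in $\vx$ (so $\mQ_{\vx}$ has full rank), which holds in all three regimes since $\mQ_{\hat{\vx}\mid \vx}$ is only defined in that case. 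Because the score takes $\mP\mQ$ as input (partial-knowledge setting) it is deterministic and the expectation in \cref{def:proper} is vacuous, so $\hat{\vx}\succ^{\vx}\hat{\vx}'$ must force $\det(\mQ_{\hat{\vx}\mid \vx})^2>\det(\mQ'_{\hat{\vx}\mid \vx})^2$. Throughout I use the Hadamard-type bound: a column-stochastic matrix $M$ satisfies $|\det M|\le\prod_i\|M_i\|_2\le 1$, with equality iff $M$ is a permutation matrix.

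For the exact match ordering, $\hat{\vx}=\vx$ gives $\mQ_{\hat{\vx}\mid \vx}=\mathbb{I}$ and determinant $1$, whereas $\mQ'_{\hat{\vx}\mid \vx}\in\mathcal{Q}_{\text{nonperm}}$ is column-stochastic but not a permutation, so $|\det\mQ'_{\hat{\vx}\mid \vx}|<1$ and the identity yields the strict drop. For the Blackwell dominant ordering, set $\mT=\mQ'_{\hat{\vx}\mid \vx}\mQ_{\hat{\vx}\mid \vx}^{-1}$ (well defined since $\mQ\in\mathcal{Q}_{\text{reg}}$ is invertible); it is column-stochastic with $\mT\neq\mathbb{I}$, and $\Gamma(\mP\mQ')=\det(\mT)^2\,\Gamma(\mP\mQ)$, so it suffices to show $|\det\mT|<1$. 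If instead $|\det\mT|=1$, then $\mT$ is a permutation, whence $\mT^{-1}$ is a non-identity column-stochastic matrix with $\mT^{-1}\mQ'_{\hat{\vx}\mid \vx}=\mQ_{\hat{\vx}\mid \vx}$, giving $\hat{\vx}'\succ_{\blackwell}^{\vx}\hat{\vx}$ as well and contradicting the asymmetry of the strict partial order on $\mathcal{Q}_{\text{reg}}$; hence $|\det\mT|<1$.

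For the quantitative $\dist$ claim, write $e_i=1-\mQ_{\hat{\vx}\mid \vx}(i,i)$ for the misreport mass of true label $i$ (and $e_i'$ for $\hat{\vx}'$). Membership in $\mathcal{Q}_{L,1/64L^2d^2}$ makes each matrix column-diagonally dominant ($e_i\le 1/2$) and makes the total mass tiny: the Hamming budget gives $\sum_i\vq_{\vx}(i)e_i\le\frac{1}{64L^2d^2}$, and $L$-balance ($\vq_{\vx}(i)\ge\frac{1}{Ld}$) yields $\sum_i e_i\le\frac{1}{64Ld}$. I then sandwich the determinant by $\prod_i(1-2e_i)\le\det(\mQ_{\hat{\vx}\mid \vx})\le\prod_i(1-e_i)$, where the upper bound is the column Hadamard estimate $\|M_i\|_2\le\sqrt{1-e_i}$ (valid for $e_i\le 1/2$) and the lower bound is the Ostrowski-type determinant inequality for diagonally dominant matrices; both reduce to the leading behaviour $\det=1-\sum_i e_i$ with the lower bound degrading it by at most a factor $2$, the second-order remainder being negligible by the tininess of $\sum_i e_i$.

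Finally I convert the ordering into this unweighted mass. Bounding $\dist$ by $d_{\min},d_{\max}$ turns $\sum_n\dist(\hat{x}_n,x_n)<\frac{1}{4L\Delta}\sum_n\dist(\hat{x}_n',x_n)$ into $W<\frac{1}{4L}W'$ for the weighted Hamming masses $W=\sum_i\vq_{\vx}(i)e_i$ and $W'=\sum_i\vq_{\vx}(i)e_i'$, absorbing the factor $\Delta$; then $L$-balance, via $\big(\min_i\vq_{\vx}(i)\big)/\big(\max_i\vq_{\vx}(i)\big)\ge\frac{1}{L}$, upgrades this to $\sum_i e_i'>4\sum_i e_i$. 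Since the sandwich bounds only require $\sum_i e_i'>2\sum_i e_i$ to give $\det(\mQ_{\hat{\vx}\mid \vx})>\det(\mQ'_{\hat{\vx}\mid \vx})>0$, the extra slack absorbs the second-order remainder. The main obstacle is exactly this weighting mismatch: the determinant is governed by the unweighted per-column off-diagonal mass $\sum_i e_i$, while the $\dist$ ordering is frequency- and distance-weighted, and reconciling the two is what forces both the constant $\frac{1}{4L\Delta}$ and the stringent Hamming budget $\frac{1}{64L^2d^2}$; the permutation-exclusion step in the Blackwell case is the other point requiring care.
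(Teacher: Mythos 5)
Your reduction $\Gamma(\mP\mQ)=\det(\mQ_{\hat{\vx}\mid\vx})^2\,\Gamma(\mP\mQ_\vx)$ is the paper's own decoupling (the paper writes it as $\det(\mP^\intercal\mP)\det(\mQ)^2$) in a different factorization, and your Parts 1 and 2 are correct: the Hadamard characterization of column-stochastic matrices with $|\det|=1$ substitutes for the paper's Perron--Frobenius step, and excluding the permutation case in the Blackwell argument by appealing to asymmetry of $\succ_{\blackwell}^{\vx}$ on $\mathcal{Q}_{\text{reg}}$ is legitimate and non-circular, since that asymmetry is proved in \cref{app:model} independently of this theorem (via \cref{lem:non_perm}, which the paper's proof of Part 2 invokes directly).

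Part 3, however, has a genuine gap, and it sits exactly where you flagged the difficulty. The claimed upper bound $\det(\mQ_{\hat{\vx}\mid\vx})\le\prod_i(1-e_i)$ is false: take $d=3$ and the column-stochastic, column-diagonally dominant circulant $\mQ_{\hat{\vx}\mid\vx}=(1-\epsilon)\mathbb{I}+\epsilon \mP_{\mathrm{cyc}}$, where $\mP_{\mathrm{cyc}}$ is the cyclic permutation matrix; then $\det(\mQ_{\hat{\vx}\mid\vx})=(1-\epsilon)^3+\epsilon^3>\prod_i(1-e_i)$ for every $\epsilon>0$, so the inequality fails even inside your Hamming budget. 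Nor can the justification you give produce it: Hadamard with $\|M_i\|_2\le\sqrt{1-e_i}$ yields $\det\le\prod_i\sqrt{1-e_i}$, and since $\sqrt{1-e}\ge 1-e$ this is the \emph{weaker} bound, with leading behaviour $1-\tfrac12\sum_i e_i'$ rather than $1-\sum_i e_i'$. That factor of two breaks your accounting: with the (correct) Ostrowski lower bound $1-2\sum_i e_i$ and the Hadamard upper bound $1-\tfrac12\sum_i e_i'+O\bigl((\sum_i e_i')^2\bigr)$, the comparison requires $\sum_i e_i'>4\sum_i e_i$ \emph{plus} room for the second-order remainder, while your conversion from the $\tfrac{1}{4L\Delta}$-$\dist$ ordering delivers exactly $\sum_i e_i'>4\sum_i e_i$ and nothing more. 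So the claim that ``the sandwich bounds only require $\sum_i e_i'>2\sum_i e_i$'' rests on the false bound, and the slack you planned to use to absorb the remainder no longer exists.

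The gap is fixable, and the fix is essentially what the paper does. You need a genuine two-sided perturbation estimate with explicit second-order error, e.g.\ $\bigl|\det(\mQ_{\hat{\vx}\mid\vx})-\prod_i(1-e_i)\bigr|\le\bigl(\sum_i e_i\bigr)^2$, obtained by bounding the total contribution of all non-identity permutations (each involves at least two off-diagonal entries; summing column masses gives $\prod_i(1+e_i)-1-\sum_i e_i\le(\sum_i e_i)^2$), or by invoking a determinant perturbation theorem such as \cref{thm:appro_det}, which is precisely what the paper's \cref{lem:approx_det_hamming} does. With leading terms $1-\sum_i e_i$ versus $1-\sum_i e_i'$ the ordering requirement drops to $\sum_i e_i'>2\sum_i e_i$, and your factor-four surplus then genuinely absorbs the quadratic corrections under the budget $\sum_i e_i'\le\frac{1}{64Ld}$. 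For comparison, the paper's \cref{lem:preserve_hamming} runs the same accounting in the weighted masses $\delta=1-\Tr(\mQ)$: its leading terms are $\frac{\delta}{\min_i \vq_\vx(i)}$ versus $\frac{\delta'}{2\max_i \vq_\vx(i)}$, and the $4L$ factor is spent bridging $\min_i \vq_\vx(i)$ to $\max_i \vq_\vx(i)$ while retaining a slack of $\frac{\delta'}{4\max_i \vq_\vx(i)}$ against the $O\bigl(d(\delta')^2/\min_i \vq_\vx(i)^2\bigr)$ terms---slack that your version, as written, has spent twice.
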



\Cref{thm:gram_preserve} covers every linearly independent experiment and places minimal assumptions on misreports, nearly matching our impossibility results. In particular, \cref{prop:impossible,prop:comparison} show: 1) no score preserves exact ordering for any superset of $\mathcal{Q}_{\text{nonperm}}$; 2) the Blackwell relation is a strict partial order only on $\mathcal{Q}_{\text{reg}}$; and 3) no score preserves Hamming ordering or any other $\dist$ ordering on $\mathcal{Q}_{\text{dom}}$. The third part of \cref{thm:gram_preserve} implies that the score preserves $\frac{1}{4L}$-Hamming ordering, because the aspect ratio for Hamming distance is $\Delta = 1$.

The key idea of the proof is that the determinant has the multiplicative property and \cref{eq:gram2gram}, $$\Gamma(\mP\mQ) = \det(\mQ^\intercal \mP^\intercal \mP \mQ) = \det(\mQ^\intercal)\det( \mP^\intercal \mP)\det(\mQ) = \det( \mP^\intercal \mP)\det(\mQ)^2$$ because $\mQ$ and $\mP^\intercal\mP$ are square matrices.  Hence, we can decouple the misreport matrix $\mQ$ from the quality of the experiment $\mP$. In particular, it is sufficient to focus on misreport matrices as the Gram matrix of labels is positive definite $\mP^\intercal \mP$, $\det(\mP^\intercal\mP)>0$, for all $\mP\in \mathcal{P}_{\text{indep}}$.  This observation may provide a recipe for considering other reliability orderings in the Gram determinant score.  The formal proof is deferred to~\cref{app:gram_preserve}.


We now establish an invariance principle: the induced ranking of datasets should be invariant to the unknown experiment, to relabelings, and to priors.
The latter two are straightforward by the multiplicative property of Gram determinant.  For the first one, we show that the Gram determinant is \emph{experiment-agnostic} so that the reliability ranking of a dataset $\hat{\vx}$ should depend only on $\hat{\vx}$ and the true data $\vx$ (defined in \cref{eq:exp_agn}).  Thus the choice of experiment does not affect which reported dataset is deemed more reliable.  Moreover, we show that the Gram determinant score is the unique experiment agnostic score up to scaling under mild coherence assumption.

\begin{proposition}\label{prop:exp_agn}
Given $\mathcal{X} = [d]$ and a finite set $\mathcal{Y}$, the Gram determinant score in \cref{def:gramdet} is experiment agnostic so that for all $\mQ, \mQ'\in GL_d$ general linear group and $\mP\in \mathcal{P}_{\text{indep}}$,
\begin{equation}\label{eq:exp_agn}
    \Gamma(\mQ)\ge \Gamma(\mQ')\Leftrightarrow \Gamma(\mP\mQ)\ge \Gamma(\mP\mQ').
\end{equation}

Moreover, if there exists a continuous function $S:GL_d\to \R_{>0}$ with a continuous $c:\R_{>0}\to \R_{>0}$ so that for all $\mQ, \mQ', \mP\in GL_d$, and $t>0$, \cref{eq:exp_agn} holds and $S(t\mQ) = c(t) S(\mQ)$, there exists $\alpha>0, \beta\neq 0$ so that
$S(\mQ) = \alpha \det(\mQ^\intercal\mQ)^\beta.$
\end{proposition}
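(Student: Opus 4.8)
The plan is to treat the two assertions separately: experiment agnosticism is a one-line determinant computation, and essentially all the work goes into the uniqueness claim.

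For the agnosticism half, I would simply invoke multiplicativity of the determinant together with \cref{eq:gram2gram}. Since $\mQ\in GL_d$ is square, $\Gamma(\mP\mQ)=\det(\mQ^\intercal\mP^\intercal\mP\mQ)=\det(\mP^\intercal\mP)\,(\det\mQ)^2=\det(\mP^\intercal\mP)\,\Gamma(\mQ)$, and for $\mP\in\mathcal{P}_{\text{indep}}$ the Gram matrix $\mP^\intercal\mP$ is positive definite, so $\kappa:=\det(\mP^\intercal\mP)>0$ is a positive constant independent of $\mQ$. Multiplying both sides of an inequality by the same positive $\kappa$ preserves it, which is exactly \cref{eq:exp_agn}.

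For uniqueness, the first move is to repackage the ordinal hypothesis. Writing $s_0:=S(I)$ and $R:=S(GL_d)\subseteq\R$, condition \cref{eq:exp_agn} (for $S$) says $S(\mP\mQ)$ depends on $\mQ$ only through $S(\mQ)$ and is strictly increasing in it; hence there is a well-defined strictly increasing bijection $\phi_\mP\colon R\to R$ with $\phi_\mP(S(\mQ))=S(\mP\mQ)$, and $\mP\mapsto\phi_\mP$ is a homomorphism into the order-automorphisms of $R$ (so $\phi_{\mP\mP'}=\phi_\mP\circ\phi_{\mP'}$, $\phi_\mP^{-1}=\phi_{\mP^{-1}}$, $\phi_I=\mathrm{id}$). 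Because the target form $S=\alpha\det(\mQ^\intercal\mQ)^\beta=\alpha|\det\mQ|^{2\beta}$ is constant on each level set $\{\det=c\}$, the crux — and what I expect to be the main obstacle — is to show $S$ is left-$SL_d$-invariant, i.e. $\phi_\mA=\mathrm{id}$ for every $\mA$ with $\det\mA=1$.

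The idea that resolves this is an involution trick, which I expect to carry the whole argument. Any $g\in GL_d$ with $g^2=I$ induces an increasing involution $\phi_g$ of the totally ordered set $R$, and an increasing self-bijection squaring to the identity must be the identity (trichotomy on $\phi_g(s)$ versus $s$, using strict monotonicity); hence $\phi_g=\mathrm{id}$, i.e. $S(g\mQ)=S(\mQ)$. For each elementary transvection $N_{ij}(\lambda)=I+\lambda e_ie_j^\intercal$ ($i\neq j$), the sign matrix $g_i=\mathrm{diag}(1,\dots,-1,\dots,1)$ (with $-1$ in slot $i$) satisfies $g_i^2=I$ and $g_iN_{ij}(\lambda)g_i^{-1}=N_{ij}(-\lambda)$. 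Conjugating through the homomorphism and using $\phi_{g_i}=\mathrm{id}$ gives $\phi_{N_{ij}(-\lambda)}=\phi_{N_{ij}(\lambda)}$; but $\phi_{N_{ij}(\lambda)}\circ\phi_{N_{ij}(-\lambda)}=\phi_I=\mathrm{id}$, so $\phi_{N_{ij}(\lambda)}$ is itself an increasing involution, hence the identity. Since transvections generate $SL_d(\R)$ and $\phi$ is a homomorphism, $\phi_\mA=\mathrm{id}$ for all $\mA\in SL_d(\R)$, whence $S(\mA\mQ)=S(\mQ)$ and $S(\mQ)=\Lambda(\det\mQ)$ for some $\Lambda\colon\R\setminus\{0\}\to\R_{>0}$.

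Finally I would extract the power law from the scaling hypothesis. Computing $S(st\mQ)$ two ways shows $c$ is multiplicative, and continuity forces $c(t)=t^\gamma$. Evaluating on scalar matrices, $\Lambda(t^d)=S(tI)=c(t)S(I)=\alpha t^\gamma$ with $\alpha:=S(I)$, so $\Lambda(c)=\alpha c^{\gamma/d}$ for $c>0$; applying the involution step to a reflection $g$ with $\det g=-1$ gives $\Lambda(-c)=\Lambda(c)$, hence $\Lambda(c)=\alpha|c|^{\gamma/d}$ throughout. Therefore $S(\mQ)=\alpha|\det\mQ|^{\gamma/d}=\alpha\det(\mQ^\intercal\mQ)^{\beta}$ with $\beta=\gamma/(2d)$; $\gamma=0$ is the degenerate constant score (formally $\beta=0$), while any nontrivial $c$ yields $\beta\neq0$, as claimed. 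Notably, continuity is used only to pin down $c(t)=t^\gamma$; the $SL_d$-invariance rests purely on the order structure.
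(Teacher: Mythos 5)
Your proof is correct, and its first half (agnosticism via $\Gamma(\mP\mQ)=\det(\mP^\intercal\mP)\det(\mQ)^2$ together with positive definiteness of $\mP^\intercal\mP$ for $\mP\in\mathcal{P}_{\text{indep}}$) coincides with the paper's. For the uniqueness half you take a genuinely different route. The paper first converts the ordinal hypothesis plus scaling into multiplicativity, $S(\mP\mQ)=S(\mP)S(\mQ)/S(\mathbb{I})$ (via an increasing function $g_\mP$ with $S(\mP\mQ)=g_\mP(S(\mQ))$ and the homogeneity $g_\mP(c(t)s)=c(t)g_\mP(s)$), and then cites the classification of continuous homomorphisms from $GL_d$ into $(\R_{>0},\cdot)$ to conclude $S=f\circ\det$, finally pinning down $f$ through the scaling law. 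You instead prove left $SL_d$-invariance directly: the ordinal hypothesis yields a homomorphism $\mP\mapsto\phi_\mP$ into order-automorphisms of the range; any involution $g$ gives an increasing involution $\phi_g$, which must be the identity; conjugating the transvection $N_{ij}(\lambda)=\mathbb{I}+\lambda e_ie_j^\intercal$ by the sign matrix $g_i$ shows $\phi_{N_{ij}(-\lambda)}=\phi_{N_{ij}(\lambda)}$, so $\phi_{N_{ij}(\lambda)}$ is itself an increasing involution, hence the identity; and transvections generate $SL_d(\R)$, so $S$ factors through $\det$. This buys two things: the argument is self-contained (no appeal to an external classification theorem), and it never uses continuity of $S$---only continuity of $c$, to force $c(t)=t^\gamma$---so it is in fact slightly more general than the paper's proof. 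What the paper's route buys is brevity and a reusable structural fact, namely multiplicativity of $S$ itself. One shared caveat: both your argument and the paper's derive $\beta=\gamma/(2d)$, so the constant score ($c\equiv 1$, i.e.\ $\gamma=0$) satisfies every stated hypothesis yet yields $\beta=0$; you flag this degenerate case explicitly, whereas the paper's claim that $\beta\neq 0$ silently presumes nontrivial scaling.
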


As discussed above, the first part follows directly from multiplicative property of determinant, $\Gamma(\mP\mQ) = \det( \mP^\intercal \mP)\det(\mQ)^2 = \det( \mP^\intercal \mP)\Gamma(\mQ)$.  We defer the proof for the second part to~\cref{app:exp_agn}.  Finally, since $GL_d\subset \mathcal{P}_{\text{indep}}$, the second part of \cref{prop:exp_agn} implies that even when we restrict to settings where the observation space has the same dimension as the data space $|\mathcal{Y}| = |\mathcal{X}|$, the Gram determinant score remains unique up to scaling.

\subsection{Estimators for Gram Determinant Scores}\label{sec:estimator}
We introduce two estimators for the Gram determinant score in the detail-free setting: plug-in and stratified matching estimator.
\begin{definition}[plug-in Gram determinant reliability score]\label{def:plug-in}  Given $\hat{\vx}$ and $\vy$ of size $N$, define $\bar{\mG}\in \R^{d\times d}$ so that for all $x,x'\in \mathcal{X}$
$\bar{\mG}(x,x') = \frac{1}{N^2}\sum_{n, n'\in[N]: \hat{x}_n = x, \hat{x}_{n'} = x'}\mathbf{1}[y_n = y_{n'}].$
The plug-in Gram determinant reliability score is then defined as
$\bar{S}(\hat{\vx}, \vy) = \det(\bar{\mG})$.
\end{definition}

The plug-in estimator first estimates $\hat{\mG}$ using empirical distribution between reports $\hat{\vx}$ and observations $\vy$ and computes the determinants of $\hat{\mG}$.
Note that the probability of $y_n = y_{n'}$ is simply the inner product of $P_{x_n}$ and $P_{x_{n'}}$ if $n\neq n'$.  \Cref{thm:plug-in} shows
 that the plug-in estimator asymptotically preserves all reliability orderings in \cref{thm:gram_preserve}.

\begin{proposition}\label{thm:plug-in}
    Given $\mathcal{X} = [d]$, finite set $\mathcal{Y}$ and $L\ge 1$, the plug-in Gram determinant score in \cref{def:plug-in} asymptotically preserves reliability orderings in \cref{thm:gram_preserve}.
\end{proposition}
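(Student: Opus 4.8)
The plan is to reduce the claim to the population-level separation already supplied by \cref{thm:gram_preserve} together with a uniform consistency estimate for the plug-in score. Observe first that the exact Gram matrix of reports factors as $\hat{\mG} = (\mP\mQ)^\intercal(\mP\mQ) = \mQ^\intercal\mP^\intercal\mP\mQ$ (\cref{eq:gram2gram}), so the population score $\det(\hat{\mG}) = \det(\mP^\intercal\mP)\det(\mQ)^2 = \Gamma(\mP\mQ)$ is a deterministic quantity depending only on $\mP$ and $\mQ$. Fixing $\mP\in\mathcal{P}_{\text{indep}}$ and misreport matrices $\mQ,\mQ'$ from the relevant class, \cref{thm:gram_preserve} guarantees a strict gap $\varepsilon := \Gamma(\mP\mQ) - \Gamma(\mP\mQ') > 0$ whenever $\hat{\vx}\succ^{\vx}\hat{\vx}'$. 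Hence it suffices to show that $\E_{\vy\sim\mP(\vx)}[\bar{S}(\hat{\vx},\vy)] = \Gamma(\mP\mQ) + o(1)$ as $N\to\infty$, with a remainder bounded uniformly over all datasets realizing $\mQ$; choosing $N_0$ so that this remainder falls below $\varepsilon/2$ then yields the desired strict ordering of expected scores for every $N\ge N_0$, and symmetrically for $\mQ'$.

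The core estimate is that $\bar{\mG}$ concentrates on $\hat{\mG}$. First I would control the bias. Each entry $\bar{\mG}(x,x')$ is an average over the $N^2$ ordered pairs $(n,n')$ of $\mathbf{1}[\hat{x}_n = x,\hat{x}_{n'}=x']\,\mathbf{1}[y_n=y_{n'}]$. Since the $y_n$ are drawn independently, for $n\neq n'$ one has $\E[\mathbf{1}[y_n=y_{n'}]] = \langle P_{x_n},P_{x_{n'}}\rangle$, matching the summand defining $\hat{\mG}$; only the diagonal pairs $n=n'$ (which occur when $x=x'$) differ, contributing $\mathbf{1}[y_n=y_n]=1$ in place of $\|P_{x_n}\|^2$. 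This gives $\E[\bar{\mG}] = \hat{\mG} + \mathbf{B}$, where $\mathbf{B}$ is diagonal, nonnegative, and has entries at most $\tfrac1N$, so $\|\E[\bar{\mG}] - \hat{\mG}\| = O(1/N)$. Next I would bound the fluctuation: viewing $\bar{\mG}(x,x')$ as a function of the independent coordinates $y_1,\dots,y_N$, altering a single $y_m$ changes at most $O(N)$ of the $N^2$ summands, each of size $1/N^2$, so each coordinate has bounded difference $O(1/N)$. An Efron--Stein (bounded-differences) variance bound then gives $\Var[\bar{\mG}(x,x')] = O(1/N)$, hence $\E\|\bar{\mG} - \E[\bar{\mG}]\| = O(1/\sqrt{N})$. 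Combining the two estimates yields $\E\|\bar{\mG} - \hat{\mG}\| = O(1/\sqrt{N})$, with the implied constant depending only on $d$ since all entries lie in $[0,1]$.

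Finally I would transfer this to the determinant. On the compact set of matrices with entries in $[0,1]$ the map $\det$ is Lipschitz with a constant $L=L(d)$, so $|\det(\bar{\mG}) - \det(\hat{\mG})| \le L\|\bar{\mG}-\hat{\mG}\|$. Taking expectations and applying Jensen's inequality yields $|\E[\bar{S}(\hat{\vx},\vy)] - \Gamma(\mP\mQ)| \le L\,\E\|\bar{\mG}-\hat{\mG}\| = O(1/\sqrt{N})$, uniformly over all datasets (the bound involves only $d$). This is exactly the uniform consistency needed in the first paragraph, completing the argument for each of the three orderings. The main obstacle is the uniform control of this remainder rather than the existence of the limit: one must handle the $O(1/N)$ diagonal bias introduced by the degenerate $n=n'$ pairs (the plug-in estimator is genuinely biased, since $\mathbf{1}[y_n=y_n]=1\neq\|P_{x_n}\|^2$) and, more delicately, the U-statistic-type dependence among the pairwise indicators $\mathbf{1}[y_n=y_{n'}]$, which share observation variables and therefore require a bounded-differences argument rather than an i.i.d.\ concentration bound. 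A secondary point worth verifying is that the Lipschitz and fluctuation constants are genuinely independent of $\mQ$, so that a single threshold $N_0$ suffices for all datasets realizing the fixed pair $\mQ,\mQ'$.
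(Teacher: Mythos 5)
Your proof is correct, and its skeleton---a fixed population gap $\Gamma(\mP\mQ)-\Gamma(\mP\mQ')>0$ supplied by \cref{thm:gram_preserve}, plus a consistency bound for the plug-in score that is uniform over all datasets realizing a fixed $\mQ$---is the same reduction the paper uses. The technical core differs, however. The paper's key lemma (\cref{lem:concent_emp}) writes $\bar{\mG}(i,j)=\frac{1}{N^2}\bar{\vv}_i^\intercal\bar{\vv}_j$ with $\bar{\vv}_i=\sum_{n:\hat{x}_n=i}\phi(y_n)$ and applies Pinelis' concentration inequality for sums of independent Hilbert-space-valued vectors, obtaining a high-probability spectral-norm bound $\|\bar{\mG}-\hat{\mG}\|_2=O\bigl(\sqrt{\log(d/\delta)/N}\bigr)$; it then passes to expectations via the \emph{relative} determinant-perturbation bound (\cref{thm:appro_det2}) combined with the fact that $\det(\bar{\mG})\le 1$. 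You instead bound $\E\|\bar{\mG}-\hat{\mG}\|$ directly: an explicit $O(1/N)$ diagonal bias from the degenerate $n=n'$ pairs, an Efron--Stein/bounded-differences variance bound of $O(1/N)$ per entry, and Lipschitz continuity of $\det$ on matrices with entries in $[0,1]$. Both routes are sound, and yours is more elementary (no vector-valued concentration, no perturbation theorem for determinants). It also has one genuine advantage: because you control the \emph{additive} error $|\E[\det(\bar{\mG})]-\det(\hat{\mG})|$, your argument applies verbatim when $\det(\hat{\mG}')=0$ (e.g., a singular $\mQ'$ in $\mathcal{Q}_{\text{nonperm}}$ under the exact-match ordering), whereas the paper's relative-error formulation as written presupposes $\det(\hat{\mG}')>0$ and would need a small additive patch in that case. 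What the paper's route buys in exchange is a sharper, reusable high-probability spectral bound (recycled for the kernelized estimator via \cref{lem:concent_emp_kernel}) and multiplicative error control, which is more informative when $\det(\hat{\mG})$ is small. Finally, your explicit identification of the $n=n'$ bias (the plug-in estimator is genuinely biased since $\mathbf{1}[y_n=y_n]=1\neq\|P_{x_n}\|^2$) makes precise a point the paper leaves implicit---there it is absorbed into the $\ve_i^\intercal\ve_j$ term of an exact decomposition---so spelling it out as you do is a useful clarification.
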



While the above plug-in estimator can asymptotically preserve all reliability orderings in \cref{thm:gram_preserve}, it lacks provable guarantees for finite samples. In practice, only a limited number of observations are available, and a strategic data source may aim to maximize its reliability score. \Cref{def:stratified} provides a finite-sample estimator that preserves the exact-match ordering, rewarding truthful reporting more than any alternative report.

\begin{definition}[stratified matching Gram determinant reliability score]\label{def:stratified}
    Given $\mathcal{X} = [d]$ and $\hat{\vx}, \vy$ of size $N$, a \emph{stratified-matching estimator} for the Gram determinant score is defined as follows:
\begin{enumerate}
    \item[1.] Return $0$ if the minimum occurrence $\min_{x\in \mathcal{X}} |\{n\in [N]: \hat{x}_n = x\}|$ is less than $2$. Otherwise, randomly select two disjoint index sets $Col, Row\subseteq [N]$ of size $d$ in which each label $i\in \mathcal{X}$ occurs exactly once. Then re-index them as two sequences of pairs $(\hat{x}_{i,Col}, y_{i,Col})_{i\in [d]}$ and $(\hat{x}_{i,Row}, y_{i,Row})_{i\in [d]}$ so that $\hat{x}_{i,Col} = \hat{x}_{i,Row} = i$ for all $i\in \mathcal{X}$. We call them the column and row sequences, respectively.
    \item [2.] Randomly sample one permutation $\sigma\in sym(d)$, and return
    \begin{equation}\label{eq:stratified}
    score(\hat{\vx}, \vy) := d! sgn(\sigma)\prod_{i,j\in [d], j = \sigma(i)} \mathbf{1}\left[y_{i,Row} = y_{j,Col}\right]\vq_{\hat{\vx}}(i)\vq_{\hat{\vx}}(j).
    \end{equation}
\end{enumerate}
\end{definition}
\Cref{eq:stratified} approximates the second form of  the Gram determinant in~\cref{eq:gramdet} by summing over all permutations. The first step is a stratified sampling to collect one report of each label in $Col$ and $Row$ respectively.  The term $\mathbf{1}[y_{i,Row} = y_{j,Col}]$ approximates the inner product between the observation distributions of reports $i$ and $j$, and the extra $\vq_{\hat{\vx}}(i)\vq_{\hat{\vx}}(j)$ offset the stratified sampling.

The stratified-matching estimator only requires each reported label to occur at least twice. If any reported label occurs fewer than two times, the estimator returns zero and yields a worse score than truthful data. The following result shows that, under mild balance conditions, the stratified-matching estimator preserves exact-match ordering over linearly independent experiments.

\begin{proposition}\label{thm:stratified}
    Given $\mathcal{X} = [d]$ and $L\ge 1$, the stratified-matching estimator in \cref{def:stratified} preserves exact ordering on $\mathcal{P}_{\text{indep}}$, $\mathcal{Q}_{L}$, and $N\ge 2Ld$.
\end{proposition}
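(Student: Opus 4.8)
The plan is to evaluate $\E[score(\hat{\vx},\vy)]$ in closed form and then show that truthful reporting strictly maximizes it. Fix $\mP\in\mathcal{P}_{\mathrm{indep}}$ and write $\mG=\mP^\intercal\mP$. I would condition first on the random index sets $Row,Col$ and the permutation $\sigma$, and let $a_i$ (resp.\ $b_j$) be the \emph{true} label of the index selected for report value $i$ in $Row$ (resp.\ for $j$ in $Col$). Because $Row$ and $Col$ are disjoint and carry one index per label, the $2d$ observations entering \cref{eq:stratified} sit at distinct data points and are independent, so $\E_{\vy}[\prod_i\mathbf{1}[y_{i,Row}=y_{\sigma(i),Col}]]=\prod_i\mG(a_i,b_{\sigma(i)})$. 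Averaging the uniform $\sigma$ cancels the $d!$, turns the signed sum into a determinant, and (using $\prod_i\vq_{\hat{\vx}}(i)\vq_{\hat{\vx}}(\sigma(i))=(\prod_i\vq_{\hat{\vx}}(i))^2$) gives $\E_{\sigma,\vy}[\,score\mid Row,Col\,]=(\prod_i\vq_{\hat{\vx}}(i))^2\det(M)$ with $M(i,j)=\mG(a_i,b_j)$. Writing $E_a$ for the matrix whose $i$-th column is the indicator vector of $a_i$, one has $M=E_a^\intercal\mG E_b$, hence $\det(M)=\det(E_a)\det(E_b)\det(\mG)$, and after averaging over the selection,
\begin{equation*}
\E[score(\hat{\vx},\vy)]=\Big(\textstyle\prod_i\vq_{\hat{\vx}}(i)\Big)^2\det(\mG)\,\E_{Row,Col}\!\big[\det(E_a)\det(E_b)\big].
\end{equation*}

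Next I would settle the endpoints. For $\hat{\vx}=\vx$ every index reporting $i$ has true label $i$, so $E_a=E_b=\mathbb{I}$ deterministically and the formula collapses to $(\prod_i\vq_\vx(i))^2\det(\mG)=\Gamma(\mP\mQ_\vx)$. Since $L$-balance with $N=2Ld$ forces each label to occur at least $N/(Ld)=2$ times, each $\vq_\vx(i)>0$, and $\det(\mG)>0$ for $\mP\in\mathcal{P}_{\mathrm{indep}}$, so this value is strictly positive. For a misreport $\hat{\vx}'\neq\vx$: if some report value occurs fewer than twice, \cref{def:stratified} returns $0$, already strictly below the truthful value; so I may assume every report value occurs at least twice, meaning each representative pair is drawn without replacement from a set of size $k_i\ge2$. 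Writing $\Phi'=\E_{Row,Col}[\det(E_a)\det(E_b)]$, the comparison reduces, after dividing by $\det(\mG)>0$, to establishing $(\prod_i\vq_{\hat{\vx}'}(i))^2\Phi'<(\prod_i\vq_\vx(i))^2$.

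My main route would be the clean inequality $\Phi'\le\det(\mQ_{\vx\mid\hat{\vx}}')^2$. Expanding $\Phi'=\sum_{\pi,\rho}\mathrm{sgn}(\pi)\mathrm{sgn}(\rho)\prod_i p_i(\pi(i),\rho(i))$ with the without-replacement law $p_i=r_ir_i^\intercal+\Delta_i$, where $r_i=\mQ_{\vx\mid\hat{\vx}}'(\cdot,i)$ and $\Delta_i=\tfrac{1}{k_i-1}(r_ir_i^\intercal-\mathrm{diag}(r_i))$, the leading ``with replacement'' part equals $\det(\mQ_{\vx\mid\hat{\vx}}')^2$, which recombines with $(\prod_i\vq_{\hat{\vx}'}(i))^2=\det(\mQ_{\hat{\vx}}')^2$ to give $\Gamma(\mP\mQ')=\det(\mQ')^2\det(\mG)$ via $\mQ'=\mQ_{\vx\mid\hat{\vx}}'\mQ_{\hat{\vx}}'$ (\cref{eq:m2j}). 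Writing instead $\mQ'=(\mQ_{\hat{\vx}\mid\vx}'\mQ_\vx)^\intercal$, the conditional $\mQ_{\hat{\vx}\mid\vx}'$ is column stochastic and, being diagonally dominant with $\hat{\vx}'\neq\vx$, is not a permutation matrix, so $|\det(\mQ_{\hat{\vx}\mid\vx}')|<1$ and thus $\Gamma(\mP\mQ')<\Gamma(\mP\mQ_\vx)$ with a strictly positive gap (matching \cref{thm:gram_preserve}). Combining $\Phi'\le\det(\mQ_{\vx\mid\hat{\vx}}')^2$ with this gap yields $\E[score(\hat{\vx}',\vy)]\le\Gamma(\mP\mQ')<\Gamma(\mP\mQ_\vx)=\E[score(\vx,\vy)]$, which is exactly preservation of the exact match ordering.

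The step I expect to be the main obstacle is proving $\Phi'\le\det(\mQ_{\vx\mid\hat{\vx}}')^2$, i.e.\ that sampling the two representatives \emph{without} replacement only lowers the expected signed determinant product relative to independent sampling. The first-order (single-label) perturbation equals a quadratic form $\sum_{u,v}\Delta_i(u,v)C_uC_v$ in the cofactors $C_u$ of $\mQ_{\vx\mid\hat{\vx}}'$, and since $r_ir_i^\intercal-\mathrm{diag}(r_i)$ is negative semidefinite by Cauchy--Schwarz ($(\sum_u r_{i,u}x_u)^2\le\sum_u r_{i,u}x_u^2$ as $\sum_u r_{i,u}=1$), each such term is $\le0$. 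The difficulty is controlling the higher-order cross-label terms in the multilinear expansion, whose signs are not individually fixed; here I would attempt a telescoping over labels that keeps the negative-semidefinite structure, or fall back on a quantitative estimate bounding the total correction by the strictly positive gap $\Gamma(\mP\mQ_\vx)-\Gamma(\mP\mQ')$, using diagonal dominance (at least half of each true label is reported correctly) and the size prescription $N=2Ld$ to control the ratios $k_i$ and the frequencies $\vq_{\hat{\vx}'}(i)$.
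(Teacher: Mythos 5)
Your first two paragraphs are correct and reproduce, in cleaner notation, exactly the paper's own computation: conditioning on $(Row,Col,\sigma)$, using disjointness to factor the expectation over $\vy$, and summing over $\sigma$ gives $\E[score(\hat{\vx},\vy)]=\big(\prod_i\vq_{\hat{\vx}}(i)\big)^2\det(\mG)\,\E[\det(E_a)\det(E_b)]$, with the truthful case equal to $\Gamma(\mP\mQ_\vx)>0$. The gap is the step you yourself flag as the main obstacle, and it is not merely hard: the inequality $\Phi'=\E[\det(E_a)\det(E_b)]\le\det(\mQ'_{\vx\mid\hat{\vx}})^2$ is \emph{false} for the sampling scheme of \cref{def:stratified}. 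Counterexample: $d=3$, $L=1$, $N=2Ld=6$, where the three report classes consist of two points each whose true labels are $\{1,2\}$, $\{2,3\}$, $\{3,1\}$ respectively. Then $\mQ'$ is (weakly) diagonally dominant and the true data are $1$-balanced, so $\mQ'\in\mathcal{Q}_L$; the columns of $\mQ'_{\vx\mid\hat{\vx}}$ are cyclic shifts of $(1/2,1/2,0)^\intercal$, so $\det(\mQ'_{\vx\mid\hat{\vx}})^2=1/16$. However, among the $2^3=8$ equally likely disjoint selections, the Row transversal is a bijection only for $a=(1,2,3)$ and $a=(2,3,1)$, and in both cases disjointness \emph{forces} the Col transversal to be the other of these two; both are even permutations, so $\det(E_a)\det(E_b)=1$ on those two outcomes and $0$ otherwise, giving $\Phi'=1/4>1/16$. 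This also closes off your proposed repairs: the single-class corrections $\Delta_i$ are indeed negative semidefinite, but in this example the cross-class terms are positive and dominant, so no sign-preserving telescoping can exist---the inequality itself fails, not just your proof of it.

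Two remarks to situate this. First, your inequality is identical to the paper's \cref{eq:stratified5} (factor $\prod_i\vq_{\hat{\vx}}(i)$ out of $\mQ^{Row}$ and $\mQ^{Col}$), and the paper's justification of that step is itself flawed: it computes the correlation as if $Row,Col$ were two \emph{distinct} transversals sampled from $\mathcal{I}$, whereas \cref{def:stratified} (and the independence-over-$\vy$ step earlier in the same proof) requires them to be \emph{disjoint}; under disjoint sampling the example above violates \cref{eq:stratified5}. So you have, in effect, honestly exposed a hole that the paper's argument glosses over. Second, the example does not falsify \cref{thm:stratified} itself---there $(\prod_i\vq_{\hat{\vx}}(i))^2\Phi'$ is still four times smaller than $(\prod_i\vq_\vx(i))^2$---but it shows that any correct proof must bound $\Phi'$ by something weaker than $\det(\mQ'_{\vx\mid\hat{\vx}})^2$, for instance via $\Phi'\le\Pr[\text{both transversals are bijections}]$ combined with diagonal dominance, which is the direction your unexecuted quantitative fallback would have to take. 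As it stands, the proposal has a genuine gap at its pivotal step.
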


\subsection{Gram Determinant Score with Kernels}\label{sec:examples}
\label{sec:kernels}

The Gram determinant score in \cref{def:gramdet} has two limitations. First, it cannot handle continuous or general observation spaces $\mathcal{Y}$. Second, it ignores intrinsic structure in the observation space, such as predictions or feature embeddings. We extend the Gram determinant score with kernels.
\begin{definition}[kernelized Gram determinant score]\label{def:gramdet_kernel}
    Given a finite set $\mathcal{X}$, an experiment $\mP$, and a kernel $K: \mathcal{Y}\times \mathcal{Y}\to \R$, define the Gram matrix of labels $\mG_K\in \R^{d\times d}$ by, for all $x, x'\in \mathcal{X}$, $\mG_K(x,x') = \langle P_x, P_{x'}\rangle_K := \E_{y\sim P_x, y'\sim P_{x'}}[K(y,y')]$. Given $\vx$ and $\hat{\vx}$, define the Gram matrix of reports $\hat{\mG}_K\in \R^{d\times d}$ by
$\hat{\mG}_K(x,x') = \frac{1}{N^2}\sum_{n, n': \hat{x}_n = x, \hat{x}_{n'} = x'}\langle P_{x_n}, P_{x_{n'}}\rangle_K.$
Finally, we define the \emph{Gram determinant score} with kernel $K$ as  $\Gamma_K:= \det\left(\hat{\mG}_K\right)$.
\end{definition}

We next provide examples that motivate the kernelized Gram determinant score in~\cref{def:gramdet_kernel}.
\begin{enumerate}
    \item[1.] Given any feature map $\phi:\mathcal{Y}\to \R^k$ that maps observations to Euclidean space, we define $K(y,y') = \langle \phi(y), \phi(y')\rangle$ as the standard inner product between the features.  A feature map is \emph{injective} if the vectors $\{\phi(y)\}_{y\in \mathcal{Y}}$ are linearly independent.  For instance, using the one-hot encoder $\phi: y\mapsto \delta_y$ results in delta-kernel $K(y,y') = \mathbf{1}[y = y']$ and reproduces \cref{def:gramdet}.
    \item[2.] More generally, we can consider implicit feature maps, such as the Gaussian radial basis function
$K(y,y') = \exp\left(\frac{-\|y-y'\|_2^2}{\sigma^2}\right)$ for $\mathcal{Y}\subseteq \R^k$, or a general Hilbert space.~\citep{ziegel2022characteristickernelshilbertspaces}
\item[3.] We can use feature maps to incorporate special structure in $\mathcal{Y}$, such as predictions of true labels. Formally, given $\mP$, we say an observation $y$ is a pseudo-posterior with prior $\tilde{q}\in \Delta(\mathcal{X})$ if $y = \{\tilde{\Pr}[\rx = x|y]\}_{x\in \mathcal{X}} = \{\frac{P(y,x)\tilde{q}(x)}{\sum_{x'} P(y,x')\tilde{q}(x')}\}_{x\in \mathcal{X}}$ is the posterior of the true label under prior $\tilde{q}$.~\citep{kass1996selection} Rather than a one-hot encoder, we may use $\phi(y) = y\in \R^{d}$, which yields a smaller, meaningful feature space. We call the associated kernel $K(y,y') = y^\intercal y'$ with a pseudo-posterior experiment the \emph{pseudo-posterior kernel}.
\end{enumerate}

We show that kernelized Gram determinant reliability scores also preserve all reliability orderings in \cref{thm:gram_preserve} for general observation space $\mathcal{Y}$ under three kernel families. First, the result holds for any integrally strictly positive-definite kernel, so admitting arbitrary (possibly infinite or continuous) observation spaces. When $\mathcal{Y}$ is finite, one may use any kernel with injective feature maps. The guarantee also holds when the observations are pseudo-posteriors with arbitrary prior $\tilde{q}$ with full support.

\begin{theorem}\label{thm:gram_preserve_kernel}
    Given $\mathcal{X} = [d]$, $\mathcal{Y}$, and $L\ge 1$, the Gram determinant score with any of the following kernels in \cref{def:gramdet_kernel} preserves the reliability orderings in \cref{thm:gram_preserve}:
\begin{enumerate}
            \item[1.] Integrally strictly positive definite kernels---in particular the Gaussian (RBF) kernel on any separable Hilbert space $\mathcal{Y}$.
        \item[2.] Kernels with an injective feature map $\phi: \mathcal{Y}\to \R^k$ and finite set $\mathcal{Y}$.
        \item[3.] Pseudo-posterior kernel $K$ with full-support $\tilde{q}$.
   \end{enumerate}
\end{theorem}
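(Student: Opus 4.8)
The plan is to reduce the kernelized statement to the already-proved \cref{thm:gram_preserve} by replaying its decoupling identity with $\mG_K$ in place of $\mP^\intercal\mP$, and then to verify the single remaining ingredient, $\det(\mG_K)>0$, separately for each of the three kernel families. The key observation is that $\mathcal{P}_{\text{indep}}$---linear independence of the columns $\{P_x\}$ (as measures when $\mathcal{Y}$ is infinite)---is exactly what drives positivity in all three cases.

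First I would establish the factorization. Exactly as in the counting that yields \cref{eq:gram2gram}, grouping the double sum in \cref{def:gramdet_kernel} by the true labels $(i,i')$ underlying the reports $(x,x')$ gives $\hat{\mG}_K(x,x') = \sum_{i,i'}\mQ(i,x)\mQ(i',x')\mG_K(i,i')$, i.e.\ $\hat{\mG}_K = \mQ^\intercal\mG_K\mQ$. By multiplicativity of the determinant on square matrices, $\Gamma_K = \det(\hat{\mG}_K) = \det(\mG_K)\,\det(\mQ)^2$. This is structurally identical to the identity $\Gamma = \det(\mP^\intercal\mP)\det(\mQ)^2$ used in \cref{thm:gram_preserve}; since that proof only uses the positivity of the label-Gram determinant and the $\det(\mQ)^2$ dependence, the comparison $\Gamma_K(\mP\mQ)>\Gamma_K(\mP\mQ')\Leftrightarrow \det(\mQ)^2>\det(\mQ')^2$ is the same one analyzed there, and all three ordering-preservation conclusions follow verbatim once I show $\det(\mG_K)>0$ for $\mP\in\mathcal{P}_{\text{indep}}$.

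Then I turn to positivity. For an integrally strictly positive-definite kernel (case 1), I expand $\va^\intercal\mG_K\va = \iint K(y,y')\,d\nu(y)\,d\nu(y')$ with the signed measure $\nu=\sum_x a_xP_x$; linear independence forces $\nu\neq 0$ whenever $\va\neq\vzero$, so integral strict positive-definiteness (which the Gaussian/RBF kernel enjoys on a separable Hilbert space) gives $\va^\intercal\mG_K\va>0$, hence $\mG_K\succ 0$. For an injective feature map on finite $\mathcal{Y}$ (case 2), I write $\mG_K = \mM^\intercal\mM$ with $\mM = \mPhi\mP$, where $\mPhi$ stacks the feature vectors $\{\phi(y)\}$; injectivity makes $\mPhi$ full column rank, so $\mM$ inherits full column rank from $\mP\in\mathcal{P}_{\text{indep}}$ and $\det(\mG_K)=\det(\mM^\intercal\mM)>0$ (the delta-kernel specialization recovers \cref{def:gramdet}).

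The pseudo-posterior kernel (case 3) is the step I expect to be the main obstacle, because the averaged feature $\vmu_x = \E_{y\sim P_x}[y]$ could a priori collapse the independence of $\{P_x\}$, so pointwise injectivity of $\phi=\mathrm{id}$ is not enough. Here I would use the Bayesian structure explicitly: since $K(y,y')=y^\intercal y'$ and $y$ is the posterior under $\tilde{q}$, a direct computation gives $\vmu_x(x') = \tilde{q}(x')\,\mR(x,x')$ with the symmetric matrix $\mR(x,x')=\int P_x(y)P_{x'}(y)/m(y)\,dy$ and marginal $m(y)=\sum_{x''}P_{x''}(y)\tilde{q}(x'')$. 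Thus $\mM=\mathrm{diag}(\tilde{q})\,\mR$ and $\det(\mG_K)=\det(\mM)^2 = \big(\prod_x\tilde{q}(x)\big)^2\det(\mR)^2$. Writing $\mR=\mU^\intercal\mU$ for the $L^2$ vectors $u_x = P_x/\sqrt{m}$, full support of $\tilde{q}$ keeps $m>0$ on the support, so the positive reweighting $P_x\mapsto P_x/\sqrt{m}$ preserves linear independence; hence $\{u_x\}$ are independent by $\mathcal{P}_{\text{indep}}$, $\mR$ is invertible, and $\det(\mG_K)>0$. Combining the three positivity arguments with the factorization completes the proof.
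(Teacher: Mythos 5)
Your proposal is correct and follows essentially the same route as the paper: reduce via $\hat{\mG}_K=\mQ^\intercal\mG_K\mQ$ and $\Gamma_K=\det(\mG_K)\det(\mQ)^2$ to showing that $\mG_K$ is positive definite for every $\mP\in\mathcal{P}_{\text{indep}}$, then verify positivity case by case---the signed-measure/mean-embedding argument for integrally strictly positive definite kernels, full column rank of $\mPhi\mP$ for injective feature maps, and the $\tilde{q}$-weighted Gram computation for the pseudo-posterior kernel. Your case 3, organized as the factorization $\mM=\mathrm{diag}(\tilde{q})\,\mR$ with $\mR$ the Gram matrix of the reweighted vectors $P_x/\sqrt{m}$, is exactly the computation the paper carries out in quadratic-form language, so the two proofs coincide in substance.
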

The proof is mostly identical to that of \cref{thm:gram_preserve}.  As the kernel only changes the Gram matrix of labels $\mG$, it is sufficient to show $\mG$ is positive definite to reuse \cref{lem:preserve_exact,lem:preserve_hamming}.  Similarly, those two estimators in \cref{sec:estimator} can also adopt kernels. We provide details in~\cref{app:examples}.

\begin{definition}[plug-in Kernelized Gram determinant reliability score]\label{def:plug-in_kernel}
Given a kernel $K:\mathcal{Y}^2\to \R$, $\hat{\vx}, \vy$ of length $N$, let $\bar{\mG}_K: \mathcal{X}\times\mathcal{X}\to \R$ where
$$\bar{\mG}_K(x,x') = \frac{1}{N^2}\sum_{n, n'\in[N]: \hat{x}_n = x, \hat{x}_{n'} = x'}K(y_n, y_{n'}).$$
The plug-in kernelized Gram determinant reliability score is
$\bar{S}_K(\hat{\vx}, \vy) = \det(\bar{\mG}_K)$
\end{definition}

\begin{theorem}
Given $\mathcal{X} = [d]$, a finite set $\mathcal{Y}$, and $L\ge 1$, the plug-in Gram determinant score with any bounded kernel in \cref{thm:gram_preserve_kernel} asymptotically preserves the reliability orderings in \cref{thm:gram_preserve}.
\end{theorem}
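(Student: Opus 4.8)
The plan is to mirror the proof of \cref{thm:plug-in}, replacing the delta kernel $\mathbf{1}[y=y']$ by a general bounded kernel $K$ with $\sup|K|\le B$, and to reduce asymptotic preservation to the strict population inequality supplied by \cref{thm:gram_preserve_kernel}. Fix $\mP\in\mathcal{P}_{\text{indep}}$ and misreport matrices $\mQ,\mQ'$ in the relevant class, and let $\hat{\vx}\succ^\vx\hat{\vx}'$ be any datasets of size $N$ realizing them. The single genuinely new ingredient over \cref{thm:plug-in} is that, for each of the three kernel families in \cref{thm:gram_preserve_kernel}, the label Gram matrix $\mG_K=(\langle P_x,P_{x'}\rangle_K)_{x,x'}$ is positive definite on the finite space $\mathcal{Y}$; consequently $\Gamma_K(\mP\mQ)=\det(\hat{\mG}_K)=\det(\mG_K)\det(\mQ)^2$ strictly decreases along each ordering, so $\Gamma_K(\mP\mQ)>\Gamma_K(\mP\mQ')$ whenever $\hat{\vx}\succ^\vx\hat{\vx}'$, exactly as in \cref{thm:gram_preserve}. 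It therefore suffices to prove $\E_{\vy\sim\mP(\vx)}[\bar{S}_K(\hat{\vx},\vy)]\to\Gamma_K(\mP\mQ)$ as $N\to\infty$.

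To establish this convergence I would expand the determinant over permutations,
\[
\E[\det(\bar{\mG}_K)]=\sum_{\sigma\in symm(d)}sgn(\sigma)\,\E\Bigl[\prod_{i=1}^d\bar{\mG}_K(i,\sigma(i))\Bigr],
\]
and show each joint moment converges to $\prod_{i=1}^d\hat{\mG}_K(i,\sigma(i))$. Entry-wise, the off-diagonal estimators $\bar{\mG}_K(i,j)$ with $i\neq j$ are exactly unbiased: the constraint $\hat{x}_n=i\neq j=\hat{x}_{n'}$ forces $n\neq n'$, so $y_n$ and $y_{n'}$ are independent and $\E[K(y_n,y_{n'})]=\langle P_{x_n},P_{x_{n'}}\rangle_K$. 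The diagonal estimator $\bar{\mG}_K(i,i)$ carries a bias of order $B/N$, arising because its $n=n'$ terms use $K(y_n,y_n)$ rather than the independent inner product; as there are $O(N)$ such terms in a sum normalized by $N^2$, this bias vanishes. Because every entry is a bounded average, each variance and cross-covariance is $O(B^2/N)$: two summands $K(y_n,y_{n'})$ and $K(y_m,y_{m'})$ decorrelate unless their index sets overlap, and only $O(N^3)$ of the $N^4$ index quadruples overlap.

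Writing $A_i=\bar{\mG}_K(i,\sigma(i))=a_i+\delta_i$ with $a_i=\E[A_i]=\hat{\mG}_K(i,\sigma(i))+O(1/N)$ and $\E[\delta_i^2]=O(1/N)$, the product expands as $\prod_i a_i$ plus terms each carrying at least one fluctuation $\delta_i$; taking expectations annihilates the single-$\delta$ term and bounds the remainder by $O(1/N)$ through Cauchy--Schwarz and boundedness of the $a_i$. Summing over $\sigma$ gives $\E[\det(\bar{\mG}_K)]=\Gamma_K(\mP\mQ)+O(1/N)$, with an implied constant depending only on $d$, $B$, $\mP$, and $\mQ$, hence uniform over all datasets realizing the fixed $\mQ$; the identical bound holds for $\mQ'$. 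Setting $\eps:=\Gamma_K(\mP\mQ)-\Gamma_K(\mP\mQ')>0$ and choosing $N_0$ so that both error terms fall below $\eps/2$ for $N\ge N_0$ yields $\E[\bar{S}_K(\hat{\vx},\vy)]>\E[\bar{S}_K(\hat{\vx}',\vy)]$, which is asymptotic preservation. I expect the main obstacle to be this moment-factorization step---controlling the diagonal bias and the index-coincidence combinatorics in the covariance estimates uniformly over all datasets sharing a fixed misreport matrix---since everything else follows from \cref{thm:gram_preserve_kernel} and continuity of the determinant.
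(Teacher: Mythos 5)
Your proposal is correct, but it reaches the key convergence fact $\E[\det(\bar{\mG}_K)]\to\det(\hat{\mG}_K)$ by a genuinely different route than the paper. The paper's proof is a near-verbatim replay of \cref{thm:plug-in}: it proves a spectral-norm concentration bound $\|\bar{\mG}_K-\hat{\mG}_K\|_2 = O\bigl(\sqrt{\log(d/\delta)/N}\bigr)$ with probability $1-\delta$ (\cref{lem:concent_emp_kernel}, established via Pinelis' concentration inequality for sums of independent random elements in Hilbert spaces, generalizing \cref{lem:concent_emp}), converts this to determinant closeness through the matrix perturbation bound of \cref{thm:appro_det2}, and then uses boundedness of $\det(\bar{\mG}_K)$ to pass from the high-probability event to expectations. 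You instead expand $\det(\bar{\mG}_K)$ via the Leibniz formula and do a first/second-moment analysis entry by entry: exact unbiasedness of off-diagonal entries (since $\hat{x}_n=i\neq j=\hat{x}_{n'}$ forces $n\neq n'$, hence independence of $y_n,y_{n'}$), an $O(B/N)$ bias on the diagonal from the $n=n'$ coincidences, $O(B^2/N)$ variances from counting overlapping index quadruples, and Cauchy--Schwarz plus boundedness to control the product expansion. Both arguments are sound, and both correctly reduce the population-level strict inequality to \cref{thm:gram_preserve_kernel}, including the uniformity of the error constants over datasets sharing a fixed misreport matrix, which is what the paper's definition of asymptotic preservation requires. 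Your route is more elementary and self-contained (no Hilbert-space concentration, no determinant perturbation theorem) and yields a cleaner $O(1/N)$ rate for the bias of the expected score, since the centered first-order fluctuations vanish in expectation; it also handles the diagonal $n=n'$ terms explicitly, where the paper absorbs them silently into the quadratic error term $\ve_i^\intercal\ve_j$ of its exact decomposition and controls them in norm. What the paper's approach buys in exchange is a reusable high-probability concentration statement (not merely convergence of expectations), which is stronger than what the theorem needs but of independent use; for the theorem as stated, either proof suffices.
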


\begin{lemma}\label{lem:concent_emp_kernel}
Given $|K|\le 1$, $\delta>0$ and report length $N$,
    $$\Pr\left[\|\bar{\mG}_K-\hat{\mG}_K\|_2\le 4\sqrt{\frac{\log 2d/\delta}{N}}+2\frac{\log 2d/\delta}{N}\right]\ge 1-\delta$$
\end{lemma}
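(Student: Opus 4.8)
The plan is to pass to a feature-space (RKHS) representation in which both Gram matrices $\bar{\mG}_K$ and $\hat{\mG}_K$ become ordinary Gram matrices of explicit operators, and then reduce the whole statement to a concentration bound for a single normalized sum of independent Hilbert-space-valued random operators, to which the cited inequality applies.

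First I would fix a feature map $\phi:\mathcal{Y}\to\gH$ into the RKHS of $K$ with $\langle\phi(y),\phi(y')\rangle = K(y,y')$ and $\|\phi(y)\|^2 = K(y,y)\le 1$; such a map exists for every positive semidefinite kernel with $|K|\le 1$, and the three families in \cref{thm:gram_preserve} are positive semidefinite. Writing $\mu_x = \E_{y\sim P_x}[\phi(y)]$, so that $\langle\mu_x,\mu_{x'}\rangle = \langle P_x,P_{x'}\rangle_K$ and $\|\mu_x\|\le 1$, I define operators $\Phi,\Psi:\R^d\to\gH$ column-wise by $\Phi e_x = \frac1N\sum_{n:\hat{x}_n=x}\phi(y_n)$ and $\Psi e_x = \frac1N\sum_{n:\hat{x}_n=x}\mu_{x_n}$. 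Expanding the definitions in \cref{def:plug-in_kernel,def:gramdet_kernel} gives the exact identities $\bar{\mG}_K=\Phi^\intercal\Phi$ and $\hat{\mG}_K=\Psi^\intercal\Psi$; note that the diagonal $n=n'$ terms agree on both sides through the feature map, so no separate bias correction is needed.

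Next I would reduce everything to controlling a single operator. Setting $E=\Phi-\Psi$, the telescoping identity $\Phi^\intercal\Phi-\Psi^\intercal\Psi=\Phi^\intercal E+E^\intercal\Psi$ yields
\[ \|\bar{\mG}_K-\hat{\mG}_K\|_2 \le \bigl(\|\Phi\|_2+\|\Psi\|_2\bigr)\,\|E\|_2 \le 2\|E\|_2, \]
where the final step uses the deterministic estimate $\|\Phi\|_2\le\|\Phi\|_F=\bigl(\sum_x\|\Phi e_x\|^2\bigr)^{1/2}\le\bigl(\sum_x\vq_{\hat{\vx}}(x)^2\bigr)^{1/2}\le 1$, since each column has norm at most $\vq_{\hat{\vx}}(x)$ and $\sum_x\vq_{\hat{\vx}}(x)=1$; the same bound gives $\|\Psi\|_2\le 1$. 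It therefore suffices to show that, with probability at least $1-\delta$, $\|E\|_2\le 2\sqrt{\log(2d/\delta)/N}+\log(2d/\delta)/N$. Writing $u_n=\phi(y_n)-\mu_{x_n}$, the observations $y_n$ are independent, so the rank-one operators $Z_n:=\tfrac1N u_n e_{\hat{x}_n}^\intercal$ are independent and mean-zero (as $\E[u_n]=0$), satisfy $\|Z_n\|_2=\|u_n\|/N\le 2/N$, and sum to $E$. Their one-sided variance proxies obey $\|\sum_n\E[Z_nZ_n^\intercal]\|_2\le\frac1{N^2}\sum_n\E\|u_n\|^2\le\frac1N$ and $\|\sum_n\E[Z_n^\intercal Z_n]\|_2=\frac1{N^2}\max_x\sum_{n:\hat{x}_n=x}\E\|u_n\|^2\le\frac1N$, using $\E\|u_n\|^2\le\E\|\phi(y_n)\|^2\le 1$. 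Substituting the a.s. bound $2/N$ and a pooled variance of order $2/N$ into the Bernstein-type tail of \cite[Theorem 3.5]{pinelis1994optimum}, with dimension factor $2d$, and inverting gives the required bound on $\|E\|_2$, hence the lemma after multiplying by $2$.

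The main obstacle is to invoke the Hilbert-space concentration so that the possibly infinite dimension of $\gH$ does not enter, while only the $d$ report-label directions produce the factor $2d$ inside the logarithm. I would resolve this by working with $E^\intercal E$, a $d\times d$ positive semidefinite matrix whose spectral norm equals $\|E\|_2^2$, so that all dimension counting happens in $\R^d$; equivalently, one applies the inequality to the self-adjoint dilation of $E$, whose variance parameter is governed by the two one-sided proxies above and whose relevant block has dimension $d$. The only remaining delicacy is constant tracking: matching the precise coefficients $4$ and $2$ requires feeding the a.s. bound $\|Z_n\|_2\le 2/N$ and the variance proxy (at most $2/N$) into the exact form of the cited tail bound, which I would confirm by direct substitution.
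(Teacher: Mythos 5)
Your setup and reduction are sound, and they are essentially the paper's own argument rewritten in operator language: the paper likewise represents $\hat{\mG}_K$ as the Gram matrix of the column means (its $\vv_i$) and $\bar{\mG}_K$ as the Gram matrix of the empirical columns (its $\bar{\vv}_i$), and controls the difference through the mean-zero errors $\ve_i=\bar{\vv}_i-\vv_i$; your $\Phi$, $\Psi$, $E$ are exactly these objects scaled by $1/N$. Your identification $\hat{\mG}_K=\Psi^\intercal\Psi$ is correct (the $n=n'$ terms in \cref{def:gramdet_kernel} are $\langle P_{x_n},P_{x_n}\rangle_K=\|\mu_{x_n}\|^2$, so indeed no bias correction is needed), and your telescoping $\Phi^\intercal\Phi-\Psi^\intercal\Psi=\Phi^\intercal E+E^\intercal\Psi$ with $\|\Phi\|_2,\|\Psi\|_2\le1$ is in fact slightly cleaner than the paper's decomposition, which retains the quadratic term $\ve_i^\intercal\ve_j$ and bounds $\frac{1}{N^2}(2R_\vv R_\ve+R_\ve^2)$.

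The gap is in the concentration step. \cite[Theorem~3.5]{pinelis1994optimum} is a \emph{dimension-free} Hoeffding-type inequality for the Hilbert-space norm of a sum of independent mean-zero elements; it has no ``dimension factor $2d$'' and is not a Bernstein inequality for the spectral norm of a sum of independent random operators. The quantity you need, $\|E\|_2$ with $E=\sum_n Z_n$, is an operator norm, and neither of your proposed repairs extracts it from that theorem: $E^\intercal E=\sum_{n,n'}Z_n^\intercal Z_{n'}$ is not a sum of independent terms, so no concentration inequality applies to it directly; and the self-adjoint dilation of $E$ acts on $\gH\oplus\R^d$, which may be infinite-dimensional, so the dimension factor of a naive matrix Bernstein bound is vacuous there (one would need an intrinsic-dimension operator Bernstein inequality in the style of Minsker or Tropp, which is a different tool from the one you cite). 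Even granting such a bound with variance proxy $1/N$ and range $2/N$, the Bernstein inversion produces a second-order term of roughly $\tfrac{4}{3N}\log(2d/\delta)$, so after multiplying by $2$ you would overshoot the stated $\tfrac{2}{N}\log(2d/\delta)$ term.

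The fix is precisely the paper's route, and it slots into your reduction unchanged: apply Pinelis \emph{column by column}. Each column $Ee_x=\frac1N\sum_{n:\hat{x}_n=x}u_n$ is a sum of $N\vq_{\hat{\vx}}(x)$ independent mean-zero vectors in $\gH$ with bounded norms, so the dimension-free bound gives
$\Pr\bigl[\|Ee_x\|\ge r_x\bigr]\le 2\exp\bigl(-cN^2r_x^2/(N\vq_{\hat{\vx}}(x))\bigr)$;
choosing $r_x\propto\sqrt{\vq_{\hat{\vx}}(x)\log(2d/\delta)/N}$ and taking a union bound over the $d$ columns---this union bound, not a matrix dimension factor, is where the $2d$ inside the logarithm comes from---yields, with probability at least $1-\delta$,
\begin{equation*}
\|E\|_2\le\|E\|_F=\Bigl(\sum\nolimits_x\|Ee_x\|^2\Bigr)^{1/2}=O\Bigl(\sqrt{\log(2d/\delta)/N}\Bigr),
\end{equation*}
using $\sum_x \vq_{\hat{\vx}}(x)=1$. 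Combined with your factor-$2$ reduction this proves the lemma, and because your telescoping has no quadratic term, the bound is absorbed by the first term of the lemma alone.
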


The above lemma shows that the empirical estimator $\bar{\mG}_K$ is close to its expectation $\hat{\mG}_K$ in spectral norm. The proof is mostly identical to that of \cref{lem:concent_emp}; it only requires a concentration result for sums of independent random elements in Hilbert spaces~\cite[Theorem 3.5]{pinelis1994optimum}.

Finally, we can also design an estimator that preserves exact match ordering even with finite length data.
\begin{definition}\label{def:stratified_kernel}
    Given a kernel $K: \mathcal{Y}\times\mathcal{Y}\to \R$ and $\hat{\vx}, \vy$ of length $N$, a \emph{stratified-matching estimator} estimates the kernelized Gram determinant as follows:
\begin{enumerate}
    \item Return $0$ if the minimum occurrence $\min_{x\in \mathcal{X}} |\{n\in [N]: \hat{x}_n = x\}|$ is less than $2$. Otherwise, randomly select two disjoint index sets $Col, Row\subseteq [N]$ of size $d$ in which each label $i\in \mathcal{X}$ occurs exactly once. Then re-index them as two sequences of pairs $(\hat{x}_{i,Col}, y_{i,Col})_{i\in [d]}$ and $(\hat{x}_{i,Row}, y_{i,Row})_{i\in [d]}$ so that $\hat{x}_{i,Col} = \hat{x}_{i,Row} = i$ for all $i\in \mathcal{X}$. We call them the column and row sequences, respectively.
    \item Randomly sample one permutation $\sigma\in sym(d)$, and return
    \begin{equation}\label{eq:stratified_kernel}
    score(\hat{\vx}, \vy) := d! sgn(\sigma)\prod_{i,j\in [d], j = \sigma(i)} K(y_{i,Row}, y_{j,Col})\vq_{\hat{\vx}}(i)\vq_{\hat{\vx}}(j).
    \end{equation}
\end{enumerate}

\end{definition}

\begin{theorem}\label{thm:stratified_kernel}
    Given $\mathcal{X} = [d]$ and $L\ge 1$, the stratified-matching estimator in \cref{def:stratified_kernel} with any kernel in \cref{thm:gram_preserve_kernel} preserves exact ordering on $\mathcal{P}_{\text{indep}}$, $\mathcal{Q}_{L}$, and $N\ge 2Ld$.
\end{theorem}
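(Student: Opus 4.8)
The plan is to compute the estimator's expectation in closed form, reduce it (in an idealized sampling model) to the value $\det(\mQ')^2\det(\mG_K)$ already controlled by the ordering guarantee for the exact Gram determinant score, and then separately bound the finite-sample error coming from disjoint sampling. As a preliminary, exactly as in the proof of \cref{thm:gram_preserve_kernel}, I would record that each admissible kernel family makes the label Gram matrix positive definite—via $\mG_K=(\Phi\mP)^\intercal(\Phi\mP)$ with linearly independent columns in the finite injective case, via integral strict positive-definiteness in the RBF case, and via the full-support argument for the pseudo-posterior kernel—so that $\det(\mG_K)>0$. I would also note that $L$-balance forces every true label to appear (otherwise $\vq_\vx(x)\le L\vq_\vx(x')=0$ for all $x$), so $\vq_\vx(i)>0$, and that with $N\ge 2Ld$ every label occurs at least $N/(Ld)\ge 2$ times, which makes the disjoint selection of $Row$ and $Col$ feasible for truthful data.

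Next I would evaluate $\E[score]$. Averaging the uniform permutation $\sigma$ against the factor $d!\,sgn(\sigma)$ reconstitutes the full Leibniz expansion of a determinant, while $\prod_i \vq_{\hat\vx}(i)\vq_{\hat\vx}(\sigma(i))=(\prod_i \vq_{\hat\vx}(i))^2$ is independent of $\sigma$ and factors out. Conditioning on the chosen index sets and using that $Row$ and $Col$ are disjoint—so the $2d$ relevant observations are independent—gives $\E_\vy[\prod_i K(y_{i,Row},y_{\sigma(i),Col})]=\prod_i \mG_K(a_i,b_{\sigma(i)})$, where $a_i,b_j\in\mathcal{X}$ are the true labels of the selected row and column representatives. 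Recognizing the permutation sum as $\det(M)$ with $M(i,j)=\mG_K(a_i,b_j)$ and factoring $M=A\,\mG_K\,B^\intercal$ with $0$–$1$ selection matrices $A,B$ (row $i$ of $A$ being $e_{a_i}^\intercal$), I obtain
\[
\E[score(\hat\vx,\vy)]=\Big(\prod_i \vq_{\hat\vx}(i)\Big)^2\det(\mG_K)\,\E_{Row,Col}[\det(A)\det(B)],
\]
where $\det(A)=sgn(a)$ when $(a_1,\dots,a_d)$ is a permutation and $0$ otherwise.

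For the truthful report every representative of reported label $i$ has true label $i$, so $A=B=\mathbb{I}$ deterministically and $\E[score(\vx,\vy)]=(\prod_i\vq_\vx(i))^2\det(\mG_K)>0$. For a misreport $\hat\vx'\neq\vx$: if some label occurs fewer than twice the estimator returns $0$, which is strictly smaller; otherwise it remains to show $(\prod_i\vq_{\hat\vx'}(i))^2\,\E[\det(A)\det(B)]<(\prod_i\vq_\vx(i))^2$. The structural fact I would exploit is that the stratification partitions indices into disjoint blocks by reported label, so drawing one row representative per block makes the $a_i$ independent across $i$ with marginal $\mQ'_{\vx\mid\hat\vx}(\cdot,i)$; hence $\E_{Row}[\det(A)]=\det(\mQ'_{\vx\mid\hat\vx})$ exactly. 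In the idealized with-replacement model $\E[\det(A)\det(B)]=\det(\mQ'_{\vx\mid\hat\vx})^2$, which collapses the whole expression to $\det(\mQ')^2\det(\mG_K)=\Gamma_K(\mP\mQ')$ via $\det(\mQ')=\det(\mQ'_{\vx\mid\hat\vx})\prod_i\vq_{\hat\vx'}(i)$; the desired strict inequality against the truthful value $\Gamma_K(\mP\mQ_\vx)$ is then exactly the exact-ordering preservation of the Gram determinant score established in \cref{thm:gram_preserve}, so the misreport case follows just as in \cref{thm:stratified}.

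The main obstacle is the bias introduced by drawing $Col$ without replacement after $Row$: the column representatives come from each block with its row representative deleted, so $B$ is neither independent of $A$ nor exactly distributed as $\mQ'_{\vx\mid\hat\vx}$—in the extreme of a block of size exactly two the column pick is forced, creating strong dependence. I would control this finite-sample discrepancy using that $N\ge 2Ld$ together with $L$-balance guarantees at least two (and proportionally more as $N$ grows) occurrences per label, and that diagonal dominance of $\mQ'\in\mathcal{Q}_L$ keeps $\mQ'_{\vx\mid\hat\vx}$ well-conditioned; the aim is to show the perturbation of $\E[\det(A)\det(B)]$ away from $\det(\mQ'_{\vx\mid\hat\vx})^2$ is too small to overturn the strict gap inherited from \cref{thm:gram_preserve}. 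Bounding this without-replacement correction at the boundary $N=2Ld$, where occurrences can be as few as two, is the only delicate part of the argument.
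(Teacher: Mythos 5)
Your overall architecture is the paper's own: the positive-definiteness of $\mG_K$ for the three kernel families is exactly the reduction used in \cref{thm:gram_preserve_kernel}, and your expectation computation reproduces the paper's key identity (\cref{eq:stratified3}) that, conditional on the index sets, the expected score equals $\det\left((\mQ^{Row})^\intercal\mG_K\mQ^{Col}\right)$ --- your selection matrices $A,B$ are the paper's $\mQ^{Row},\mQ^{Col}$ up to the diagonal factor $\prod_i\vq_{\hat{\vx}}(i)$. The truthful case (where $A=B=\mathbb{I}$ deterministically), the handling of reports with a label occurring fewer than twice, and the final appeal to exact-ordering preservation of the Gram determinant score all match the proof of \cref{thm:stratified}.

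However, the step you explicitly leave open --- controlling $\E[\det(A)\det(B)]$ under the actual disjoint, without-replacement sampling --- is the crux, and the route you sketch for it would not succeed. Your plan is to bound the \emph{magnitude} of the deviation from the idealized independent value $\det(\mQ_{\vx\mid\hat{\vx}}')^2$ and argue it is too small to overturn the gap from \cref{thm:gram_preserve}. That deviation is not small at the boundary $N=2Ld$: take $d=2$, $L=1$, $N=4$, $\vx=(1,1,2,2)$, $\hat{\vx}'=(1,2,1,2)$; every reported block has size two, so the column pick is forced by the row pick, the idealized value is $\det(\mQ_{\vx\mid\hat{\vx}}')^2=0$, yet a direct enumeration gives $\E[\det(A)\det(B)]=-1/2$, an order-one correction. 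Moreover the gap inherited from \cref{thm:gram_preserve} can itself be arbitrarily small (misreport matrices close to diagonal), so a smallness argument cannot close the proof without a quantitative comparison you never set up. What rescues the theorem --- and what the paper proves in \cref{eq:stratified5} --- is a \emph{sign} statement rather than a \emph{size} statement: writing $c_i$ for the determinant associated with the $i$-th admissible index set in the family $\mathcal{I}$, exchangeability of the two stratified samples gives $\E[\det(\mQ^{Row})]\,\E[\det(\mQ^{Col})]-\E\left[\det\left((\mQ^{Row})^\intercal\mQ^{Col}\right)\right]=\frac{1}{2|\mathcal{I}|^2(|\mathcal{I}|-1)}\sum_{i\neq j}(c_i-c_j)^2\ge 0$, so the without-replacement dependence can only \emph{lower} the misreport's expected score (note the value $-1/2$ in the example above is indeed below $0$). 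Combined with $\E[\det(\mQ^{Row})]=\det(\mQ')$ --- the multilinearity-plus-independence fact you did observe --- this yields $\E[score(\hat{\vx}',\vy)]\le\det(\mQ')^2\det(\mG_K)<\det(\mQ_{\vx})^2\det(\mG_K)=\E[score(\vx,\vy)]$ with no error term to control. Replacing your perturbation step with this negative-correlation inequality is precisely what is needed to complete the proof.
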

The proof is mostly identical to that of \cref{thm:stratified}.

\begin{remark}Our Gram determinant score can be viewed as an application of the determinant mutual information mechanism introduced in \citep{10.1145/3638239}, where one agent’s report is replaced with the observation $\vy$. In addition to offering a more fine-grained characterization of the Gram determinant score, as discussed in related work, we also introduce several technical improvements over the original determinant mutual information method. First, the prior approach requires $\mathcal{Y} = \mathcal{X}$ and overlooks potential structure in the observations. As shown in \cref{sec:examples}, we address this by introducing kernel methods, allowing us to generalize the score to arbitrary observation spaces $\mathcal{Y}$—a crucial extension for handling continuous observations such as Gaussian variables or image embeddings, as demonstrated in \cref{sec:experiment}. Second, our stratified-matching estimators in \cref{def:stratified,def:stratified_kernel} are unbiased in the multi-task peer prediction setting of \citep{10.1145/3638239}, and they reduce the estimator's range from order $(d!)^2$ to $d!$.
\end{remark}

\section{Experiments}\label{sec:experiment}

We evaluate the Gram determinant score in four parts.  (Exp. 1) synthetic categorical data with six label-manipulation policies in~\cref{sec:exp1}; (Exp. 2) real image data (CIFAR-10 embeddings) with the same six manipulations using the kernelized score in~\cref{sec:exp2}; (Exp. 3) real employment data, treating CES vintage revisions as naturally occurring manipulations in~\cref{sec:exp3}; and (Exp. 4) comparison to other scores in~\cref{sec:exp4}.

\subsection{Experiment 1: Gram Determinant Score on Synthetic Data}\label{sec:exp1}

In this experiment, we evaluate how well the Gram determinant score captures label reliability under categorical observations, as summarized in \cref{fig:all_results2,fig:delta2}.
Specifically, we first generate a ground-truth dataset $(\vx,\vy)$ of size $N=4000$ with $d=5$. Each label $x_k$ is drawn uniformly from ${1,\dots,d}$ for $k\in[N]$, and each outcome $y_k$ is sampled from the distribution $\mP(\cdot\mid x_k)$, where the experiment distribution matrix $\mP\in[0,1]^{d\times d}$ is constructed by sampling $\mP(i,j)\sim\mathrm{Uniform}(0,1)$ independently and normalizing columns to be stochastic. The ground-truth dataset $(\vx,\vy)$ is fixed across all trials.
To model varying reliability, for each $p\in\{0.00,0.05,\dots,0.50\}$ we corrupt the labels according to
\begin{equation}\label{eq:uniform_misreport}
\hat{x}_k =
\begin{cases}
x_k, & \text{with probability }1-p,\\
Z_k, & \text{with probability }p,
\end{cases}
\end{equation}
where $Z_k\sim \pi(\cdot\mid x_k)$ is independently drawn from a corruption policy $\pi$; in our experiments, $\pi$ is instantiated by one of the six manipulations below.

\begin{itemize}
     \item Uniformly random: $Z_k\sim\mathrm{Uniform}\{1,\dots,d\}$.
     \item Asym neighbor: with probability $0.85$ set $Z_k=\min\{x_k+1,d\}$, otherwise sample $Z_k$ uniformly from $\{1,\dots,d\}\setminus\{x_k\}$.
     \item Row-sim 2nd: $Z_k=\arg\max_{j\neq x_k}\frac{\langle \mP_{x_k,\cdot},\,\mP_{j,\cdot}\rangle}{\|\mP_{x_k,\cdot}\|\,\|\mP_{j,\cdot}\|}$, the label with closest observation distribution.
     \item Merge $0/1\to 0$: if $x_k\in\{1,2\}$ then set $Z_k=1$; otherwise $Z_k=x_k$.
     \item Group up/down: $Z_k=\min\{x_k+1,d\}$ with probability $1/2$, or $Z_k=\max\{x_k-1,1\}$ otherwise.
     \item Mixed: sample $Z_k\sim \pi_{\text{mixed}}(\cdot|x_k)$ where each row $\pi_{\text{mixed}}(i,\cdot)$ is drawn from $$\mathrm{Dirichlet}\big(\alpha_i(1),\dots,\alpha_i(d)\big)$$ with
\begin{align*}
\alpha_i(j)&=\alpha_{\text{off}}+\alpha_{\text{diag}}\mathbf{1}\{j=i\}
+\lambda_{\text{loc}}\exp\big(-\mathrm{dist}_{\text{ring}}(i,j)\big)
\\&+\lambda_{\text{up}}\exp\big(\gamma(j-i)\big)
+\lambda_{\text{def}}\mathbf{1}\{j=j_0\},
\end{align*}
$\mathrm{dist}_{\text{ring}}(i,j)=\min(|i-j|,\,d-|i-j|)$, and $j_0$ a salient default label; rows are normalized to be stochastic, where $\alpha_{\text{off}}=0.2, \alpha_{\text{diag}}=6, \lambda_{\text{loc}}=1.0, \lambda_{\text{up}}=0.4, \gamma=0.5, \lambda_{\text{def}}=0.6, j_0=1$.
This policy mimics human labeling: diagonal dominance (keep $i$), locality on the ring (near-class confusions), mild upcoding (asymmetric mistakes), and a default-label bias—yielding structured, non-uniform noise beyond uniform corruption.
\end{itemize}

Fix a ground-truth dataset $(\vx, \vy)$. For each manipulation and corruption level $p\in \{0.00,0.05,\dots,0.50\}$, in~\cref{fig:det_vs_p2,fig:hamming_vs_det2,fig:l2_vs_det2}, we run $M=100$ independent trials, producing corrupted reports $\hat{\vx}^m$. In every trial, we compute 1) the plug-in Gram determinant reliability score in \cref{def:plug-in}, 2) the Hamming error $\sum_{n=1}^N \mathbf{1}[x_n \neq \hat{x}_n^m]$, and 3) the $\ell_2$ error $\lVert \vx - \hat{\vx}^m \rVert_2$. We then report the mean and standard deviation of each metric across the $M$ trials. In \cref{fig:det_vs_p2}, the plug-in Gram-determinant score falls steadily as the corruption probability $p$ increases. \Cref{fig:hamming_vs_det2,fig:l2_vs_det2} show that higher scores correspond to lower Hamming error and smaller $\ell_2$ deviation, respectively, demonstrating a clear negative correlation between our score and these conventional error measures regardless of the manipulation policy (i.e., across all corruption schemes considered).


\begin{figure}[ht]
  \centering
  \begin{subfigure}[b]{0.4\textwidth}
    \includegraphics[width=\textwidth]{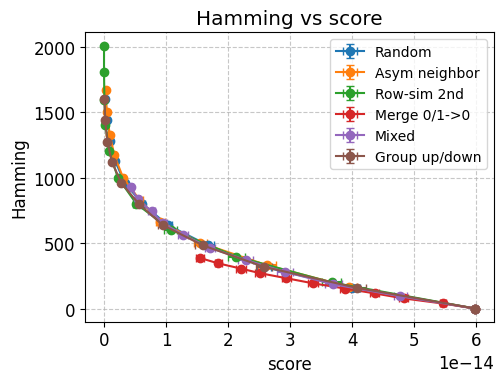}
    \caption{Hamming vs.\ $score$}
    \label{fig:hamming_vs_det2}
  \end{subfigure}
  \begin{subfigure}[b]{0.4\textwidth}
    \includegraphics[width=\textwidth]{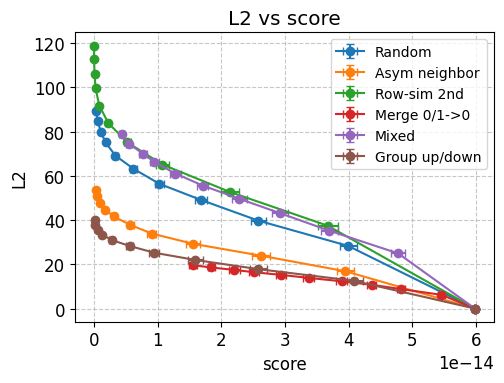}
    \caption{$\ell_2$ norm vs.\ $score$}
    \label{fig:l2_vs_det2}
  \end{subfigure}
  \caption{Comparisons between Gram determinant reliability
score and Hamming distance/$\ell_2$ norm on categorical synthetic data.}
  \label{fig:all_results2}
\end{figure}

\begin{figure}[ht]
  \centering
  \begin{subfigure}[b]{0.4\textwidth}
    \includegraphics[width=\textwidth]{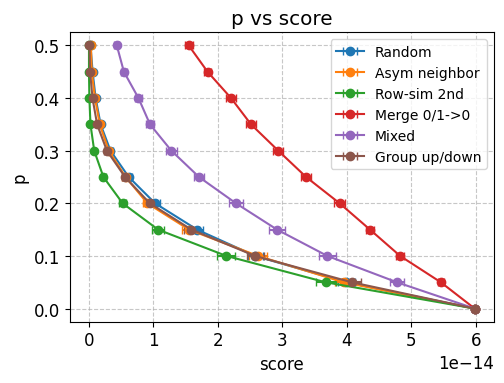}
    \caption{$score$ vs.\ $p$}
    \label{fig:det_vs_p2}
  \end{subfigure}
  \begin{subfigure}[b]{0.4\textwidth}
    \includegraphics[width=\textwidth]{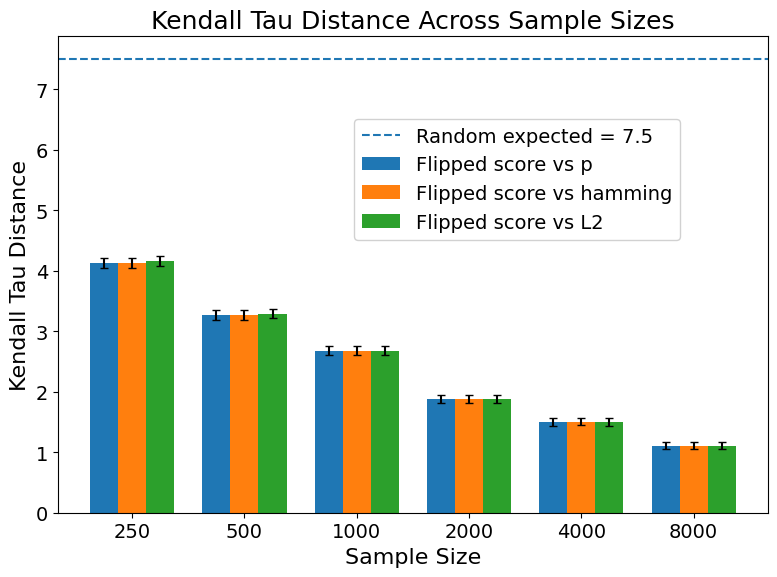}
    \caption{Kendall--tau distance}
    \label{fig:delta2}
  \end{subfigure}
  \caption{Comparison of Gram determinant reliability
score and $p$ and Kendall-tau distance comparison on categorical synthetic data.}
  \label{fig:all_results2_others}
\end{figure}

In~\cref{fig:delta2}, we vary data sizes \(N\in\{250,500,\dots,8000\}\) and generate 1000 datasets for each $N$. In each dataset and corruption level \(p\in\{0.0,0.1,\dots,0.5\}\), we use the uniform random manipulation strategy, and compute the plug-in Gram determinant, Hamming-distance error, and \(\ell_2\) error, then rank the six corrupted reports.
We report the average Kendall--tau distance between the reversed Gram--determinant ranking and the orderings induced by $p$, the Hamming distance, and the $\ell_2$ error. As shown in \Cref{fig:delta2}, the fraction of correctly recovered rankings increases with the sample size, confirming that the Gram--determinant score is a consistent indicator of true-label reliability.
%

Moreover, the curves corresponding to different manipulation schemes in
\cref{fig:hamming_vs_det2} and~\cref{fig:l2_vs_det2} are themselves comparable.
Since both the Hamming distance and the $\ell_2$ norm are defined purely
at the dataset level and do not depend on the specific manipulation type,
consistency across manipulation schemes is desirable.
In particular, tighter clustering of these curves indicates that the
error metric behaves uniformly regardless of how the corruption is
generated.
In contrast, curves indexed by the corruption probability $p$ are not
directly comparable across manipulation schemes, as $p$ measures the
strength of manipulation whose effect is inherently scheme dependent.
From this perspective, our reliability score exhibits strong and
consistent monotonic behavior with respect to the Hamming distance, where
theoretical guarantees apply, and also demonstrates meaningful correlation
with the $\ell_2$ norm, despite the absence of direct theoretical support
for this metric.


\subsection{Experiment 2: Gram Determinant Score with Kernels on Image Data}\label{sec:exp2}
We evaluate the Gram determinant score with continuous observations by using image embeddings.  We train a SimCLR model \citep{chen2020simple} with a ResNet-18 backbone and an 8-dimensional projection head on CIFAR-10 \citep{krizhevsky2009learning}. The model is optimized for 60 epochs using the InfoNCE loss with batch size $B=256$, temperature $\tau=0.5$, and the Adam optimizer at learning rate $5\times10^{-3}$. After training, we extract normalized projections $y_n\in\mathbb{R}^8$ for each of the $N=10000$ test images, denote the true labels by $\vx\in\{0,\dots,9\}^N$, and the embeddings by $\vy\in\mathbb{R}^{N\times 8}$.

To simulate corrupted reports, we use the same six corruption policies with $p\in\{0.00,0.04,\dots,0.40\}$. As $\mathcal{Y}=\R^{8}$ is continuous, we use plug-in Gram determinant with kernel $K(y,y')=\langle y,y'\rangle$ as the score. For each $p$ and policy we repeat the procedure over $M=100$ random trials to obtain the mean and standard error. As shown in \cref{fig:contrastive_det_vs_p3,fig:contrastive_hamming_vs_det3,fig:contrastive_l2_vs_det3}, the score decreases monotonically with $p$ across all six manipulations, and a higher score is associated with lower Hamming error and smaller $\ell_2$ deviation, mirroring the trends observed in the categorical setting.


\begin{figure}[ht]
  \centering
  \begin{subfigure}[b]{0.4\textwidth}
    \includegraphics[width=\textwidth]{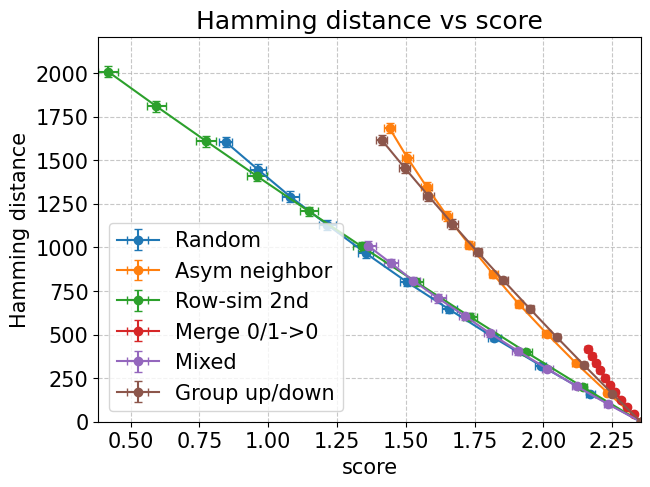}
    \caption{Hamming vs.\ $score$}
    \label{fig:contrastive_hamming_vs_det3}
  \end{subfigure}
  \begin{subfigure}[b]{0.4\textwidth}
    \includegraphics[width=\textwidth]{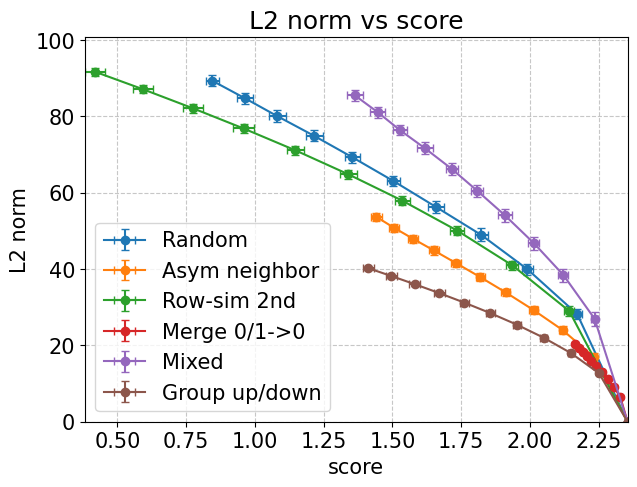}
    \caption{$\ell_2$ norm vs.\ $score$}
    \label{fig:contrastive_l2_vs_det3}
  \end{subfigure}
  \caption{Comparison of Gram determinant reliability
score and Hamming distance/$\ell_2$ norm  for image–label experiments.}
  \label{fig:contrastive_plus_employment}
\end{figure}

\begin{figure}
    \centering
    \includegraphics[width=0.4\linewidth]{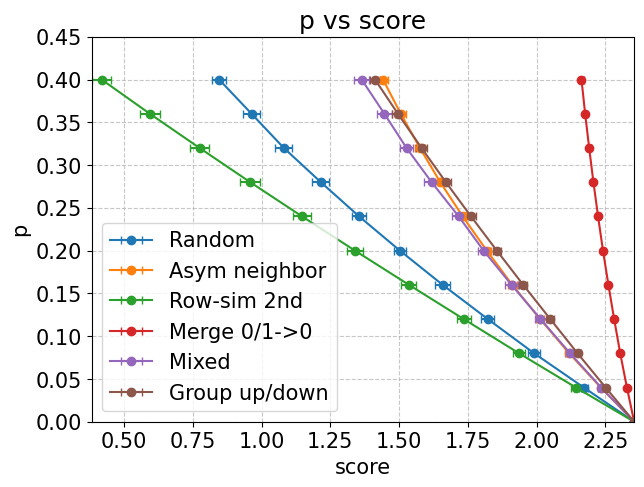}
    \caption{Comparison of Gram determinant reliability
score and $p$ for image–label experiments.}
    \label{fig:contrastive_det_vs_p3}
\end{figure}

\subsection{Experiment 3: Gram Determinant Score on Real-World Employment Data}\label{sec:exp3}

We evaluate three vintages of the CES total nonfarm employment series (not seasonally adjusted) from Oct 2005–Feb 2023, using the CES vintage dataset~\citep{bls_ces_vintage_2025}, and take as external $\vy$ the monthly changes in Withheld Income \& Employment Taxes from Treasury deposits~\citep{treasury_dts_federal_tax_deposits_2025}. For each month we use: 1) first release: initial estimate, published the next month; 2) one-month revision: first revision, one month later; and 3) final value: last available vintage including benchmark revisions.
We discretize month-to-month differences into four quantile buckets as $\vx$ and $\vy$ with $N=209$ and compute Gram determinant scores with the plug-in estimator.
\Cref{tab:employment_reliability} shows that revisions substantially improve reliability according to our score, with the final series most aligned with fiscal benchmarks.

\begin{table}[ht]
  \centering
  \begin{tabular}{l r}
    \toprule
    Version & Gram Det Score \\
    \midrule
    First Release       & $3.504\times 10^6$ \\
    One-Month Revision  & $24.920\times 10^6$ \\
    Final Value       & $33.919\times 10^6$ \\
    \bottomrule
  \end{tabular}
  \caption{Employment data reliability.}
  \label{tab:employment_reliability}
\end{table}

\subsection{Experiment 4: Comparison between Gram Determinant Score and Other Scores}\label{sec:exp4}

We compare our method with several classical measures of stochastic relationships between random variables (KL-divergence, $\chi^2$-mutual information, maximal correlation, and Top-$k$ volume). We adapt these metrics as reliability scores for reported data $\hat{\vx}$ and observations $\vy$. While we do not attempt a formal theoretical comparison, we use these measures to benchmark their performance against the Gram Determinant Score empirically.

We first define these four measures. Let $\mJ:=\mP\mQ$ denote the joint distribution, and let $P_{\hat{\vx}}$ and $P_{\vy}$ denote the marginal distributions of the report $\hat{\vx}$ and observation $\vy$, respectively. We define the baselines below and discuss additional candidates in~\cref{app:alternative_score}.
\begin{enumerate}
    \item KL-divergence~\cite{kullback1951information}: $S_{KL}(\mJ) = \sum_{x\in \mathcal{X}, y\in \mathcal{Y}}\mJ(y,x)\log \left(\frac{\mJ(y,x)}{P_{\hat{\vx}}(x)P_{\vy}(y)}\right)$ which is also known as (Shannon) mutual information.
    \item $\chi^2$-MI~\cite{ali1966general}: $I_\chi(\mJ) = \|\bar{J}\|_F^2 = \sum_{i=1}^d \bar{s}_i^2$ where $\bar{\mJ}$ is the whitened matrix\footnote{Let $\mD_{\hat{x}}$ and $\mD_{y}$ be the diagonal matrices of the marginals. The whitened matrix $\bar{\mJ}= \mD_{y}^{-1/2}(\mJ - P_{\vy}P_{\hat{\vx}}^{\intercal})\mD_{\hat{\vx}}^{-1/2}$} of $\mJ$, $\|\cdot \|_F$ is the Frobenius norm, and $\bar{s}_i$ is the $i$-th largest singular value of $\bar{\mJ}$.
    \item Max correlation~\cite{gebelein1941statistische}: $\psi_{\max}(\mJ) = \bar{s}_1$ is the largest singular value of the whitened matrix $\bar{\mJ}$.
    \item Top-$k$ volume: $\psi_{\wedge k}(\mJ) = \prod_{i=1}^{k} \bar{s}_i$ is the product of the largest $k$ singular values of $\bar{\mJ}$.
\end{enumerate}
We follow the same data generation process and manipulation policies as in
Experiment~1 (\cref{fig:all_results2}).  We evaluate the above four candidate reliability scores by computing them on the empirical joint distribution, which serves as a proxy for the joint distribution $\mJ = \mP\mQ$.

\begin{figure}[htbp]
    \centering

    \begin{subfigure}[b]{0.4\textwidth}
        \includegraphics[width=\textwidth]{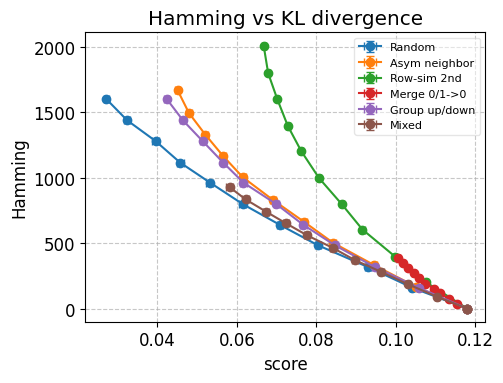}
        \caption{Hamming vs.\ KL divergence}
        \label{fig:kl-hamming}
    \end{subfigure}
    \begin{subfigure}[b]{0.4\textwidth}
        \includegraphics[width=\textwidth]{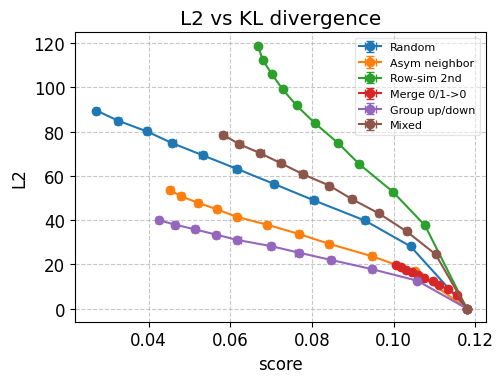}
        \caption{$\ell_2$ norm vs.\ KL divergence}
        \label{fig:kl-l2}
    \end{subfigure}

    \begin{subfigure}[b]{0.4\textwidth}
        \includegraphics[width=\textwidth]{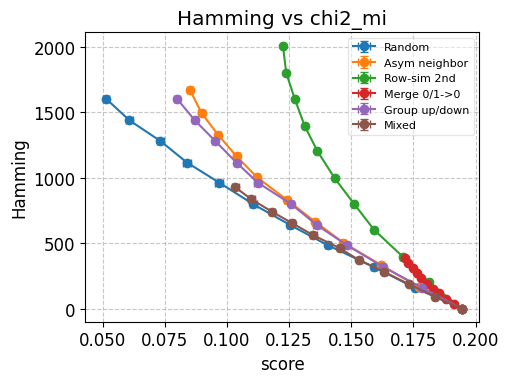}
        \caption{Hamming vs.\ $\chi^2$-MI}
        \label{fig:chi2-hamming}
    \end{subfigure}
    \begin{subfigure}[b]{0.4\textwidth}
        \includegraphics[width=\textwidth]{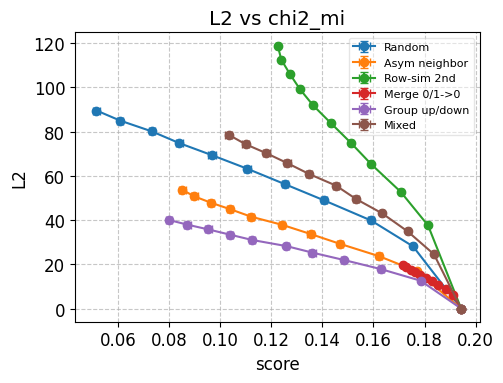}
        \caption{$\ell_2$ norm vs.\ $\chi^2$-MI}
        \label{fig:chi2-l2}
    \end{subfigure}

    \begin{subfigure}[b]{0.4\textwidth}
        \includegraphics[width=\textwidth]{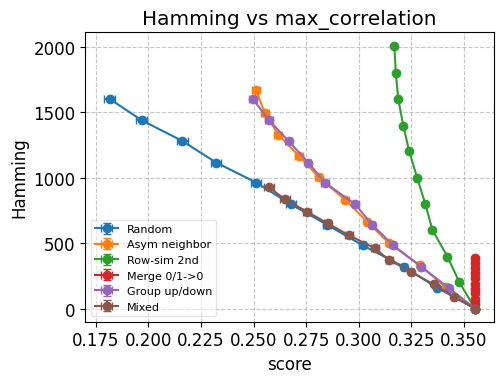}
        \caption{Hamming vs.\ Max correlation}
        \label{fig:maxc-hamming}
    \end{subfigure}
    \begin{subfigure}[b]{0.4\textwidth}
        \includegraphics[width=\textwidth]{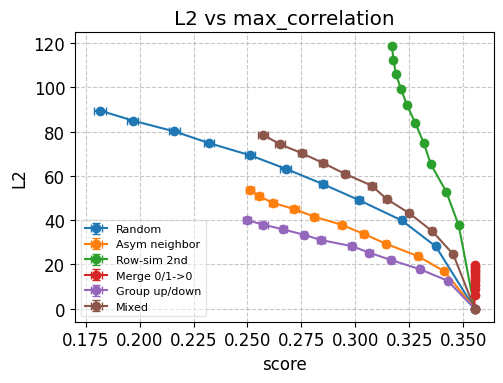}
        \caption{$\ell_2$ norm vs.\ Max correlation}
        \label{fig:maxc-l2}
    \end{subfigure}

    \begin{subfigure}[b]{0.4\textwidth}
        \includegraphics[width=\textwidth]{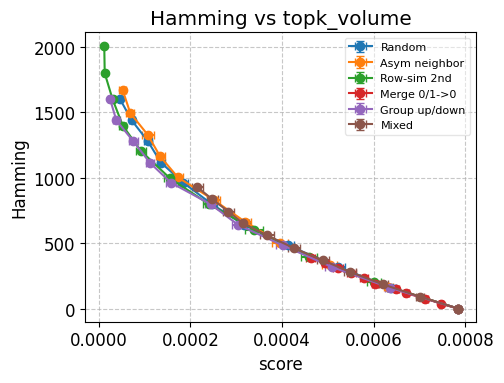}
        \caption{Hamming vs.\ Top-$k$ volume}
        \label{fig:topd-hamming}
    \end{subfigure}
    \begin{subfigure}[b]{0.4\textwidth}
        \includegraphics[width=\textwidth]{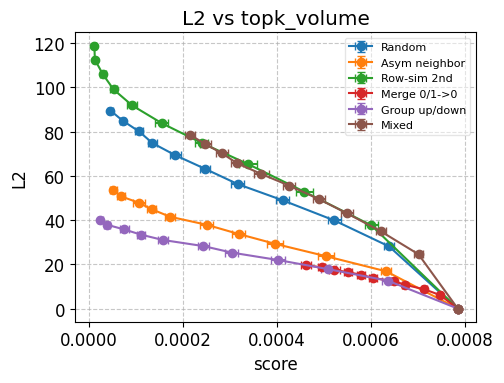}
        \caption{$\ell_2$ norm vs.\ Top-$k$ volume}
        \label{fig:topd-l2}
    \end{subfigure}

    \caption{Comparison of the KL-divergence, $\chi^2$-mutual information, maximal correlation and Top-$k$ volume scores under different corruption levels and metrics.}
    \label{fig:otherscore}
\end{figure}

Across manipulations, the scores are inversely related to the corruption level, as measured by the Hamming distance and the $\ell_2$ norm between $\vx$ and $\hat{\vx}$
(see \cref{fig:all_results2,fig:otherscore}).
This alignment across multiple metrics demonstrates that the proposed scores all provide plausible measures of data quality.  However, under the Merge 0/1$\to$ 0 manipulation, maximal correlation performs poorly, assigning almost identical scores to datasets with varying levels of manipulation. This is likely due to the fact that maximal correlation depends only on the most significant singular value and misses more fine-grained information. By contrast, the Gram determinant (product of all $d = 5$ singular values) and the top‑$k$ volume (product of the largest $k=4$) perform better, as shown in \cref{fig:hamming_vs_det2,fig:l2_vs_det2} and \cref{fig:topd-hamming,fig:topd-l2} respectively. Additionally, we observe cross‑manipulation inconsistencies in maximal correlation, $\chi^2$‑MI and KL‑divergence in~\cref{fig:kl-hamming,fig:kl-l2,fig:chi2-hamming,fig:chi2-l2}: when two datasets are manipulated by different methods, they may assign a higher score to the dataset
that is further away from the truth according to Hamming distance, violating the Hamming ordering. In contrast, our GDS (\cref{fig:hamming_vs_det2}) and the top-$k$ volume score (\cref{fig:topd-hamming}) consistently preserve the Hamming ordering across all six manipulations.

\section{Discussion}

One condition that arises from our characterization is the linear independence of the columns of $\mP$. If $\mP$ does not have full rank, no score can preserve the Blackwell ordering (\cref{prop:impossible}), whereas our Gram determinant score preserves reliability orderings under linearly independent $\mP$. This condition is inherent to the structure of the data reliability scoring problem. Is this condition restrictive?

We think this linear independence of $\mP$ is a mild condition. When the observation space $\mathcal{Y}$ is high-dimensional (e.g., composed of multiple measurements or signals with real values), collinearity among the columns of $\mP$ is highly unlikely—the set of linearly dependent matrices has Lebesgue measure zero. In practice, one can design or augment the experiment variable $\vy$ to increase informativeness. Since $\vy$ can be vector-valued, practitioners can combine multiple measurements into a composite $\vy$ to ensure the linear independence of $\mP$. 

Our proposed Gram determinant score has shown strong empirical performance in both synthetic and real-world settings. However, several caveats deserve attention:

\paragraph{Discretization versus kernelization.}
In our synthetic experiments with Gaussian label distributions, we found that both the kernelized Gram determinant score (using a Gaussian kernel) and the regular Gram determinant score (based on bucketization on ${\bf y}$) performed similarly well, with no significant difference in effectiveness. This suggests that discretization, despite being a relatively crude approach, can sometimes work better than or similar to more elaborate kernel methods. In practice, however, not all datasets admit a natural discretization strategy. For example, in image datasets such as CIFAR-10, the lack of an intuitive discretization makes kernelized versions of the Gram determinant score particularly valuable.

\paragraph{Assumptions about conditional independence in Experiment 3.}
A key limitation of Experiment~3 is the reliance on the conditional independence assumption, which is difficult to validate in real-world applications. In practice, employment data may be indirectly adjusted from tax withholding records. In the employment dataset, we lack ground-truth employment data and only have access to three fiscal time series from which scores are computed. This prevents us from directly checking whether conditional independence holds. Consequently, the reported scores for these employment series should be interpreted as indicative rather than definitive measures of reliability.

\paragraph{Comparison with alternative scores.}
We compared the Gram determinant score to four existing scoring methods in \cref{sec:exp4}. All of them showed broadly consistent behavior: their rankings aligned well with Hamming distance and $\ell_2$-norm error. We also attempted to demonstrate the advantage of the Gram determinant score as an ``experiment-agnostic'' method. However, because we only had access to samples $\hat{\vx}$ with corresponding $\vy$, the underlying joint distribution matrix $\mP\mQ$ was unknown, and any estimator we used introduced additional variance. Consequently, the Gram determinant score could not exhibit a clear advantage in this regard. This limitation makes it more difficult to establish the clear superiority of our approach over the alternatives discussed in \cref{app:alternative_score}, particularly in finite-sample regimes.

\paragraph{Application of the Gram Determinant Score in Practice}
Although verifying the formal conditions to preserve reliability orderings may be challenging in practice, several heuristics can offer guidance.  Strongly imbalanced reported labels---for example, when one class is reported far more frequently than others---may fail to provide information for rare labels to reliably distinguish their observations.  The conditional independence assumption is more credible when the observation is revealed only after reports (or kept blinded), so reporters cannot tailor reports to the observations. Persistently small determinants of the empirical Gram matrix may reflect poor reliability or weak stochastic dependence between the reported data and observations. These diagnostics are not formal tests, but they offer practitioners useful signals about whether the theoretical requirements are plausibly satisfied in applied settings.

\section{Conclusion}
We introduce the Gram determinant score --- a metric that intuitively measures the volume of class-conditional observation distributions. Under mild independence assumptions, it exactly preserves exact-match and Blackwell orderings and closely approximates Hamming orderings. We develop plug-in and stratified-matching estimators with finite-sample guarantees and extend the method to continuous or structured spaces via kernel embeddings. Experiments on synthetic data, CIFAR-10 embeddings, and employment data demonstrate its effectiveness.

Looking ahead, it is interesting to design scalable estimators for high-dimensional or continuous label domains using dimensionality-reduction (e.g., PCA, DPP sampling) and learned encoders.
Moreover, we conjecture that other singular-value–based criteria can also serve as reliability scores. \Cref{app:alternative_score} briefly discusses additional candidates beyond the Gram determinant score and reports synthetic-data experiments evaluating them.  However, formal guarantees remain to be established; each candidate will require tailored analysis to show it preserves the relevant reliability orderings.
In real-world settings, the Gram determinant score is applicable wherever labels are noisy or manipulated -- for example, by detecting incoherent star ratings in product reviews -- and could help platforms like Amazon and Yelp enhance consumer protection.


\paragraph{Acknowledgments}

This work was partially supported by the National Science Foundation under Grant No. IIS-2147187. We thank anonymous reviewers for their feedback.

We acknowledge the use of generative-AI tools to edit the manuscript and, on occasion, to assist in drafting text and code. All such material, including introductory examples and code used to produce figures, was reviewed and revised by the authors.

\bibliography{reference}
\bibliographystyle{ACM-Reference-Format}
\newpage
\appendix
\section*{Appendix}
\section{Preliminary: Matrices and Kernels}\label{app:mat}
This section provides basic definitions and theorems for matrices and kernels.
Given a $d\times d$ matrix $\mA$, the determinant of $\mA$ is
$$\det(\mA) = \sum_{\sigma\in symm(d)} sgn(\sigma)\prod_{i = 1}^d A(i, \sigma(i)),$$
where $symm(m)$ is the set of all permutations of $[d]$ and $sgn(\sigma)$ is the sign function of a permutation.

Given two $d\times d$ matrices $\mA$ and $\mB$, the Frobenius inner product
between them is $\langle \mA, \mB\rangle_F:=\sum_{i,j\in [d]}\mA(i,j)\mB(i,j)$.



We introduce two approximation results for determinants.  The first one shows that $\det(\mA)$ can be approximated by the determinant of its diagonal matrix, and the second shows that the determinant is smooth under small perturbation.
\begin{theorem}[\citep{ipsen2011determinantapproximations}]\label{thm:appro_det}
    Let $\mA$ be a $d$-dimensional squared matrix, $\mA_D$ be the associated diagonal matrix, and $\mA_E = \mA-\mA_D$.  If $\mA_D$ is non-singular and spectral norm $\rho:=\|\mA_D^{-1} \mA_E\|_2<1$ then
    $$\frac{|\det(\mA)-\det(\mA_D)|}{|\det(\mA_D)|}\le c\rho e^{c\rho}, \text{ where } c = -d\ln(1-\rho)$$
    Moreover, if $c\rho<1$, $\frac{|\det(\mA)-\det(\mA_D)|}{|\det(\mA_D)|}\le \frac{7}{4}c\rho$.
\end{theorem}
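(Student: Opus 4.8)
The plan is to reduce the estimate to a scalar bound on $\det(\mI+\mM)-1$ for $\mM := \mA_D^{-1}\mA_E$, and then exploit a structural vanishing of the linear term. First I would factor out the diagonal part: since $\mA_D$ is nonsingular, $\mA = \mA_D(\mI + \mA_D^{-1}\mA_E)$, so multiplicativity of the determinant gives $\det(\mA) = \det(\mA_D)\det(\mI+\mM)$ and hence
$$\frac{|\det(\mA)-\det(\mA_D)|}{|\det(\mA_D)|} = |\det(\mI+\mM) - 1|,$$
where $\|\mM\|_2 = \rho < 1$ by hypothesis.

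Next I would pass to logarithms. Because $\rho<1$, the Mercator series for the matrix logarithm converges and the identity $\det(\exp B)=\exp(\Tr B)$ yields the trace--log expansion
$$\log\det(\mI+\mM) = \Tr\log(\mI+\mM) = \sum_{k\ge 1}\frac{(-1)^{k+1}}{k}\,\Tr(\mM^k).$$
Here lies the key observation driving the $\rho$ (rather than constant) gain: $\mM = \mA_D^{-1}\mA_E$ is a diagonal matrix times the \emph{off-diagonal} matrix $\mA_E$, so $\mM$ has zero diagonal and the $k=1$ term $\Tr(\mM)$ vanishes. The series therefore effectively starts at $k=2$.

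I would then bound each trace via eigenvalues: writing $\lambda_i$ for the eigenvalues of $\mM$ with multiplicity (justified through the Schur form when $\mM$ is not diagonalizable), $|\lambda_i|\le\|\mM\|_2=\rho$, so $|\Tr(\mM^k)| = |\sum_i\lambda_i^k|\le d\rho^k$. Summing from $k=2$ and comparing termwise with $-\log(1-\rho)=\sum_{k\ge1}\rho^k/k$ (using $1/k\le 1/(k-1)$ after a shift) gives
$$\bigl|\log\det(\mI+\mM)\bigr| \le d\sum_{k\ge 2}\frac{\rho^k}{k} \le d\rho\sum_{k\ge 1}\frac{\rho^k}{k} = -d\rho\log(1-\rho) = c\rho.$$
Setting $z:=\log\det(\mI+\mM)$ with $|z|\le c\rho$, the elementary inequality $|e^z-1|\le |z|e^{|z|}$ delivers the first claim $c\rho\,e^{c\rho}$. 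For the refined bound I would instead use $|e^z-1|\le e^{|z|}-1\le (e-1)|z|$, valid because $(e^t-1)/t$ is increasing on $(0,1)$ with supremum $e-1<\tfrac{7}{4}$; this gives $\tfrac{7}{4}c\rho$ whenever $c\rho<1$.

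The main obstacle is recognizing and justifying that the linear term drops out. Without $\Tr(\mM)=0$, the naive series bound only yields $|z|\le c$ and hence the much weaker $c\,e^{c}$, which fails to vanish as the perturbation shrinks; the vanishing is precisely what upgrades the estimate to first order in $\rho$. The remaining care points are routine: convergence of the trace--log series, secured by $\rho<1$, and the passage to eigenvalues via Schur decomposition so that $|\Tr(\mM^k)|\le d\rho^k$ holds even for non-diagonalizable $\mM$.
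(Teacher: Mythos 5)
Your proposal is correct. Note, however, that the paper itself gives no proof of this statement: it is imported wholesale from the cited reference (Ipsen and Lee, ``Determinant Approximations''), so there is no internal argument to compare against. Your reconstruction is essentially the standard one from that reference: factor $\mA = \mA_D(\mathbb{I}+\mM)$ with $\mM = \mA_D^{-1}\mA_E$, use $\det(\mathbb{I}+\mM)=\exp\bigl(\sum_{k\ge1}\tfrac{(-1)^{k+1}}{k}\Tr(\mM^k)\bigr)$, observe that $\Tr(\mM)=0$ because $\mM$ inherits a zero diagonal from $\mA_E$, and bound the tail by $d\sum_{k\ge2}\rho^k/k\le -d\rho\ln(1-\rho)=c\rho$; the two claims then follow from $|e^z-1|\le|z|e^{|z|}$ and, when $|z|\le c\rho<1$, from $|e^z-1|\le e^{|z|}-1\le(e-1)|z|\le\tfrac{7}{4}c\rho$. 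All the delicate points (vanishing of the linear term, spectral-radius bound $|\Tr(\mM^k)|\le d\rho^k$ via the Schur form, convergence of the Mercator series for $\rho<1$) are handled correctly, so you have in effect supplied the proof the paper omits.
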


\begin{theorem}[\citep{doi:10.1137/070704770}]\label{thm:appro_det2}
    Let $\mA$ and $\mE$ be $d\times d$ matrices. If A is nonsingular, then
    $$\frac{|\det(\mA+\mE)-\det(\mA)|}{|\det(\mA)|}\le\left(1+\kappa\frac{\|\mE\|_2}{\|\mA\|_2}\right)^d-1$$
    where $\kappa = \|\mA\|_2\|\mA^{-1}\|_2$ and $\|\cdot \|_2$ is the spectral norm.
\end{theorem}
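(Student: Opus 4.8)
The plan is to reduce the two-matrix estimate to a perturbation bound for a single matrix and then control that determinant through the singular values of principal submatrices. Since $\mA$ is nonsingular, I would first factor out $\mA$:
\[
\det(\mA+\mE)=\det\!\big(\mA(\mI+\mA^{-1}\mE)\big)=\det(\mA)\,\det(\mI+\mA^{-1}\mE).
\]
Writing $\mB:=\mA^{-1}\mE$, the ratio on the left-hand side is exactly $|\det(\mI+\mB)-1|$, so the claim is equivalent to $|\det(\mI+\mB)-1|\le(1+\beta)^d-1$ with $\beta:=\kappa\,\|\mE\|_2/\|\mA\|_2$. Submultiplicativity of the spectral norm together with $\|\mA^{-1}\|_2=\kappa/\|\mA\|_2$ gives $\|\mB\|_2\le\|\mA^{-1}\|_2\,\|\mE\|_2=\beta$, so it suffices to prove $|\det(\mI+\mB)-1|\le(1+\|\mB\|_2)^d-1$.

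For the core estimate I would expand $\det(\mI+\mB)$ over principal minors, using the standard fact that the coefficients of the characteristic polynomial are sums of principal minors:
\[
\det(\mI+\mB)=\sum_{k=0}^{d}\ \sum_{S\subseteq[d]:\,|S|=k}\det(\mB_{S,S}),
\]
where $\mB_{S,S}$ is the $k\times k$ submatrix of $\mB$ obtained by keeping only the rows and columns indexed by $S$. The $k=0$ term equals $1$, so
\[
\big|\det(\mI+\mB)-1\big|\le\sum_{k=1}^{d}\ \sum_{|S|=k}\big|\det(\mB_{S,S})\big|.
\]

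Each submatrix is a compression $\mB_{S,S}=R_S\mB R_S^\intercal$, where $R_S$ is the row-selection matrix whose rows are the standard basis vectors indexed by $S$; hence $\|\mB_{S,S}\|_2\le\|R_S\|_2\,\|\mB\|_2\,\|R_S^\intercal\|_2=\|\mB\|_2$. Because the absolute value of a square matrix's determinant equals the product of its singular values, and every singular value is at most the largest one, $\big|\det(\mB_{S,S})\big|\le\|\mB_{S,S}\|_2^{\,k}\le\|\mB\|_2^{\,k}$. Summing over the $\binom{d}{k}$ subsets of each size gives
\[
\big|\det(\mI+\mB)-1\big|\le\sum_{k=1}^{d}\binom{d}{k}\|\mB\|_2^{\,k}=(1+\|\mB\|_2)^d-1\le(1+\beta)^d-1,
\]
which is the desired inequality.

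The only step with any subtlety is the submatrix bound $|\det(\mB_{S,S})|\le\|\mB\|_2^{|S|}$, which I would justify by isolating the two linear-algebra facts it rests on: that $|\det M|=\prod_i\sigma_i(M)$ for square $M$ (immediate from the singular value decomposition), and that compression by a matrix with orthonormal rows cannot increase the spectral norm. Neither is deep, so I expect no real obstacle; the remainder is the binomial bookkeeping that reassembles $(1+\|\mB\|_2)^d$. An alternative route would replace the principal-minor expansion with $\det(\mI+\mB)=\prod_i(1+\lambda_i(\mB))$, but handling possibly complex eigenvalues and relating $|\lambda_i(\mB)|$ to $\|\mB\|_2$ is messier than the singular-value argument, so I would prefer the route above.
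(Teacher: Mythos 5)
Your proof is correct, and it is worth noting that the paper itself contains no proof of this statement to compare against: the theorem is imported verbatim as a known preliminary result from \citet{doi:10.1137/070704770}, so your argument supplies a self-contained derivation where the paper only cites. Every step checks out: the factorization $\det(\mA+\mE)=\det(\mA)\det(\mI+\mA^{-1}\mE)$ is valid by nonsingularity, the reduction of $\beta=\kappa\|\mE\|_2/\|\mA\|_2$ to $\|\mA^{-1}\|_2\|\mE\|_2$ and the bound $\|\mB\|_2\le\beta$ are immediate, the principal-minor expansion $\det(\mI+\mB)=\sum_{S\subseteq[d]}\det(\mB_{S,S})$ is the standard characteristic-polynomial identity evaluated at $t=1$, and the two linear-algebra facts you isolate (determinant equals product of singular values; compression by a matrix with orthonormal rows does not increase the spectral norm) are exactly what is needed to get $|\det(\mB_{S,S})|\le\|\mB\|_2^{|S|}$, after which the binomial theorem finishes. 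Your route also differs in mechanics from the original source: Ipsen and Rehman first prove an \emph{absolute} perturbation bound, expanding $\det(\mA+\mE)$ by multilinearity over choices of columns from $\mA$ versus $\mE$ and expressing the result through elementary symmetric functions of the singular values of $\mA$, and only then divide by $|\det(\mA)|$ to obtain the relative bound. By factoring $\mA$ out at the start, you collapse all of that singular-value bookkeeping into the single quantity $\|\mB\|_2$, which makes the argument shorter; the price is that you do not recover their sharper intermediate absolute bound, but for the relative inequality stated here nothing is lost.
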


\begin{lemma}\label{lem:non_perm}
    Given $\mA, \mB\in \R^{d\times d}$, if $\mB\neq \mathbb{I}$ is column stochastic and $\mA, \mB\mA$ are column diagonally maximal, $\mB$ is not a permutation matrix.
\end{lemma}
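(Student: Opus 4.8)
The plan is to argue by contradiction: suppose toward a contradiction that $\mB$ is a permutation matrix. Since $\mB\neq\mathbb{I}$, the underlying permutation is nontrivial. The first step is to record the elementary fact that left-multiplication by a permutation matrix simply permutes rows: there is a non-identity permutation $\rho$ of $[d]$ with $(\mB\mA)(i,j)=\mA(\rho(i),j)$ for all $i,j$, and $\rho=\mathrm{id}$ if and only if $\mB=\mathbb{I}$. Consequently, for each fixed column index $i$ the column $(\mB\mA)(\cdot,i)$ is a reordering of the column $\mA(\cdot,i)$; in particular the two columns contain the same multiset of entries and hence the same maximum value.

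Next I would use the two diagonal-maximality hypotheses to locate that common maximum. Column diagonal maximality of $\mA$ places the maximum of column $i$ at the diagonal, namely $\mA(i,i)$; column diagonal maximality of $\mB\mA$ places the maximum of its $i$-th column at its own diagonal entry $(\mB\mA)(i,i)=\mA(\rho(i),i)$. Equating the two maxima gives $\mA(\rho(i),i)=\mA(i,i)$ for every $i$. If the diagonal entry is the unique maximizer of its column, equality of values forces equality of positions, i.e. $\rho(i)=i$ for all $i$; hence $\rho=\mathrm{id}$ and $\mB=\mathbb{I}$, contradicting $\mB\neq\mathbb{I}$. This contradiction shows $\mB$ cannot be a permutation matrix.

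The step I expect to be the main obstacle is the final inference from $\mA(\rho(i),i)=\mA(i,i)$ to $\rho(i)=i$, which requires the diagonal to \emph{strictly} dominate the remaining entries of its column. Under the merely weak condition $\mA(j,i)\le\mA(i,i)$, ties are possible and the conclusion can fail even when $\mA$ is invertible and column stochastic: for the $3$-cycle permutation $\mB$, the matrix $\mA=\tfrac12\left(\begin{smallmatrix}1&1&0\\0&1&1\\1&0&1\end{smallmatrix}\right)$ makes both $\mA$ and $\mB\mA$ weakly column-diagonally maximal while $\mB$ is a nontrivial permutation. Thus I would carry out the argument under the nondegenerate (strict) reading of diagonal maximality—where each diagonal entry strictly exceeds the other entries in its column—and make explicit in the write-up that it is this strictness, not invertibility alone, that excludes the permutation case, since the weak-inequality regime genuinely admits permutation solutions.
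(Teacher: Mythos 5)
Your argument, read under the strict interpretation of ``column diagonally maximal,'' is correct, and its mechanism is essentially the paper's own: the paper fixes a single index $\iota$ with $\sigma(\iota)\neq\iota$ and compares the two entries $(\mB\mA)(\iota,\iota)=\mA(\sigma(\iota),\iota)$ and $(\mB\mA)(\sigma^{-1}(\iota),\iota)=\mA(\iota,\iota)$ of column $\iota$ of $\mB\mA$, whereas you equate the (permutation-invariant) column maxima of $\mA$ and $\mB\mA$ for every index; both arguments come down to excluding the tie $\mA(\sigma(\iota),\iota)=\mA(\iota,\iota)$.

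The more important point is that the obstacle you isolate is not a weakness of your write-up but a genuine gap in the paper's own proof. The paper asserts $\mA(\sigma(\iota),\iota)<\mA(\iota,\iota)$ ``because $\mA$ is column diagonally maximal,'' yet diagonal maximality is defined throughout the paper (in the Blackwell ordering and in $\mathcal{Q}_{\text{reg}}$) by the weak inequality $\mQ(i,j)\le\mQ(i,i)$; under that definition only $\le$ follows, and then no contradiction with the weak maximality of $\mB\mA$ arises. Your counterexample is valid provided you take the $3$-cycle $\rho(1)=3$, $\rho(2)=1$, $\rho(3)=2$:
\begin{equation*}
\mA=\frac12\begin{pmatrix}1&1&0\\0&1&1\\1&0&1\end{pmatrix},\qquad
\mB=\begin{pmatrix}0&0&1\\1&0&0\\0&1&0\end{pmatrix},\qquad
\mB\mA=\frac12\begin{pmatrix}1&0&1\\1&1&0\\0&1&1\end{pmatrix},
\end{equation*}
where every column of $\mA$ and of $\mB\mA$ attains its maximum $1/2$ on the diagonal, $\mA$ is column stochastic and invertible, and $\mB\neq\mathbb{I}$ is a permutation matrix; note the opposite $3$-cycle does not work (it places $0$ on the diagonal of $\mB\mA$), so the write-up should specify the cycle. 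Consequently the lemma as stated is false under the paper's weak definition and true only under strictness (or an added tie-breaking hypothesis), exactly as you say. This is consequential downstream: \cref{prop:blackwell_ordering} and \cref{lem:preserve_blackwell} invoke the lemma for matrices in $\mathcal{Q}_{\text{reg}}$, which is defined with weak inequalities. Taking uniform true-data marginals, $\mQ=\frac13\mA^\intercal$ and $\mQ'=\frac13(\mB\mA)^\intercal$ are both in $\mathcal{Q}_{\text{reg}}$ and Blackwell-dominate each other (via $\mT=\mB$ and $\mT'=\mB^{-1}$), so asymmetry fails; and since $|\det(\mB)|=1$ the two have equal Gram determinant scores, so the strict inequality claimed in \cref{lem:preserve_blackwell} also fails. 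A repair is to require strict diagonal maximality in $\mathcal{Q}_{\text{reg}}$, or to adopt the paper's footnoted alternative of excluding permutation matrices $\mT$ directly.
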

\begin{proof}[Proof of \cref{lem:non_perm}]
    Suppose not and there exists a permutation $\sigma:[d]\to [d]$ and $\iota\in [d]$ so that $B(i,j) = \mathbf{1}[j = \sigma(i)]$ and $\sigma(\iota)\neq \iota$.  Because $\mA$ is column diagonally maximal
    $$(\mB\mA)(\iota, \iota) = \sum_j \mB(\iota, j)\mA(j,\iota) = \mA(\sigma(\iota), \iota)< \mA(\iota,\iota).$$
    Additionally,
    $$(\mB\mA)(\sigma^{-1}(\iota), \iota) = \mB(\sigma^{-1}(\iota), \iota)\mA(\iota, \iota) = \mA(\iota, \iota)>(\mB\mA)(\iota, \iota).$$
    Therefore, $\mB\mA$ is not column diagonally maximal which is a contradiction.
\end{proof}

Now we introduce kernel.
\begin{definition}
A function $K:\mathcal{Y}\times\mathcal{Y}\to \R$ is \emph{positive definite kernel} if for all $\{y_1,\dots,y_m\}\subseteq \mathcal{Y}$, the matrix $[K(y_i,y_j)]_{ij}\in \R^{m\times m}$ is symmetric positive semi definite.  Additionally, it is \emph{strictly positive definite} if the matrix is positive definite.
\end{definition}

By Moore-Aronszajn theorem~\citep{aronszajn1950theory}, given a positive definite kernel $K$, there exists a Hilbert space $\mathcal{H}$ known as a reproducing kernel Hilbert space so that for any $y\in \mathcal{Y}$, $K(\cdot, y)\in \mathcal{H}$ and for all $h\in \mathcal{H}$, $h(y) = \langle h, K(y,\cdot)\rangle$.  This allows us to think of a kernel defines a feature map $\phi: y\mapsto K(\cdot, x)\in \mathcal{H}$ where the inner product in the embedded space reduces to kernel evaluation, because $\langle K(\cdot, y), K(\cdot, y')\rangle = K(y,y')$


Moreover, given a measurable kernel, we can define the \emph{kernel mean embedding}~\citep{berlinet2011reproducing} of probability measures on $\mathcal{Y}$, $P\in \Delta(\mathcal{Y})$, into $\mathcal{H}$ where
$$\phi(P):=\int K(\cdot, y)dP(y) = \E_{y\sim P}[\phi(y)].$$ Here we slightly abuse the notations, and note that $\phi$ is linear in $P$ by linearity of integration.  We can further extend this to signed measures $\phi(\mu) := \int K(\cdot,y)d\mu(y)$.  Finally, a kernel $K$ is \emph{integrally strictly positive definite} if the $\iint_\mathcal{Y}K(y,y')d\mu(y)d\mu(y')>0$ for all finite non-zero signed measures $\mu$.

\section{Proofs and Details in \Cref{sec:model}}\label{app:model}
We show that the reliability orderings are well-defined ordering.  Formally, a binary relationship $\succ$ on $\Omega$ is a \emph{strict partial order} if it satisfies the following conditions for all $a, b, c\in \Omega$
\begin{enumerate}
    \item  anti-reflexive: no element is larger than itself
    \item asymmetry: if $a\succ b$ then not $b\succ a$
    \item Transitivity: if $a\succ b$ and $b \succ c$, then $a \succ c$.
\end{enumerate}


Next, we show that the reliability orderings defined in \cref{sec:model} form a strict partial order over reports, given a fixed true data.
\begin{proposition}\label{prop:exact_ordering}
For any $\vx\in \mathcal{X}^N$, the exact match ordering $\succ^\vx_{\exact}$ is a strict partial order on all $\hat{\vx}$ and $\hat{\vx}'\in \mathcal{X}^N$.
\end{proposition}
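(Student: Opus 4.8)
The plan is to verify the three defining properties of a strict partial order directly from the definition $\hat{\vx}\succ_{\exact}^{\vx}\hat{\vx}'$ iff $\hat{\vx} = \vx$ and $\hat{\vx}'\neq \vx$. The key observation is that in this relation the ``greater'' argument is forced to equal the fixed reference $\vx$, while the ``lesser'' argument is forced to differ from it; these two conditions are mutually exclusive for any single report in $\mathcal{X}^N$, so each axiom collapses to a simple logical contradiction.

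First I would check anti-reflexivity and asymmetry together, since both exploit the same incompatibility. For anti-reflexivity, $\hat{\vx}\succ_{\exact}^{\vx}\hat{\vx}$ would demand $\hat{\vx} = \vx$ and simultaneously $\hat{\vx}\neq \vx$, which is impossible, so no element dominates itself. For asymmetry, I would assume $\hat{\vx}\succ_{\exact}^{\vx}\hat{\vx}'$, which yields $\hat{\vx}=\vx$ and $\hat{\vx}'\neq\vx$; the reverse relation $\hat{\vx}'\succ_{\exact}^{\vx}\hat{\vx}$ would require $\hat{\vx}'=\vx$, directly contradicting $\hat{\vx}'\neq\vx$, so the reverse cannot hold.

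Finally, transitivity holds vacuously. If $\hat{\vx}\succ_{\exact}^{\vx}\hat{\vx}'$ and $\hat{\vx}'\succ_{\exact}^{\vx}\hat{\vx}''$, then the first premise gives $\hat{\vx}'\neq\vx$ while the second premise gives $\hat{\vx}'=\vx$, a contradiction. Hence the hypothesis of transitivity can never be satisfied, and the implication is trivially true.

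There is essentially no analytic obstacle here; the only point worth flagging is that transitivity is satisfied vacuously rather than by exhibiting a genuine three-term chain. This reflects the structure of the order, which places the single top element $\vx$ above every report distinct from it while leaving all other pairs incomparable, so no chain of length three can exist.
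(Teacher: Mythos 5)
Your proposal is correct and matches the paper's own argument: the paper likewise treats anti-reflexivity and asymmetry as immediate from the incompatibility of $\hat{\vx}=\vx$ with $\hat{\vx}\neq\vx$, and establishes transitivity vacuously by noting that $\hat{\vx}\succ_{\exact}^{\vx}\hat{\vx}'$ forces $\hat{\vx}'\neq\vx$, so $\hat{\vx}'$ can never dominate a third report. Your write-up simply spells out the ``trivial'' cases the paper leaves implicit.
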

\begin{proof}
The first two are trivial.  For transitivity, if $\hat{\vx}_1\succ_{\exact}^\vx \hat{\vx}_2$, then $\hat{\vx}_2\neq \vx$ so there is no $\hat{\vx}_3$ with $\hat{\vx}_2\succ_{\exact}^\vx \hat{\vx}_3$.
\end{proof}

The following shows that Blackwell dominant ordering is a strict partial order over subsets of reports under the invertible and diagonally maximal conditions.  Those conditions are essential. If the misreport matrices are not invertible, the Blackwell ordering may fail to be asymmetric: it is possible for two distinct reports to Blackwell-dominate each other, violating the strictness of the relation. Similarly, if the misreport matrices are not diagonally maximal, the ordering also fails asymmetry via non-trivial permutation.

\begin{proposition}\label{prop:blackwell_ordering}
For any $\vx\in \mathcal{X}^N$, Blackwell dominant ordering $\succ^\vx_{\blackwell}$ is a strict partial order on all $\hat{\vx}$ and $\hat{\vx}'\in \mathcal{X}^N$ so that the associated misreport matrices $\mQ, \mQ'\in \mathcal{Q}_{\text{reg}}$ are invertible and diagonally maximal.
\end{proposition}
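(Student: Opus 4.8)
The plan is to check the three defining properties of a strict partial order for $\succ^\vx_{\blackwell}$ on reports whose misreport matrices lie in $\mathcal{Q}_{\text{reg}}$, using invertibility of the conditional matrices together with \cref{lem:non_perm}. Throughout, the marginal $\mQ_\vx$ is common to all reports because $\vx$ is fixed. First I would record a reduction to the conditional matrices: since $\mQ\in\mathcal{Q}_{\text{reg}}$ is invertible and $\mQ^\intercal=\mQ_{\hat{\vx}\mid\vx}\mQ_\vx$ by \cref{eq:m2j}, both factors are invertible, so $\vq_\vx(i)>0$ for every $i$ and $\mQ_{\hat{\vx}\mid\vx}\in GL_d$. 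Moreover $\mQ(i,j)=\vq_\vx(i)\,\mQ_{\hat{\vx}\mid\vx}(j,i)$, so dividing by $\vq_\vx(i)>0$ shows that row diagonal maximality $\mQ(i,j)\le\mQ(i,i)$ is exactly the column diagonal maximality $\mQ_{\hat{\vx}\mid\vx}(j,i)\le\mQ_{\hat{\vx}\mid\vx}(i,i)$. This turns every hypothesis into a statement about the invertible, column-diagonally-maximal matrices $\mQ_{\hat{\vx}\mid\vx}$ and $\mQ_{\hat{\vx}'\mid\vx}=\mT\mQ_{\hat{\vx}\mid\vx}$, which is precisely the form needed to feed into \cref{lem:non_perm}.

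The second ingredient is a self-contained sub-claim: if $\mA,\mB$ are column stochastic and $\mA\mB=\mathbb{I}$, then $\mA$ is a permutation matrix. I would argue that each column $\vb_j$ of $\mB$ is a probability vector with $\mA\vb_j=\ve_j$; since $\mA\vb_j$ is a convex combination of the (probability-vector) columns of $\mA$ and $\ve_j$ is an extreme point of the simplex, every column $\mA_{\cdot k}$ with $\vb_j(k)>0$ must equal $\ve_j$. Invertibility of $\mB$ forces every index $k$ into the support of some $\vb_j$, so each column of $\mA$ is a standard basis vector; distinctness (again from invertibility) makes $\mA$ a permutation.

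With these in hand the three properties are short. For anti-reflexivity, $\hat{\vx}\succ^\vx_{\blackwell}\hat{\vx}$ would give $\mT\neq\mathbb{I}$ with $\mT\mQ_{\hat{\vx}\mid\vx}=\mQ_{\hat{\vx}\mid\vx}$, and right-multiplying by $\mQ_{\hat{\vx}\mid\vx}^{-1}$ yields $\mT=\mathbb{I}$, a contradiction. For asymmetry, witnesses $\mT,\mT'\neq\mathbb{I}$ for $\hat{\vx}\succ^\vx_{\blackwell}\hat{\vx}'$ and $\hat{\vx}'\succ^\vx_{\blackwell}\hat{\vx}$ compose to $\mT'\mT\mQ_{\hat{\vx}\mid\vx}=\mQ_{\hat{\vx}\mid\vx}$, hence $\mT'\mT=\mathbb{I}$; the sub-claim makes $\mT$ a permutation, while \cref{lem:non_perm} (applied with the column-diagonally-maximal $\mQ_{\hat{\vx}\mid\vx}$ and $\mT\mQ_{\hat{\vx}\mid\vx}$, and $\mT\neq\mathbb{I}$) says $\mT$ is not a permutation, a contradiction. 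For transitivity, given $\hat{\vx}_1\succ^\vx_{\blackwell}\hat{\vx}_2\succ^\vx_{\blackwell}\hat{\vx}_3$ with witnesses $\mT_{12},\mT_{23}$, the product $\mT_{13}:=\mT_{23}\mT_{12}$ is column stochastic and satisfies $\mT_{13}\mQ_{\hat{\vx}_1\mid\vx}=\mQ_{\hat{\vx}_3\mid\vx}$; the only thing to verify is $\mT_{13}\neq\mathbb{I}$, and if it equaled $\mathbb{I}$ the sub-claim would make $\mT_{12}$ a permutation, again contradicting \cref{lem:non_perm}. Since $\mQ,\mQ_3\in\mathcal{Q}_{\text{reg}}$ by hypothesis, this gives $\hat{\vx}_1\succ^\vx_{\blackwell}\hat{\vx}_3$.

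I expect the sub-claim — that a column-stochastic matrix with a column-stochastic inverse must be a permutation — to be the crux; the extreme-point argument above is the cleanest route, and the one delicate point is to use invertibility of $\mB$ genuinely, so that the supports of its columns cover all coordinates. Everything else is routine matrix bookkeeping together with the already-established diagonal-maximality translation and \cref{lem:non_perm}.
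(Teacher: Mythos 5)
Your proof is correct and follows essentially the same route as the paper's: anti-reflexivity from invertibility of $\mQ_{\hat{\vx}\mid\vx}$, asymmetry by composing the two witnesses to get $\mT\mT'=\mathbb{I}$ and then combining the permutation fact with \cref{lem:non_perm}, and transitivity by composing witnesses, all after translating row diagonal maximality of $\mQ$ into column diagonal maximality of $\mQ_{\hat{\vx}\mid\vx}$. The only differences are refinements rather than a new approach: you prove the key sub-claim (a column-stochastic matrix with column-stochastic inverse is a permutation) from scratch via an extreme-point argument where the paper cites a reference, and you explicitly verify $\mT_{13}\neq\mathbb{I}$ in the transitivity step, a point the paper dismisses as trivial.
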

\begin{proof}
Suppose $\succ_{\blackwell}^\vx$ is not anti-reflective.  There exists $\hat{\vx}\succ^\vx_{\blackwell}\hat{\vx}$ with misreport matrix $\mQ$ and a column stochastic matrix $\mT\neq \mathbb{I}$ so that $$\mT \mQ_{\hat{\vx}\mid \vx} =  \mQ_{\hat{\vx}\mid \vx}.$$  Because $\mQ = (\mQ_{\hat{\vx}|\vx}\mQ_\vx)^\intercal$ is invertible, $\mQ_{\hat{\vx}\mid \vx}$ is also invertible and $\mT = \mathbb{I}$ which is a contradiction.

For asymmetry, if $\hat{\vx}\succ^\vx_{\blackwell}\hat{\vx}'$ and  $\hat{\vx}'\succ^\vx_{\blackwell}\hat{\vx}$, there exist column stochastic matrices $\mT$ and $\mT'$ so that
$$\mT \mQ_{\hat{\vx}\mid \vx} =  \mQ_{\hat{\vx}\mid \vx}'\text{ and }\mT' \mQ_{\hat{\vx}\mid \vx}' =  \mQ_{\hat{\vx}\mid \vx}.$$
Because $\mQ, \mQ'$ are invertible, $\mT\mT' = \mathbb{I}$, and both $\mT$ and $\mT'$ are permutation matrices.~\citet{2392982}  However, because $\mQ$ and $\mQ'$ are (row) diagonally maximal, $\mQ_{\hat{\vx}\mid \vx}$ and $\mQ_{\hat{\vx}\mid \vx}'$ are column diagonally maximal.  Therefore by \cref{lem:non_perm}, $\mT = \mT' = \mathbb{I}$  which is a contradiction.

Transitivity is trivial, because the product of column stochastic matrices is still stochastic.
\end{proof}
\begin{proposition}\label{prop:hamming_ordering}
For any $\vx\in \mathcal{X}^N$, $\dist$ ordering $\succ^\vx_{\dist}$ is a strict partial order on all $\hat{\vx}$ and $\hat{\vx}'\in \mathcal{X}^N$
\end{proposition}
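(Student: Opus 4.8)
The plan is to recognize that $\succ^\vx_{\dist}$ is simply the pullback of the strict inequality relation $<$ on $\mathbb{R}$ along a real-valued cost function, and that any such pullback automatically satisfies the three strict-partial-order axioms. First I would fix the true data $\vx$ and define $f(\hat{\vx}) := \sum_{n=1}^N \dist(\hat{x}_n, x_n) \in \mathbb{R}$ for every $\hat{\vx} \in \mathcal{X}^N$; this is well-defined because $\dist$ takes real values. By the definition of the $\dist$ ordering in \cref{sec:ordering}, we have $\hat{\vx} \succ^\vx_{\dist} \hat{\vx}'$ if and only if $f(\hat{\vx}) < f(\hat{\vx}')$, so the entire relation is encoded by comparing the scalars $f(\hat{\vx})$ and $f(\hat{\vx}')$.

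With this reformulation the three conditions reduce to elementary properties of $<$ on $\mathbb{R}$. Anti-reflexivity holds because $f(\hat{\vx}) < f(\hat{\vx})$ is never true. Asymmetry holds because $f(\hat{\vx}) < f(\hat{\vx}')$ rules out $f(\hat{\vx}') < f(\hat{\vx})$. Transitivity holds because $f(\hat{\vx}) < f(\hat{\vx}')$ together with $f(\hat{\vx}') < f(\hat{\vx}'')$ gives $f(\hat{\vx}) < f(\hat{\vx}'')$ by transitivity of $<$ on the reals. Each is a one-line consequence, so no separate lemma is needed.

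There is essentially no obstacle here, which is itself the informative point: the argument uses only that $f$ is a well-defined real-valued map, and the semimetric conditions on $\dist$ (symmetry, $\dist(x,x)=0$, and strict positivity off the diagonal) are not invoked at all for the partial-order structure—they instead become relevant in the refinement statement of \cref{prop:comparison} and in the Gram-determinant preservation results. For completeness I would note that the same pullback argument covers the $\alpha$-$\dist$ variant with $\alpha \in (0,1]$, with the one difference that there the non-negativity $f \ge 0$ is genuinely used: e.g.\ transitivity then follows from $f(\hat{\vx}) < \alpha f(\hat{\vx}') < \alpha^2 f(\hat{\vx}'') \le \alpha f(\hat{\vx}'')$, using $\alpha^2 \le \alpha$ and $f(\hat{\vx}'') \ge 0$.
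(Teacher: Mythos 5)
Your proof is correct and takes essentially the same route as the paper's: the paper likewise reduces the relation to comparing the real-valued aggregate distance $\sum_{n}\dist(\hat{x}_n, x_n)$, declaring anti-reflexivity and asymmetry trivial and deriving transitivity from transitivity of $<$ on $\mathbb{R}$. Your extra observations---that the semimetric axioms play no role here, and that the $\alpha$-$\dist$ variant also works using $f\ge 0$ and $\alpha^2\le\alpha$---go slightly beyond the paper's proof but are also correct.
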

\begin{proof}
The first two are trivial. For transitivity, given $\vx, \vx'$ let $\dist(\vx, \vx') := \sum_n \dist(x_n, x_n')$.  If $\hat{\vx}_1\succ^\vx_{\dist} \hat{\vx}_2$ and $\hat{\vx}_2\succ^\vx_{\dist} \hat{\vx}_3$ then $\dist(\vx, \hat{\vx}_1)<\dist(\vx, \hat{\vx}_2)$ and $\dist(\vx, \hat{\vx}_2)<\dist(\vx, \hat{\vx}_3)$.  Therefore, $\hat{\vx}_1\succ^\vx_{\dist} \hat{\vx}_3$.
\end{proof}


\subsection{Proof of \cref{prop:comparison}}

\begin{proof}[Proof of \cref{prop:comparison}]
Given $\vx, \hat{\vx}$, and $\hat{\vx}'$, if $\hat{\vx}\succ_{\exact}^\vx \hat{\vx}'$, $ \mQ_{\hat{\vx}\mid \vx} = \mathbb{I}$ and $ \mQ_{\hat{\vx}\mid \vx}'\neq \mathbb{I}$.  If we set a column stochastic $\mT =  \mQ_{\hat{\vx}\mid \vx}'$, $ \mQ_{\hat{\vx}\mid \vx}' = \mT  \mQ_{\hat{\vx}\mid \vx}$.  Therefore, $\hat{\vx}\succ_{\blackwell}^\vx \hat{\vx}'$.

If $\hat{\vx}\succ_{\blackwell}^\vx \hat{\vx}'$, there is $\mT\neq \mathbb{I}$ so that $\mQ_{\hat{\vx}\mid \vx}' = \mT \mQ_{\hat{\vx}\mid \vx}$.  With~\cref{eq:m2j} we have $\mQ' = (\mQ_{\hat{\vx}|\vx}'\mQ_\vx)^\intercal = (\mT\mQ_{\hat{\vx}|\vx}\mQ_\vx)^\intercal = \mQ\mT^\intercal$, and
\begin{align*}
   \Tr(\mQ') =& \Tr(\mQ\mT^\intercal)=\sum_{i,j} \mQ(i,j)\mT(i,j)\\
    =& \sum_i \mQ(i,i)\mT(i,i)+ \sum_{i, j: i\neq j} \mQ(i,j)\mT(i,j)\\
    \le& \sum_i \mQ(i,i)\mT(i,i)+ \sum_{i, j: i\neq j} \mQ(i,i)\mT(i,j)\tag{$\mQ$ is diagonally maximal and $\mT\neq \mathbb{I}$}\\
    =& \sum_i \mQ(i,i) = \Tr(\mQ)\tag{$\mT$ is column stochastic}
\end{align*}
Therefore, $\hat{\vx}\succ^\vx_{\hamming}\hat{\vx}'$.
The third on is straightforward by definition of refinement.
\end{proof}

\section{Proofs and Details in \cref{sec:impossible}}\label{app:impossible}
We discuss the connection between detail-free setting and partial knowledge setting.  First note that as the order of data is not relevant, given $\hat{\vx}, \vy$ of size $N$, it is sufficient to consider the histogram of $\mR\in \R^{|\mathcal{Y}|\times|\mathcal{X}|}$ and $N$ where
$$\mR(y,x) = \frac{1}{N}\sum_{n} \mathbf{1}[y_n = y, \hat{x}_n = x].$$  By symmetrization, we can write a reliability score in detail-free setting as a stochastic function on the histogram $\mR$ and $N$ that have the same expected score.~\citep{DBLP:journals/corr/abs-2106-03176}  The expectation of $\mR$ over the randomness of experiment is $\E[\mR] = \mP\mQ$.  This leads to two key implications.  First when the data size $N$ is large, $\mR$ converges to $\mP\mQ$ so that the expectation of any smooth reliability score
$$\E[S(\mR)]\to S(\E[\mR]) =  S(\mP\mQ).$$
Second, if we consider any \emph{empirical risk-based scores} so that has $\ell:\mathcal{X}\times\mathcal{Y}\to \R$ so that
$$S(\hat{\vx}, \vy) = \frac{1}{N}\sum_n \ell(\hat{x}_n, y_n).$$
This includes common metrics like empirical risk and log-likelihood function.  We can rewrite it as a linear function of $\mR$
\begin{align*}
    S(\hat{\vx}, \vy) =& \frac{1}{N}\sum_n \ell(\hat{x}_n, y_n)\\
    =& \frac{1}{N}\sum_{x,y}\sum_n \mathbf{1}[\hat{x}_n = x, y_n = y]\ell(x,y)\\
    =& \sum_{x,y}\mR(y,x)\ell(x, y)
\end{align*}
which is simply the Frobenius inner product between $\mR$ and the score matrix based on $\ell$.

Finally, as \cref{def:gramdet}, our Gram determinant score is also a function of $\mP\mQ$.  Consequently, the impossibility results presented in \cref{sec:impossible} for the partial knowledge setting apply not only to the Gram determinant score but also to any empirical risk-based score.

We provide the proof of~\cref{prop:impossible} consists of three parts: exact, Blackwell, and Hamming and other dist orderings.

\paragraph{Proof of Exact orderings in \cref{prop:impossible}}
For the exact ordering setting, we motivate the independence condition on experiments and non-permutation condition on misreport matrix.  First we show that we need additional condition on experiments $\mathcal{P}$, even restricting to $\mathcal{Q}_{\text{nonperm}}$.  Second, we show that $\mathcal{Q}_{\text{nonperm}}$ is the maximal set of misreport matrices to have a reliability score that respects the exact match ordering on $\mathcal{P}_{\text{indep}}$.

Both parts use the idea that if two labels in $\mathcal{X}$ induce the same distribution over observations, it becomes impossible to determine whether the reports agree with the true data.

For the first part, if $\mP$ consists of identical columns, we can find a diagonal matrix $\mQ_\vx$ and a doubly stochastic $\mQ_{\hat{\vx}|\vx}\neq \mathbb{I}$ so that $\mP(\mQ_{\hat{\vx}|\vx}\mQ_\vx)^\intercal = \mP\mQ_{\vx}\mQ_{\hat{\vx}|\vx}^\intercal = \mP\mQ_{\vx}$.  Hence, we can set $\vx, \hat{\vx}$ with misreport matrices $\mQ = (\mQ_{\hat{\vx}|\vx}\mQ_\vx)^\intercal$ so that ${\vx}\succ^\vx_{\exact}\hat{\vx}$, but have the same joint distribution between reports and observations.  Therefore, no score in the partial knowledge setting can distinguish them and preserves exact match ordering.

For the second part, because we can only observe the observations and reports, it would be impossible to always score true data over relabeled reports (permutation).
Suppose not and there exists a score $S$ in partial knowledge setting that preserves all misreport matrices.
Given any $\mP\in \mathcal{P}_{\text{indep}}$, the uniform marginal distribution $\mQ_{\vx} := \frac{1}{d}\mathbb{I}$, and permutation $\mT\neq \mathbb{I}$, there exist $\vx$ and $\hat{\vx}$ so that the misreport matrix equals $\mT\mQ_{\vx} = \frac{1}{d}\mT$ and $\vx\succ^\vx_{\exact}\hat{\vx}$.  Because the joint distribution between reports and observations is $\frac{1}{d}\mP$ for $(\vx, \vy)$, and $\frac{1}{d}\mP\mT^\intercal$ for $(\hat{\vx},\vy)$, we have
$$S\left(\frac{1}{d}\mP\right)>S\left(\frac{1}{d}\mP\mT^\intercal\right).$$

Conversely, we can set an new experiment $\mP' = \mP\mT^\intercal$ and $\vx' = \hat{\vx}$ and $\hat{\vx}' = \vx$ so that the misreport matrix equals $\frac{1}{d}\mT^\intercal$ and $\vx'\succ^{\vx'}_{\exact}\hat{\vx}'$.  First, because $\mT$ is a permutation $\mP' = \mP\mT^\intercal\in \mathcal{P}_{\text{indep}}$ and the joint distributions becomes $\frac{1}{d}\mP' = \frac{1}{d}\mP\mT^\intercal$ for $(\vx', \vy')$ and $\frac{1}{d}\mP'\mT = \frac{1}{d}\mP\mT^\intercal\mT = \frac{1}{d}\mP$ for $(\hat{\vx}', \vy')$.  Therefore,
$$S\left(\frac{1}{d}\mP\mT^\intercal\right)> S\left(\frac{1}{d}\mP\right)$$
which is a contradiction.

\paragraph{Proof of Blackwell orderings in \cref{prop:impossible}}
For Blackwell ordering, we further show that the existence of \emph{any} linearly dependent experiment $\mP$ (i.e. columns of $\mP$ are linearly dependent) in $\mathcal{P}$ makes it impossible to preserve Blackwell ordering on $\mathcal{P}$ and $\mathcal{Q}_{\text{reg}}$. The $\mathcal{Q}_{\text{reg}}$ restriction rules out the possibility of improving data reliability through simple post-processing operations like (noisy) relabeling. In addition, recall that Blackwell ordering requires $\mathcal{Q}_{\text{reg}}$ to be a strict partial ordering.

The proof idea is similar to that of the exact ordering setting: it is impossible to detect misreporting when two labels induce identical observation distributions—i.e., when $\mP$ has identical columns. The main challenge in \cref{prop:impossible}, however, is to show that for any linearly dependent $\mP$ (which may not have identical columns), we can construct a misreport matrix $\mQ$ such that $\mP\mQ$ has identical columns.\footnote{
We require $\vv\in \mathbb{Q}^d$ to have rational coefficients to ensure the resulting $\mQ$ has rational coefficients to be a valid misreport matrix.}

    If we can find $\mP \in \mathcal{P}$, a misreport matrix $\mQ$, and column stochastic $\mT\neq \mathbb{I}$ with $\mP\mQ = \mP\mQ\mT^\intercal$, we have $\vx, \hat{\vx}, \hat{\vx}'$ with misreport matrices $\mQ$ and $\mQ \mT^\intercal$ so that $\hat{\vx}\succ^\vx_{\blackwell}\hat{\vx}'$, but have the same joint distribution between reports and observations.  Therefore, no score in the partial knowledge (and detail-free) setting can distinguish them and preserves the Blackwell ordering.

    Now we construct $\mP, \mQ$, and $\mT$.  By the condition in \cref{prop:impossible} there exists $\mP\in \mathcal{P}$ and $\vv\neq \vzero\in \mathbb{Q}^d$ so that $\mP\vv = \vzero$.  We decompose $\vv$ as $\vv = \vv_+-\vv_-$ where $\vv_+$ and $\vv_-$ are nonnegative and have disjoint support, so
    \begin{equation}\label{eq:impossible_indep1}
        \mP \vv_+ = \mP \vv_-
    \end{equation} and $\vv_+, \vv_-\neq \vzero$ because $\mP$ is a collection of distributions.   Let $\iota_+\in[d]$ be the index of the largest entry in $\vv_+$, and $\iota_-$ for $\vv_-$ similarly, breaking ties arbitrarily.   Note that $\iota_+\neq \iota_-$, because $\vv_+$ and $\vv_-$ have disjoint supports.  We first construct $\mA$ by replacing the $\iota_+$ column of the identity matrix $\mathbb{I}\in \R^d$ by $\vv_+$ and $\iota_-$ column by $\vv_-$, and set $\mQ = \frac{1}{Z}\mA$ where $Z = \sum_{i,j} \mA(i,j)$. This normalization ensures that $\mQ$ forms a distribution as $\vv_+$ and $\vv_-$ are non-negative.  By construction, $\mQ$ is diagonally maximized by the choice of $\iota_+, \iota_-$, and invertible because $\vv_+, \vv_-\neq \vzero$ and using Gaussian elimination.  Most importantly, the $\iota_+$ and $\iota_-$ columns of $\mP\mQ$ are identical by \cref{eq:impossible_indep1}.

    To complete the construction, given $\epsilon>0$ we set $\mT\neq \mathbb{I}$ so that
    $$\mT(i,j) = \begin{cases}
        1&\text{ if }i = j\text{ and } \{i, j\}\cap \{\iota_+, \iota_-\} = \emptyset\\
        0&\text{ if }i \neq j\text{ and } \{i, j\}\cap \{\iota_+, \iota_-\} = \emptyset\\
        \epsilon&\text{ if }i = \iota_+, j = \iota_- \text{ or }i = \iota_-, j = \iota_+\\
        1-\epsilon&\text{ if } i = j\in \{\iota_+, \iota_-\}\\
        0&\text{ if }i\neq j\text{ and }|\{i, j\}\cap \{\iota_+, \iota_-\}| = 1
    \end{cases}$$
    which is the identical matrix excepts for the $\iota_+$ and $\iota_-$ columns and rows.
     Note that $\mT$ is a column stochastic matrix, $\mQ\mT^\intercal$ is still invertible and diagonally maximal when $\epsilon$ is small enough.  Finally, $\mP\mQ\mT^\intercal$ mixes the $\iota_+$ and $\iota_-$ columns.  However, because the $\iota_+$ and $\iota_-$ columns of $\mP\mQ$ are identical, $\mP\mQ = \mP\mQ\mT^\intercal$ which completes our proof.

\paragraph{Proof of Hamming and $\dist$ orderings in \cref{prop:impossible}}

Finally, we show that there does not exist a reliability score that preserves the Hamming and $\dist$ distance ordering, even restricting to diagonally dominant misreport matrices $\mathcal{Q}_{\text{dom}}\subset\mathcal{Q}_{\text{reg}}$.

We begin the proof with the Hamming ordering.
Suppose we can find two settings: one has $\mQ_1, \mQ_1'\in \mathcal{Q}_{\text{dom}}$ and $\mP_1\in \mathcal{P}_{\text{indep}}$, the other has $\mQ_2, \mQ_2'\in \mathcal{Q}_{\text{dom}}$ and $\mP_2\in \mathcal{P}_{\text{indep}}$ so that
$$\Tr(\mQ_1)>\Tr(\mQ_1'), \Tr(\mQ_2)<\Tr(\mQ_2')\text{, but }\mP_1\mQ_1 = \mP_2\mQ_2, \mP_1\mQ_1' = \mP_2\mQ_2'.$$  Then we can find $\vx_1, \hat{\vx}_1, \hat{\vx}_1'$, $\vx_2, \hat{\vx}_2, \hat{\vx}_2'$ so that $\hat{\vx}_1\succ_{\hamming}^{\vx_1} \hat{\vx}_1'$ and $\hat{\vx}_2'\succ_{\hamming}^{\vx_2} \hat{\vx}_2$ by setting the misreport matrix of $\vx_1, \hat{\vx}_1$ be $\mQ_1$, the misreport matrix $\vx_1, \hat{\vx}_1'$ as $\mQ_1'$, the misreport matrix of $\vx_2, \hat{\vx}_2$ be $\mQ_2$, the misreport matrix $\vx_2, \hat{\vx}_2'$ as $\mQ_2'$.  If there is a reliability score that preserves the Hamming ordering on $\mathcal{P}_{\text{indep}}, \mathcal{Q}_{\text{dom}}$,
\begin{equation}\label{eq:ham1}
    \E[S(\mP_1\mQ_1)]>\E[S(\mP_1\mQ_1')]\text{ and }\E[S(\mP_2\mQ_2)]<\E[S(\mP_2\mQ_2')]
\end{equation}
which reaches a contradiction as $\mP_1\mQ_1 = \mP_2\mQ_2$ and $\mP_1\mQ_1' = \mP_2\mQ_2'$

We construct
$$\mP_1=\begin{pmatrix}
0.74 & 0 & 0.26\\[1mm]
0.26    & 0.74 & 0\\[1mm]
0 & 0.26    & 0.74
\end{pmatrix},
\mQ_1=\frac{1}{3}\begin{pmatrix}
0.8 & 0 & 0.2\\[1mm]
0.2   & 0.8 & 0\\[1mm]
0 & 0.2   & 0.8
\end{pmatrix},\quad
\mQ_1'=\frac{1}{3}\begin{pmatrix}
0.7 & 0.3   & 0\\[1mm]
0 & 0.7 & 0.3\\[1mm]
0.3   & 0 & 0.7
\end{pmatrix}.
$$
For the second setting, we define $\mP_2=\mathbb{I}$, and
$$
\begin{aligned}
\mQ_2 &= \mP_1\mQ_1 = \frac{1}{3}\begin{pmatrix}
0.592 & 0.052 & 0.356\\
0.356 & 0.592 & 0.052\\
0.052 & 0.356 & 0.592
\end{pmatrix} \\
\mQ_2' &= \mP_1\mQ_1' = \frac{1}{3}\begin{pmatrix}
0.596 & 0.222 & 0.182\\
0.182 & 0.596 & 0.222\\
0.222 & 0.182 & 0.596
\end{pmatrix}
\end{aligned}$$
Therefore, $\mP_1\mQ_1 = \mP_2\mQ_2$ and $\mP_1\mQ_1' = \mP_2\mQ_2'$.
By direct computation, we have
$\Tr(\mQ_1)=\frac{24}{30}>\Tr(\mQ_1')=\frac{21}{30}$ and $\Tr(\mQ_2) = \frac{1776}{3000}<\Tr(\mQ_2') = \frac{1788}{3000}$.  Finally, note that we can easily generalize this construction beyond three dimensions by padding the other dimension with identity.

Interestingly, the same construction works for general $\dist$-ordering, due to the symmetry in $\mQ_1, \mQ_1', \mQ_2$ and $\mQ_2'$.  First note that $\sum_{n = 1}^N\dist(\hat{x}_n, x_n) = N\sum_{i,j\in [d]} \mQ(i,j)\dist(i,j) = N \langle \mQ, \dist\rangle_F$ where $\langle \cdot, \cdot\rangle_F$ is the Frobenius inner product defined in~\cref{app:mat}. Hence, with \cref{eq:ham1}, it is sufficient to show the above construction satisfies
$$\langle \mQ_1, \dist\rangle_F>\langle \mQ_1', \dist\rangle_F\text{ and }\langle \mQ_2, \dist\rangle_F<\langle \mQ_2', \dist\rangle_F.$$
Let $A = \dist(1,2)+\dist(2,3)+\dist(3,1) = \dist(1,3)+\dist(2,1)+\dist(3,2)>0$ as $\dist(x,x') = \dist(x',x)$ for all $x,x'$.  By symmetry, we note the Frobenius inner product only depends on $A$,
\begin{align*}
    \langle \mQ_1, \dist\rangle_F-\langle \mQ_1', \dist\rangle_F =& \frac{1}{3} (0.2A-0.3A)<0\tag{$\dist(x,x) = 0$ for all $x$}\\
    \langle \mQ_2, \dist\rangle_F-\langle \mQ_2', \dist\rangle_F =& \frac{1}{3} (0.408A-0.404A)>0
\end{align*}
which completes the proof.
\section{Proofs and Details in \cref{sec:preserve}}

\begin{proof}[Proof of \cref{eq:gram2gram}]
    For all $\hat{x}, \hat{x}'\in \mathcal{X}$,
\begin{align*}
    \hat{\mG}(\hat{x},\hat{x}') =& \frac{1}{N^2}\sum_{n, n': \hat{x}_n = x, \hat{x}_{n'} = x'}\langle P_{x_n}, P_{x_{n'}}\rangle\\
    =& \frac{1}{N^2}\sum_{x,x'\in \mathcal{X}}\sum_{\substack{n,n':\\
    \hat{x}_n = \hat{x}, \hat{x}_{n'} = \hat{x}',\\
                  x_n = x, x_{n'} = x'}}\mG(x,x')\\
    =& \sum_{x,x'\in \mathcal{X}}\mQ(x,\hat{x})\mG(x,x')\mQ(x',\hat{x}')
\end{align*}
which proves \cref{eq:gram2gram}.
\end{proof}

\subsection{An Example of Gram Determinant Score}
\label{sec:delta_kernel}
Here we provide a simple example for Gram determinant score.
Consider $\mathcal{X} = \mathcal{Y} = \{1,2\}$, $\mP = \begin{pmatrix}
    1-p_1& 1-p_2\\
    p_1 & p_2
\end{pmatrix}$, and the misreport matrix $\mQ = \begin{pmatrix}
    \frac{1-\delta}{4}& \frac{\delta}{4}\\
    \frac{\delta}{4}& \frac{1-\delta}{4}
\end{pmatrix}$ with $\delta\ge0$ where $\vx = \hat{\vx}$ if $\delta = 0$ whereas increasing
$\delta$ makes the reports less reliable.  By \cref{eq:gram2gram} and direct computation, the Gram determinant score is
\begin{equation}\label{eq:one-hot}
\begin{aligned}
    \det(\hat{\mG})
    &= \det(\mQ^\intercal)\det(\mG)\det(\mQ) \\
    &= \det(\mP)^2 \det(\mQ)^2 \\
    &= \frac{1}{2^8}(p_1-p_2)^2(1-2\delta)^2.
\end{aligned}
\end{equation}
Given a fixed experiment $\mP$, the Gram determinant score~\cref{eq:one-hot} decreases as $\delta$ increases from $\delta = 0$ to $1/2$.  In particular, it maximizes at $\delta = 0$, when the reported data exactly match the true data, and drops to zero at $\delta = 1/2$, where all reports contain the same uniform mixture of the true labels.  Additionally, the score also depends on the quality of the experiment $\mP$. If $p_1 = p_2$, columns of $\mP$ are linearly dependent and Gram determinant score become zero. In contrast, if $p_1\neq p_2$ and $\delta<1/2$, the score is strictly positive.

\subsection{Lemmas and Proofs for \cref{thm:gram_preserve}}\label{app:gram_preserve}

\begin{proof}[Proof of \cref{thm:gram_preserve}]
The key idea of proving \cref{thm:gram_preserve} is that the determinant has multiplicative property \cref{eq:gram2gram} which allows us to decouple the misreport matrix $\mQ$ from the quality of the experiment $\mP$, and $\det(\mG) = \det(\mP^\intercal\mP)>0$, for all $\mP\in \mathcal{P}_{\text{indep}}$.  Therefore,
$$\Gamma>\Gamma'\text{ if and only if }\det(\mQ^\intercal\mQ)>\det((\mQ')^\intercal\mQ').$$
Thus, the following \cref{lem:preserve_exact,lem:preserve_blackwell} prove the first and second cases.  For the third case, \cref{lem:preserve_hamming} proves the score preserves the approximate Hamming ordering, as $\Delta = 1$ for Hamming distance.  For general distance, let $\hamming(\vx, \hat{\vx}) = \sum_n \mathbf{1}[\hat{x}_n\neq  x_n]$ and $\dist(\vx, \hat{\vx}) = \sum_n \dist(\hat{x}_n, x_n)$ be the Hamming distance and $\dist$ between $\hat{\vx}$ and $\vx$.  Because
$$\min_{x\neq x'}\dist(x,x')\hamming(\hat{\vx}, \vx)\le \dist(\vx, \hat{\vx})\le \max_{x,x'}\dist(x,x')\hamming(\hat{\vx}, \vx),$$
and $\Delta = \frac{\max_{x,x'}\dist(x,x')}{\min_{x\neq x'}\dist(x,x')}$, $\hat{\vx}\succ^\vx_{\dist, 1/(4\Delta L)}\hat{\vx}'$ implies $\hat{\vx}\succ^\vx_{\hamming, 1/(4 L)}\hat{\vx}'$, which completes the proof.
\end{proof}

\begin{lemma}\label{lem:preserve_exact}
    For all $\vx,\hat{\vx}, \hat{\vx}'$ if $\hat{\vx}\succ^\vx_{\exact}\hat{\vx}'$ and  $\mQ, \mQ'\in \mathcal{Q}_{\text{nonperm}}$,
    $\det(\mQ^\intercal\mQ)>\det((\mQ')^\intercal\mQ').$


\end{lemma}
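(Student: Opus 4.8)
The plan is to reduce the inequality to a pure statement about column-stochastic matrices and then bound the determinant of such a matrix by Hadamard's inequality. Since $\hat{\vx}\succ^\vx_{\exact}\hat{\vx}'$ forces $\hat{\vx}=\vx$, we have $\mQ_{\hat{\vx}\mid\vx}=\mathbb{I}$, so by \cref{eq:m2j} $\mQ=\mQ_\vx$ is diagonal with entries $\vq_\vx(i)$. Writing $C:=\mQ'_{\hat{\vx}'\mid\vx}$ for the column-stochastic conditional frequency matrix of $\hat{\vx}'$, \cref{eq:m2j} gives $\mQ'=\mQ_\vx C^\intercal$. As $\mQ,\mQ'$ are square, multiplicativity of the determinant yields
\[
\det(\mQ^\intercal\mQ)=\prod_{i}\vq_\vx(i)^2,
\qquad
\det\big((\mQ')^\intercal\mQ'\big)=\det(C)^2\prod_{i}\vq_\vx(i)^2 .
\]
So it suffices to establish $\det(C)^2<1$ together with $\prod_i\vq_\vx(i)^2>0$, since then the difference equals $(1-\det(C)^2)\prod_i\vq_\vx(i)^2>0$.

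The core of the argument is the claim that every column-stochastic $C$ that is not a permutation matrix satisfies $|\det(C)|<1$. I would prove this with Hadamard's inequality on the columns $C_1,\dots,C_d$: each $C_j$ is a probability vector, so $\|C_j\|_2^2=\sum_i C_j(i)^2\le\sum_i C_j(i)=1$, with equality iff $C_j$ is a standard basis vector. Hence $|\det(C)|\le\prod_j\|C_j\|_2\le 1$. Equality throughout would require both (i) every column to be a standard basis vector and (ii) the columns to be mutually orthogonal, i.e.\ distinct—precisely the statement that $C$ is a permutation matrix. The $\mathcal{Q}_{\text{nonperm}}$ hypothesis asserts exactly that $C=\mQ'_{\hat{\vx}'\mid\vx}$ is not a permutation matrix, so one of (i),(ii) fails and the inequality is strict.

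It remains to check $\prod_i\vq_\vx(i)^2>0$, i.e.\ that every label occurs in $\vx$. This is forced by the hypothesis, since the conditional matrix $\mQ'_{\hat{\vx}'\mid\vx}$ named in the $\mathcal{Q}_{\text{nonperm}}$ condition is only well-defined when $\vq_\vx(j)>0$ for all $j$ (its $j$-th column divides by $\sum_n\mathbf{1}[x_n=j]$); full support of $\vq_\vx$ is therefore built into membership in $\mathcal{Q}_{\text{nonperm}}$. Combining the two facts closes the proof.

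The only genuine subtlety I anticipate is the equality analysis in Hadamard's inequality: one must argue that failing to be a permutation breaks equality \emph{strictly}, which is exactly why the non-permutation (rather than merely non-identity) hypothesis is needed. Indeed, a non-trivial permutation $C$ would give $\det(C)^2=1$ and make the two scores coincide—the degenerate relabeling case that \cref{prop:impossible} shows is genuinely indistinguishable—confirming that $\mathcal{Q}_{\text{nonperm}}$ is the essentially tight restriction under which the exact match ordering can be preserved.
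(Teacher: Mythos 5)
Your proof is correct, and its skeleton matches the paper's: both reduce, via \cref{eq:m2j} and multiplicativity of the determinant, to the claim that the column-stochastic conditional matrix $\mT=\mQ_{\hat{\vx}'\mid\vx}$ (your $C$), being a non-permutation, satisfies $|\det(\mT)|<1$, whence $\det((\mQ')^\intercal\mQ')=\det(\mT)^2\det(\mQ^\intercal\mQ)<\det(\mQ^\intercal\mQ)$. Where you genuinely differ is in how that key claim is established. The paper invokes the Perron--Frobenius theorem (with a citation); this is somewhat terse, since Perron--Frobenius directly yields only $|\det(\mT)|\le 1$ (all eigenvalues have modulus at most the spectral radius $1$), and ruling out equality for non-permutations requires an additional argument. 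Your Hadamard route is more elementary and self-contained: each column of a column-stochastic matrix has $\ell_2$-norm at most $1$, with equality only for standard basis vectors, and the Hadamard equality case (pairwise orthogonal columns) then forces distinct basis vectors, i.e.\ a permutation matrix---so a non-permutation gives strict inequality. You also make explicit a point the paper leaves implicit: $\det(\mQ^\intercal\mQ)=\prod_i \vq_\vx(i)^2>0$ because the conditional frequency matrices appearing in the hypothesis are only well-defined when every label occurs in $\vx$ (the paper simply asserts that $\mQ_\vx$ has positive diagonals). The one blemish, which you share with the paper's own statement rather than introduce, is that taken literally $\mQ\in\mathcal{Q}_{\text{nonperm}}$ is inconsistent with $\hat{\vx}=\vx$ (the associated conditional matrix is then the identity); like the paper's proof, yours correctly uses only the condition that $\mQ'\in\mathcal{Q}_{\text{nonperm}}$, i.e.\ that $C$ is not a permutation.
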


\begin{proof}[Proof of \cref{lem:preserve_exact}] As $\vx, \hat{\vx}$, and $\hat{\vx}'$ with $\hat{\vx}\succ_{\exact}^\vx \hat{\vx}'$, $\mQ_{\hat{\vx}|\vx} = \mathbb{I}$ and there is $\mT\neq \mathbb{I}$ so that $ \mQ_{\hat{\vx}\mid \vx}' = \mT \mQ_{\hat{\vx}\mid \vx} = \mT$.  By \cref{eq:m2j} we have $\mQ' = \mQ\mT^\intercal = \mQ_\vx\mT^\intercal$ and $\mQ = \mQ_\vx$.  Therefore
\begin{equation}\label{eq:preserve_exact1}
   \det((\mQ')^\intercal\mQ') = \det(\mT\mQ^\intercal \mQ \mT^\intercal) = \det(\mT\mT^\intercal)\det(\mQ^\intercal \mQ)
\end{equation}
Because the diagonal matrix $\mQ = \mQ_\vx$ has positive diagonals, and $\mT$ is column stochastic and not a permutation matrix, the Perron–Frobenius theorem (or~\citep{kong2020dominantly}) implies $|\det(\mT)|<1$ and $
   \det((\mQ')^\intercal\mQ') = \det(\mT\mT^\intercal)\det(\mQ^\intercal \mQ)<\det(\mQ^\intercal \mQ).$
\end{proof}

\begin{lemma}\label{lem:preserve_blackwell}
    For all $\vx,\hat{\vx}, \hat{\vx}'$ if $\hat{\vx}\succ^\vx_{\blackwell}\hat{\vx}'$ and  $\mQ, \mQ'\in \mathcal{Q}_{\text{reg}}$,
    $\det(\mQ^\intercal\mQ)>\det((\mQ')^\intercal\mQ').$
\end{lemma}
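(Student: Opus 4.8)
The plan is to mirror the proof of \cref{lem:preserve_exact}, reducing the claim to a single determinant bound on the column-stochastic matrix $\mT$ that witnesses the Blackwell relation. Since $\hat{\vx}\succ^\vx_{\blackwell}\hat{\vx}'$, there is a column-stochastic $\mT\neq \mathbb{I}$ with $\mT\mQ_{\hat{\vx}\mid \vx} = \mQ_{\hat{\vx}\mid \vx}'$, equivalently $\mQ' = \mQ\mT^\intercal$ by \cref{eq:m2j}. Then $(\mQ')^\intercal\mQ' = \mT\mQ^\intercal\mQ\mT^\intercal$, and by the multiplicative property of the determinant, $\det((\mQ')^\intercal\mQ') = \det(\mT)^2\det(\mQ^\intercal\mQ)$. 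Because $\mQ\in\mathcal{Q}_{\text{reg}}$ is invertible, $\det(\mQ^\intercal\mQ) = \det(\mQ)^2>0$, so the entire lemma reduces to proving the strict bound $|\det(\mT)|<1$.

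The first step toward that bound is to show $\mT$ is not a permutation matrix. I would argue that the row diagonal maximality of $\mQ$ and $\mQ'$ transfers to column diagonal maximality of $\mQ_{\hat{\vx}\mid\vx}$ and $\mQ_{\hat{\vx}\mid\vx}'$. Concretely, invertibility of $\mQ$ forces $\vq_\vx(i)>0$ for every $i$ (otherwise a row of $\mQ=\mQ_\vx\mQ_{\hat{\vx}\mid\vx}^\intercal$ vanishes), and since $\mQ(i,j)=\vq_\vx(i)\,\mQ_{\hat{\vx}\mid\vx}(j,i)$, the inequality $\mQ(i,j)\le\mQ(i,i)$ is equivalent to $\mQ_{\hat{\vx}\mid\vx}(j,i)\le\mQ_{\hat{\vx}\mid\vx}(i,i)$, i.e. the diagonal entry dominates its column. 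The same reasoning applied to $\mQ'$ makes $\mQ_{\hat{\vx}\mid\vx}'=\mT\mQ_{\hat{\vx}\mid\vx}$ column diagonally maximal. Applying \cref{lem:non_perm} with $\mA=\mQ_{\hat{\vx}\mid\vx}$ and $\mB=\mT$ then yields that $\mT$ is not a permutation matrix.

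Finally, since $\mT$ is column stochastic and neither the identity nor a permutation, the Perron--Frobenius theorem (equivalently, \citep{kong2020dominantly}) gives $|\det(\mT)|<1$, exactly as invoked in \cref{lem:preserve_exact}; all eigenvalues of the stochastic matrix $\mT$ lie in the closed unit disk, and failing to be a permutation forces the product of their moduli strictly below one. Combining, $\det((\mQ')^\intercal\mQ')=\det(\mT)^2\det(\mQ^\intercal\mQ)<\det(\mQ^\intercal\mQ)$, as desired. I expect the main obstacle to be precisely the strict determinant bound $|\det(\mT)|<1$: the weak inequality $|\det(\mT)|\le 1$ is immediate from stochasticity, but ruling out equality genuinely requires both the permutation exclusion supplied by the diagonal-maximality structure through \cref{lem:non_perm} and the spectral/Perron--Frobenius argument, rather than the simpler positive-diagonal observation that sufficed in the exact-match case.
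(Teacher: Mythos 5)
Your proof is correct and takes essentially the same route as the paper's: the factorization $\det((\mQ')^\intercal\mQ') = \det(\mT)^2\det(\mQ^\intercal\mQ)$ via the multiplicative property, exclusion of the permutation case via \cref{lem:non_perm}, and the Perron--Frobenius/stochastic-matrix bound $|\det(\mT)|<1$. If anything, you make explicit a step the paper's proof of this lemma leaves implicit (it appears only in the proof of \cref{prop:blackwell_ordering}), namely that row diagonal maximality of $\mQ,\mQ'$ together with $\vq_\vx(i)>0$ transfers to column diagonal maximality of $\mQ_{\hat{\vx}\mid\vx}$ and $\mQ_{\hat{\vx}\mid\vx}'$, which is what licenses the application of \cref{lem:non_perm}.
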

\begin{proof}[Proof of \cref{lem:preserve_blackwell}] As  $\hat{\vx}\succ_{\blackwell}^\vx \hat{\vx}'$, there is a column stochastic $\mT\neq \mathbb{I}$ so that $ \mQ_{\hat{\vx}\mid \vx}' = \mT \mQ_{\hat{\vx}\mid \vx}$.  By~\cref{eq:preserve_exact1},
$$ \det((\mQ')^\intercal\mQ') = \det(\mT\mQ^\intercal \mQ \mT^\intercal) = \det(\mT\mT^\intercal)\det(\mQ^\intercal \mQ)$$

Because $\mQ\in \mathcal{Q}_{\text{reg}}$ is invertible, $\det(\mQ)\neq 0$. By \cref{lem:non_perm}, $\mT$ is not a permutation matrix, so $|\det(\mT)|<1$, and $
   \det((\mQ')^\intercal\mQ') = \det(\mT\mT^\intercal)\det(\mQ^\intercal \mQ)<\det(\mQ^\intercal \mQ).$


\end{proof}

\begin{lemma}\label{lem:preserve_hamming}
    Given $\mathcal{X} = [d]$ and $L\ge 1$, for all $\vx,\hat{\vx}, \hat{\vx}'$ if $\hat{\vx}\succ^\vx_{\hamming,\frac{1}{4L}}\hat{\vx}'$ and $\mQ, \mQ'\in \mathcal{Q}_{L,1/(64L^2d^2)}$,
    $\det(\mQ^\intercal\mQ)>\det((\mQ')^\intercal\mQ').$

\end{lemma}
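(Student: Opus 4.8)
The plan is to reduce the claim to an ordinary determinant comparison and then to a first‑order perturbation analysis around the diagonal. Since $\det(\mQ^\intercal\mQ)=\det(\mQ)^2$, and since every matrix in $\mathcal{Q}_{L,1/(64L^2d^2)}\subseteq\mathcal{Q}_{\text{dom}}$ is row diagonally dominant with strictly positive diagonal (diagonal dominance forces $\mQ(i,i)\ge\vq_\vx(i)/2>0$), both $\det(\mQ)$ and $\det(\mQ')$ are positive, so it suffices to prove $\det(\mQ)>\det(\mQ')$. Throughout I would write $r_i=\sum_{j\neq i}\mQ(i,j)$ and $r'_i$ analogously, so that $\mQ(i,i)=\vq_\vx(i)-r_i$ and $\sum_i r_i=1-\Tr(\mQ)=\hamming(\hat{\vx},\vx)/N=:h$, $\sum_i r'_i=h'$. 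The hypothesis $\hat{\vx}\succ^\vx_{\hamming,\frac{1}{4L}}\hat{\vx}'$ reads $h<h'/(4L)$, membership in $\mathcal{Q}_{L,1/(64L^2d^2)}$ gives $h,h'\le\delta:=1/(64L^2d^2)$, and $L$-balance yields $\frac{1}{Ld}\le\vq_\vx(i)\le\frac{L}{d}$ together with $q_{\max}\le L\,q_{\min}$, where $q_{\max}=\max_i\vq_\vx(i)$, $q_{\min}=\min_i\vq_\vx(i)$.

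The first step is to approximate each determinant by the product of its own diagonal entries. Writing $\mQ=\mQ_D+\mQ_E$ with $\mQ_D=\mathrm{diag}(\mQ(i,i))$, I would bound $\rho:=\|\mQ_D^{-1}\mQ_E\|_2\le\|\mQ_D^{-1}\|_2\,\|\mQ_E\|_2\le 2Ld\cdot h$, using $\|\mQ_E\|_2\le\|\mQ_E\|_F\le\sum_{i\neq j}\mQ(i,j)=h$ and $\min_i\mQ(i,i)\ge q_{\min}/2\ge 1/(2Ld)$. Because $h\le\delta$, this gives $\rho\le 1/(32Ld)<1$, so \cref{thm:appro_det} applies and yields $\det(\mQ)=\bigl(\prod_i\mQ(i,i)\bigr)(1+\varepsilon)$ with $|\varepsilon|\le\tfrac74 c\rho$, $c=-d\ln(1-\rho)$. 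The key feature is that, because the diagonal of $\mQ$ is matched exactly, this relative error is \emph{second order} in the Hamming fraction (of order $L^2d^3h^2$); the same holds for $\mQ'$ with error of order $L^2d^3h'^2$.

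The second step compares the two diagonal products by taking logarithms:
$$\log\frac{\prod_i\mQ(i,i)}{\prod_i\mQ'(i,i)}=\sum_i\bigl[\log(\vq_\vx(i)-r_i)-\log(\vq_\vx(i)-r'_i)\bigr].$$
Its first‑order term is $\sum_i\frac{r'_i-r_i}{\vq_\vx(i)}$, and the crux is that this is positive with a quantitative gap of order $h'$. Indeed $\sum_i r'_i/\vq_\vx(i)\ge h'/q_{\max}$, while $\sum_i r_i/\vq_\vx(i)\le h/q_{\min}\le Lh/q_{\max}$ by $L$-balance, so the first‑order gap is at least $(h'-Lh)/q_{\max}\ge\tfrac34\,h'/q_{\max}$, using $h<h'/(4L)$. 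This is exactly where the $\tfrac{1}{4L}$ factor enters, and it is the \emph{correct} threshold precisely because $L$-balance caps the weight ratio $q_{\max}/q_{\min}$ at $L$ (not $L^2$). Since $r_i\le\vq_\vx(i)/2$, the bound $\log(1-t)\ge -t-t^2$ on $[0,\tfrac12]$ lets me convert this into a genuine lower bound $\log\frac{\prod_i\mQ(i,i)}{\prod_i\mQ'(i,i)}\gtrsim h'$ (using also $q_{\max}\le\frac{L}{L+d-1}$ to keep the estimate of order $h'$ rather than $h'/L$).

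Finally, combining the two steps, $\det(\mQ)>\det(\mQ')$ follows once the second‑order approximation error from \cref{thm:appro_det} is shown to be strictly smaller than the first‑order gap; substituting $h\le h'\le\delta=1/(64L^2d^2)$ reduces the error to at most a controlled fraction of $h'$, which the gap then dominates. I expect the main obstacle to be precisely this last balancing act: one must track all constants carefully so that the (second‑order) determinant‑approximation error provably never overtakes the (first‑order) gap between diagonal products, \emph{uniformly} in $L$ and $d$, and this is what pins down the exact admissible $\delta$. The $L$-balance hypothesis is essential on both sides of this estimate — it keeps $\rho$ small and, more importantly, prevents the misreport mass $r_i$ from concentrating on rare labels, which would otherwise shrink the gap. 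The passage from the Hamming ordering to a general $\dist$-ordering is already handled in the proof of \cref{thm:gram_preserve} through the aspect ratio $\Delta$, so no additional work is needed in this lemma.
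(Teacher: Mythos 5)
Your overall route is the same as the paper's: decompose $\mQ=\mQ_D+\mQ_E$, invoke \cref{thm:appro_det} to replace $\det(\mQ)$ by the diagonal product up to a second-order relative error, and then show that the first-order gap between the diagonal products (driven by $h'-Lh\ge\tfrac34 h'$) dominates that error; working with logarithms instead of the paper's two-sided product bounds (\cref{lem:approx_det_hamming}) is cosmetic. The genuine problem is the final balancing step, which you yourself flag as the crux and then assert closes: it does not close with the bounds you set up. Your approximation error is of order $L^2d^3(h')^2$ (it comes from $\tfrac74 c\rho$ with $c\approx d\rho$ and $\rho\le 2Ldh'$), and substituting the hypothesis $h'\le\frac{1}{64L^2d^2}$ gives an error of roughly $\frac{7d}{32}h'$ --- a fraction of $h'$ that grows linearly in $d$, not a ``controlled fraction.'' A gap of order $h'$, which is what you state after bounding $q_{\max}\le\frac{L}{L+d-1}\le 1$, therefore cannot dominate the error once $d\ge 4$. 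Even if you keep the gap at full strength $\frac{3h'}{4q_{\max}}\ge\frac{3(L+d-1)}{4L}h'$, the required inequality $\frac{7d}{32}<\frac{3(L+d-1)}{4L}$ still fails for moderately large $L$ and $d$ (e.g.\ $L=4$, $d=18$). The root cause is that your bound $\rho\le 2Ldh'$ substitutes $q_{\min}\ge\frac{1}{Ld}$ \emph{before} squaring, which is lossy: after squaring, your bookkeeping effectively bounds $\frac{q_{\max}}{q_{\min}^2}$ by $L^3d$, whereas using $L$-balance directly gives $\frac{q_{\max}}{q_{\min}^2}\le\frac{L}{q_{\min}}\le L^2d$.

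The fix --- and this is exactly the paper's bookkeeping --- is to postpone all numerical substitutions: keep the error in the form $\frac{8dh^2}{q_{\min}^2}+\frac{8d(h')^2}{q_{\min}^2}\le\frac{16d(h')^2}{q_{\min}^2}$ and the gap in the form $\frac{3h'}{4q_{\max}}$, and only then bound the ratio,
$$\frac{\text{error}}{\text{gap}}\le\frac{64}{3}\cdot\frac{d\,q_{\max}\,h'}{q_{\min}^2}\le\frac{64}{3}\cdot\frac{Ld\,h'}{q_{\min}}\le\frac{64}{3}\,L^2d^2h',$$
using $q_{\max}\le Lq_{\min}$ once and $q_{\min}\ge\frac{1}{Ld}$ (from \cref{lem:bounded_ratio}) once. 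Under $h'\le\frac{1}{64L^2d^2}$ this ratio is a universal constant below $1$, uniformly in $L$ and $d$, and the rest of your argument (the log conversions and the second-order term $\sum_i r_i^2/q_i^2\le h^2/q_{\min}^2$, both of which are genuinely negligible) then goes through. So your idea is right and $\frac{1}{64L^2d^2}$ is indeed the threshold that makes it work, but as written your constant-tracking loses a factor of $d$, and then of $L$, exactly at the step you identified as the main obstacle.
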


\paragraph{Proof of \cref{lem:preserve_hamming}}

\Cref{lem:preserve_hamming} establishes that the Gram determinant score approximately preserves the Hamming ordering under balancedness and small Hamming distance conditions. The main technical challenge lies in deriving upper and lower bounds on the Gram determinant in terms of the Hamming distance~\cref{lem:approx_det_hamming}.

\begin{lemma}\label{lem:approx_det_hamming}
    For all $\delta\ge 0$, and $\vx, \hat{\vx}$ with diagonally dominant $\mQ$, if $\delta = 1-\Tr(\mQ)$ and $\delta<\frac{\min_i q_\vx(i)}{4}$,
    $$ \left(1-\frac{8d\delta^2}{\min_i q_\vx(i)^2}\right)\left(1-\frac{\delta}{\min_i q_\vx(i)} \right)\le \frac{\det(\mQ)}{\prod_i q(i)} \le \left(1+\frac{8d\delta^2}{\min_i q_\vx(i)^2}\right)\left(1-\frac{\delta}{2\max_i q_\vx(i)}\right).$$
\end{lemma}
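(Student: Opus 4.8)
The plan is to factor the determinant through its diagonal and control the two resulting factors separately. Write $\mQ_D$ for the diagonal of $\mQ$, $\mQ_E=\mQ-\mQ_D$, and set $\delta_i:=q_\vx(i)-\mQ(i,i)=\sum_{j\neq i}\mQ(i,j)\ge 0$, the off-diagonal mass in row $i$. Since every row of $\mQ$ sums to $q_\vx(i)$ and $\Tr(\mQ)=1-\delta$, we get $\sum_i\delta_i=\delta$. Diagonal dominance together with $\delta<\tfrac14\min_i q_\vx(i)$ gives $\mQ(i,i)=q_\vx(i)-\delta_i\ge q_\vx(i)-\delta\ge \tfrac34\min_i q_\vx(i)>0$, so $\det(\mQ_D)=\prod_i\mQ(i,i)>0$ and I may write $\frac{\det(\mQ)}{\prod_i q_\vx(i)}=\frac{\det(\mQ)}{\det(\mQ_D)}\cdot\prod_i\bigl(1-\tfrac{\delta_i}{q_\vx(i)}\bigr)$. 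The target bounds are then the product of a bound on each factor.

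The diagonal factor $\prod_i(1-a_i)$, with $a_i:=\delta_i/q_\vx(i)\in[0,1)$ and $\sum_i\delta_i=\delta$, is handled by elementary product inequalities. For the lower bound, Weierstrass gives $\prod_i(1-a_i)\ge 1-\sum_i a_i\ge 1-\frac{\delta}{\min_i q_\vx(i)}$, using $\sum_i a_i\le\frac{1}{\min_i q_\vx(i)}\sum_i\delta_i$. For the upper bound, $\prod_i(1-a_i)\le e^{-\sum_i a_i}\le e^{-\delta/\max_i q_\vx(i)}$ since $\sum_i a_i\ge\frac{\delta}{\max_i q_\vx(i)}$; and because $\delta/\max_i q_\vx(i)\le\delta/\min_i q_\vx(i)<\tfrac14$, the inequality $e^{-t}\le 1-\tfrac{t}{2}$ (valid on a neighbourhood of $0$ containing $[0,\tfrac14]$) yields $\le 1-\frac{\delta}{2\max_i q_\vx(i)}$. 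These are exactly the two outer factors in the claimed two-sided estimate.

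The crux is to show the correction $r:=\det(\mQ)/\det(\mQ_D)$ obeys $|r-1|\le \frac{8d\delta^2}{\min_i q_\vx(i)^2}$, i.e.\ that it is \emph{second order} in $\delta$. I would invoke \cref{thm:appro_det} with $\mA=\mQ$: writing $\rho=\|\mQ_D^{-1}\mQ_E\|_2$ and $c=-d\ln(1-\rho)$, it gives $|r-1|\le c\rho e^{c\rho}$, sharpening to $\tfrac74 c\rho$ once $c\rho<1$. I bound $\rho$ via $\rho\le\sqrt{\|\mQ_D^{-1}\mQ_E\|_1\,\|\mQ_D^{-1}\mQ_E\|_\infty}$: each row of $\mQ_D^{-1}\mQ_E$ has absolute sum $\delta_i/\mQ(i,i)$ and each column has absolute sum at most $\delta/\min_i\mQ(i,i)$ (the total off-diagonal mass being $\delta$), so both are $\le \frac{4\delta}{3\min_i q_\vx(i)}$ and hence $\rho\le\frac{4\delta}{3\min_i q_\vx(i)}<\tfrac13$. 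Then $c\le d\rho/(1-\rho)\le\tfrac32 d\rho$, so $c\rho\le\tfrac32 d\rho^2$, and the linear branch gives $|r-1|\le\tfrac74 c\rho\le\tfrac{21}{8}d\rho^2\le\tfrac{21}{8}\cdot\tfrac{16}{9}\cdot\frac{d\delta^2}{\min_i q_\vx(i)^2}<\frac{8d\delta^2}{\min_i q_\vx(i)^2}$. Combining the factors finishes the proof: all bounding factors are nonnegative, so the products of the per-factor bounds are valid two-sided bounds (and when $8d\delta^2/\min_i q_\vx(i)^2\ge 1$ the lower inequality is anyway trivial, since $\det(\mQ)>0$ forces the left-hand quantity to be positive while the stated lower bound is negative).

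The main obstacle is precisely this second-order control of $r$, and within the \cref{thm:appro_det} route the delicate point is guaranteeing $c\rho<1$ so the sharp linear branch applies; the crude chain above only secures this for moderate $d$, and it becomes tight for large $d$ as $\delta\to\tfrac14\min_i q_\vx(i)$. A cleaner, uniformly valid alternative I would fall back on is to expand $\det(\mQ)$ directly by the Leibniz formula: every non-identity permutation moves at least two indices, so $|\det(\mQ)-\det(\mQ_D)|\le\sum_{\sigma\neq\mathrm{id}}\prod_i\mQ(i,\sigma(i))$; dividing by $\det(\mQ_D)$ and setting $\beta_i=\delta_i/\mQ(i,i)$, grouping by the moved set and using $\mathrm{per}\le$ (product of row sums) gives $|r-1|\le\sum_{|M|\ge 2}\prod_{i\in M}\beta_i=\prod_i(1+\beta_i)-1-\sum_i\beta_i\le e^{s}-1-s\le\tfrac{s^2}{2}e^{s}$ with $s=\sum_i\beta_i\le\frac{4\delta}{3\min_i q_\vx(i)}$. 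This is manifestly $O\bigl(\delta^2/\min_i q_\vx(i)^2\bigr)$, comfortably within the stated constant, and it sidesteps both the spectral-norm estimate and the $c\rho<1$ caveat.
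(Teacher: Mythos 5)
Your proposal is correct, and its skeleton is the same as the paper's: the identical splitting $\mQ=\mQ_D+\mQ_E$, the identical factorization of $\det(\mQ)/\prod_i q_\vx(i)$ into the correction $r=\det(\mQ)/\det(\mQ_D)$ times the diagonal ratio $\prod_i\bigl(1-\delta_i/q_\vx(i)\bigr)$, and the same Weierstrass and $e^{-t}\le 1-t/2$ bounds for the latter. Where you genuinely depart from the paper is in the control of $r$, and your treatment is in fact more careful. The paper bounds $\rho=\|\mQ_D^{-1}\mQ_E\|_2$ by applying Gershgorin to the row sums, which controls the spectral \emph{radius} rather than the spectral \emph{norm} required by \cref{thm:appro_det}; your bound $\rho\le\sqrt{\|\mQ_D^{-1}\mQ_E\|_1\,\|\mQ_D^{-1}\mQ_E\|_\infty}$ is the correct route to the spectral norm, and your constant $\tfrac{4\delta}{3\min_i q_\vx(i)}$ is sharper than the paper's $\tfrac{2\delta}{\min_i q_\vx(i)}$ (a sharpening you actually need: with the paper's constant, even the $\tfrac74 c\rho$ branch gives $\tfrac{14d\delta^2}{\min_i q_\vx(i)^2}$, which overshoots the stated constant $8$). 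Moreover, the paper reads \cref{thm:appro_det} as yielding $|r-1|\le\delta_Q=-\rho d\ln(1-\rho)$, whereas the theorem only gives $c\rho e^{c\rho}$, or $\tfrac74c\rho$ under the precondition $c\rho<1$; you are right that this precondition is not automatic and can fail for large $d$ as $\delta$ approaches $\tfrac14\min_i q_\vx(i)$. Your fallback via the Leibniz expansion---grouping non-identity permutations by their moved set and bounding by $\prod_i(1+\beta_i)-1-\sum_i\beta_i\le\tfrac{s^2}{2}e^{s}$ with $s\le\tfrac{4\delta}{3\min_i q_\vx(i)}\le\tfrac13$---is a genuinely different and fully elementary argument for the crucial second-order estimate: it is uniform in $d$, lands comfortably within the constant $8$, and dispenses with both the spectral-norm machinery and the $c\rho<1$ caveat. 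One microscopic nit: weak (row) diagonal dominance gives $\det(\mQ)\ge 0$, not necessarily $\det(\mQ)>0$, but nonnegativity is all you need for the lower bound to be vacuous when $8d\delta^2/\min_i q_\vx(i)^2\ge 1$. Net effect: same decomposition and diagonal-factor analysis as the paper, but your handling of the correction factor both fixes imprecisions in the paper's own argument and, through the permanent-style fallback, makes the lemma's proof robust in all parameter regimes.
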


\begin{lemma}\label{lem:bounded_ratio}
    Given $L\ge 1$, if $a_1,\dots,a_d\ge 0$, $\sum_{i\in [d]} a_i = 1$ and $a_i\le L a_j$ for all $i,j\in [d]$, then
    $$\frac{1}{Ld-L+1}\le a_i\le \frac{L}{d+L-1}, \text{ for all } i$$
\end{lemma}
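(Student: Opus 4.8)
The plan is to fix an arbitrary index $i\in[d]$ and bound $a_i$ from both sides by combining the pairwise ratio hypothesis $a_i\le La_j$ with the normalization $\sum_j a_j=1$. The key observation is that the hypothesis, holding for all pairs, says precisely that any two entries differ by a factor of at most $L$; hence each $a_j$ can be controlled in terms of the fixed $a_i$ in whichever direction is needed.

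For the upper bound, I would rewrite the constraint as $a_j\ge a_i/L$ for every $j\neq i$. Summing this over the $d-1$ indices $j\neq i$ and adding the term $a_i$ itself gives
$$1=\sum_{j} a_j\ge a_i+(d-1)\frac{a_i}{L}=a_i\cdot\frac{L+d-1}{L},$$
which rearranges immediately to $a_i\le \frac{L}{d+L-1}$.

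For the lower bound, I would instead use the constraint in the complementary form $a_j\le La_i$ for every $j\neq i$ (together with $a_i\le La_i$, valid since $L\ge 1$). Summing over all $j$ yields
$$1=\sum_j a_j\le a_i+(d-1)La_i=a_i\bigl(Ld-L+1\bigr),$$
so that $a_i\ge\frac{1}{Ld-L+1}$. Since $i$ was arbitrary, both inequalities hold for every $i$, which is the claim.

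There is no substantive obstacle here: the argument is a direct double application of the normalization constraint, and each display is a single rearrangement. The only point requiring care is tracking the direction of the inequality in the two cases—invoking $a_j\ge a_i/L$ for the upper bound versus $a_j\le La_i$ for the lower bound. Note also that positivity of the entries need not be assumed separately, since the lower-bound display already forces $a_i\ge\frac{1}{Ld-L+1}>0$ automatically.
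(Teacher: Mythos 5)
Your proposal is correct and is essentially identical to the paper's own proof: both bounds come from rewriting the ratio constraint as $a_j\ge a_i/L$ (for the upper bound) or $a_j\le La_i$ (for the lower bound) and summing against the normalization $\sum_j a_j=1$. The only cosmetic difference is your parenthetical remark about $a_i\le La_i$, which is not actually needed since the self-term is kept as $a_i$ in the sum.
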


\begin{proof}[Proof of \cref{lem:preserve_hamming}]
    If $\vx, \hat{\vx}, \hat{\vx}'$ with $\mQ, \mQ'\in \mathcal{Q}_{L,1/(64L^2d^2)}$, the true labels are $L$ balanced, and Hamming distances $\delta = 1-\Tr(\mQ), \delta' = 1-\Tr(\mQ')$ are less than $\frac{1}{64L^2d^2}$.  If $\hat{\vx}\succ^\vx_{\hamming, 1/(4L)}\hat{\vx}'$, we want to show $\left(\frac{\det(\mQ)}{\det(\mQ')}\right)^2>1$.

    Note that as $\mQ, \mQ'$ are diagonally dominant $\det(\mQ), \det(\mQ') >0$, and we use \cref{lem:approx_det_hamming} to show that the lower bound of $\det(\mQ)$ is larger than the upper bound of $\det(\mQ')$,
    $$\left(1+\frac{8d(\delta')^2}{\min_i q_\vx(i)^2}\right)\left(1-\frac{\delta'}{2\max_i q_\vx(i)}\right)<\left(1-\frac{8d\delta^2}{\min_i q_\vx(i)^2}\right)\left(1-\frac{\delta}{\min_i q_\vx(i)} \right).$$  By taking the difference, we have
    \begin{align*}
        &\left(1-\frac{8d\delta^2}{\min_i q_\vx(i)^2}\right)\left(1-\frac{\delta}{\min_i q_\vx(i)} \right)-\left(1+\frac{8d(\delta')^2}{\min_i q_\vx(i)^2}\right)\left(1-\frac{\delta'}{2\max_i q_\vx(i)}\right)\\
        >&\frac{\delta'}{2\max_i q_\vx(i)}-\frac{8d\delta^2}{\min_i q_\vx(i)^2}-\frac{\delta}{\min_i q_\vx(i)}-\frac{8d(\delta')^2}{\min_i q_\vx(i)^2}\tag{The second order terms are positive}\\
        \ge& \frac{\delta'}{2\max_i q_\vx(i)}-\frac{\delta}{\min_i q_\vx(i)}-\frac{16d(\delta')^2}{\min_i q_\vx(i)^2}\tag{$\delta<\delta'$}\\
        \ge& \frac{\delta'}{4\max_i q_\vx(i)}-\frac{16d(\delta')^2}{\min_i q_\vx(i)^2}\tag{$\delta'>4L\delta$}\\
        =& \frac{\delta'}{4\max_i q_\vx(i)}\left(1-\frac{64d\max_i q_\vx(i)\delta'}{\min_i q_\vx(i)^2}\right)\\
        \ge& \frac{\delta'}{4\max_i q_\vx(i)}\left(1-\frac{64Ld}{\min_i q_\vx(i)}\delta'\right)\tag{$\max_i q_\vx(i)<L\min_i q_\vx(i)$}\\
        >& 0\tag{$\delta'<\frac{1}{64L^2d^2}$ and $\min_i q_\vx(i)\ge \frac{1}{Ld}$ by \cref{lem:bounded_ratio}}
    \end{align*}
\end{proof}

\begin{proof}[Proof of \cref{lem:approx_det_hamming}]
We want to estimate $\det(\mQ)$ by the Hamming distance.  Let $\mQ = \mD+\mE$ where $\mD$ is a diagonal matrix and $\mE$ has zero diagonal, and $\delta_i = \sum_{j\neq i} \mE(i,j) = q_\vx(i)-\mD(i,i)\ge 0$ for all $i\in \mathcal{X}$ which is the off-diagonal weight of row $i$.  With above notations, $1-\Tr(\mQ) = \sum_{i\in \mathcal{X}} \delta_i = \delta$ and $\det(\mD) = \prod (q_\vx(i)-\delta_i)$.  If $\rho =\|\mD^{-1}\mE\|_2$ and
    $\delta_Q = -\rho d\ln(1-\rho)$, by \cref{thm:appro_det}
\begin{equation}\label{eq:approx_hamming2}
    1-\delta_Q \le \frac{\det(\mQ)}{\det(\mD)}\le 1+\delta_Q.
\end{equation}
As $\mD^{-1}\mE$ is a nonnegative matrix, by Gershgorin theorem, the spectral radius $\rho$ can be bounded by the row sum $\delta_i/\mQ(i,i)\le \frac{2\delta}{\min_i q_\vx(i)}$ since $\mQ$ is diagonally dominant.  Because $-\ln(1-t)\le 2t$ for all $t<1/2$ and $\delta\le \frac{\min_i q_\vx(i)}{4}$, we have
\begin{equation}\label{eq:approx_hamming21}
    \delta_Q\le 2d\rho^2\le \frac{8d\delta^2}{\min_i q_\vx(i)^2}
\end{equation}


Now we bound the ratio $\frac{\det(\mD)}{\prod_i q_\vx(i)} = \prod_i \left(1-\frac{\delta_i}{q_\vx(i)}\right)$.  By union bound,
\begin{align*}
    \prod_i \left(1-\frac{\delta_i}{q_\vx(i)}\right)\ge& 1-\sum_i \frac{\delta_i}{q_\vx(i)}\ge 1-\frac{\delta}{\min_i q_\vx(i)}\tag{$\delta = \sum \delta_i$}
\end{align*}  On the other hand,
\begin{align*}
    \prod_i \left(1-\frac{\delta_i}{q_\vx(i)}\right)\le& \exp\left(-\sum \frac{\delta_i}{q_\vx(i)}\right)\tag{$1-t\le e^{-t}$ for all $t$}\\
    \le& \exp\left(- \frac{\delta}{\max_i q_\vx(i)}\right)\tag{$\delta = \sum \delta_i$}\\
    \le&1-\frac{\delta}{2\max_i q_\vx(i)}\tag{$\delta<\max q_\vx(i)$ and $e^{-t}\le 1-\frac{1}{2}t$ if $0\le t\le 1$}
\end{align*}
Therefore,
\begin{equation}\label{eq:approx_hamming3}
   1-\frac{\delta}{\min_i q_\vx(i)} \le \frac{\det(\mD)}{\prod_i q(i)}\le 1-\frac{\delta}{2\max_i q_\vx(i)}
\end{equation}
By \cref{eq:approx_hamming2,eq:approx_hamming3}, we have
\begin{align*}
    (1-\delta_Q)\left(1-\frac{\delta}{\min_i q_\vx(i)} \right)\le \frac{\det(\mQ)}{\prod_i q(i)} \le (1+\delta_Q)\left(1-\frac{\delta}{2\max_i q_\vx(i)}\right)
\end{align*}
which completes the proof by plugging in \cref{eq:approx_hamming21}
\end{proof}

\begin{proof}[Proof of \cref{lem:bounded_ratio}]
    Because $a_j\ge \frac{1}{L}a_i$ for all $i\neq j$, $1 = \sum a_j \ge a_i+\frac{d-1}{L}a_i \le \frac{L+d-1}{L}a_i$, and
    $$a_i\le \frac{L}{L+d-1}.$$
    On the other hand, because $a_j\le L a_i$, $1 = \sum a_j \le a_i+(d-1)La_i$, and
    $$a_i\ge \frac{1}{Ld-L+1}.$$
\end{proof}

\subsection{Proofs for Experiment Agnostic}\label{app:exp_agn}
\begin{proof}[Proof of \cref{prop:exp_agn}]
We first show that there is $\alpha = 1/S(\mathbb{I})>0$ so that for all $\mP, \mQ\in GL_d$
\begin{equation}\label{eq:prod0}
    S(\mP\mQ) = \alpha S(\mP)S(\mQ).
\end{equation}

Since $S$ is experiment agonistic, given any $\mP$, $S(\mP\mQ)$ is increasing in $S(\mQ)$, and there exists an increasing function $g_\mP$ so that
    $S(\mP\mQ) = g_\mP(S(\mQ))$
    Because for any $s,t>0$ and $\mQ$, $S(st\mQ) = c(st)S(\mQ) = c(s)c(t)S(\mQ)$ and $S(\mQ)>0$, we have $c(st) = c(s)c(t)$ for all $s,t>0$.  Therefore,
    \begin{equation}\label{eq:prod1}
        c(t) =  t^\gamma\text{ for some }\gamma\in \R.
    \end{equation}

    For any $t>0$ and $\mP,\mQ$, we have $S(\mP t\mQ) = c(t)S(\mP\mQ) = c(t)g_\mP(S(\mQ))$, and $S(\mP t\mQ) = g_{\mP}(S(t\mQ)) = g_{\mP}(c(t)S(\mQ))$.  Hence
    $$g_{\mP}(c(t)S(\mQ)) = c(t)g_\mP(S(\mQ)).$$
    For any $\mP$ and $\mQ$, we have
$$S(\mP\mQ) = g_\mP\left(S(\mQ)\cdot\frac{1}{S(\mQ)}S(\mQ)\right) = S(\mQ)g_\mP(1)$$ by \cref{eq:prod1} and taking $t = S(\mQ)^{-\gamma}$.  By taking $\mQ = \mathbb{I}$ we have $g_\mP(1) = \frac{S(\mP\mQ)}{S(\mQ)} = \frac{S(\mP)}{S(\mathbb{I})}$, and prove \cref{eq:prod0}.

    By \cref{eq:prod0}, $\tilde{S}(\mQ):=\alpha S(\mQ)$ is a continuous homomorphism between $GL_d$ and $(\R_{>0}, \cdot)$ so that for all $\mP, \mQ$
    $\tilde{S}(\mP\mQ) = \tilde{S}(\mP)\tilde{S}(\mQ).$
    Thus, there exists a continuous $f:\R\setminus\{0\}\to \R_{>0}$ so that $\tilde{S}(\mQ) = f(\det(\mQ))$.~\citep{727050}  We now pin down the function $f$.  First, $S(t\mQ) = \alpha f(t^d\det(\mQ))$ and by \cref{eq:prod1}, $S(t\mQ) = \alpha c(t)f(\det(\mQ)) = \alpha t^\gamma f(\det(\mQ))$ for all $t>0$ and $\mQ$. Given $\beta = \gamma/(2d)$, for all $t>0$, $f(t) = t^{2\beta} f(1)$ and $f(-t) = t^{2\beta} f(-1)$.  Moreover, because $f$ is a homomorphism $f(-1)^2 = f((-1)^2) = f(1) = 1$ and $f(-1)>0$, we have for all $z\neq 0$, $f(z) = |z|^{2\beta}$ and
    $$S(\mQ) = \alpha f(\det(\mQ)) = \alpha |\det(\mQ)|^{2\beta} = \alpha \det(\mQ^\intercal \mQ)^\beta.$$
\end{proof}


\section{Lemmas and Proofs for \cref{sec:estimator}}\label{app:estimator}


\paragraph{Proofs for \cref{thm:plug-in}}
\begin{lemma}\label{lem:concent_emp}
Given $\delta>0$ and report size $N$,
    $$\Pr\left[\|\bar{\mG}-\hat{\mG}\|_2\le 4\sqrt{\frac{\log 2d/\delta}{N}}+2\frac{\log 2d/\delta}{N}\right]\ge 1-\delta.$$
\end{lemma}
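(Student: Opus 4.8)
The plan is to lift the two Gram matrices to honest Gram matrices of a single indexed family of Hilbert-space vectors, and then control their difference by a dimension-free operator-norm concentration. I would work with the delta-kernel feature map $\phi(y)=e_y$ into $\mathcal{H}=\ell^2(\mathcal{Y})$, so that $\langle\phi(y),\phi(y')\rangle=\mathbf{1}[y=y']$ and $\phi(P)=\E_{y\sim P}[\phi(y)]=P$. For each label $x\in[d]$ define the empirical and mean embeddings $\bar\mu_x=\frac1N\sum_{n:\hat x_n=x}\phi(y_n)$ and $\mu_x=\frac1N\sum_{n:\hat x_n=x}\phi(P_{x_n})$. A direct expansion gives $\bar{\mG}(x,x')=\langle\bar\mu_x,\bar\mu_{x'}\rangle$ and $\hat{\mG}(x,x')=\langle\mu_x,\mu_{x'}\rangle$, where both diagonal ($n=n'$) terms carry the same factor $\langle P_{x_n},P_{x_n}\rangle$, so no bias correction is needed. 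Collecting columns into operators $\Phi,\bar\Phi:\R^d\to\mathcal{H}$ with $\Phi e_x=\mu_x$ and $\bar\Phi e_x=\bar\mu_x$, and setting $E=\bar\Phi-\Phi$, we have $\bar{\mG}=\bar\Phi^*\bar\Phi$ and $\hat{\mG}=\Phi^*\Phi$.

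Next I would reduce the claim to a single scalar deviation. Expanding $\bar{\mG}-\hat{\mG}=E^*\Phi+\Phi^*E+E^*E$ and using submultiplicativity gives $\|\bar{\mG}-\hat{\mG}\|_2\le 2\|\Phi\|_2\|E\|_2+\|E\|_2^2$. Here $\|\Phi\|_2^2=\|\Phi^*\Phi\|_2=\|\hat{\mG}\|_2=\|(\mP\mQ)^\intercal(\mP\mQ)\|_2=\|\mP\mQ\|_2^2\le\|\mP\mQ\|_F^2\le1$, since the entries of $\mP\mQ$ are nonnegative and sum to one. As $E$ maps into $\mathcal{H}$ but has rank at most $d$, I would pass to the Hilbert–Schmidt norm, $\|E\|_2\le\|E\|_{HS}$, so that everything reduces to showing $\|E\|_{HS}\le\sqrt{2\log(2/\delta)/N}+O(\log(2/\delta)/N)$ with probability at least $1-\delta$.

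The concentration step treats $E$ as one element of the Hilbert space of Hilbert–Schmidt operators. Write $E=\frac1N\sum_{n=1}^N W_n$ with $W_n=\zeta_n e_{\hat x_n}^\intercal$, where $e_{\hat x_n}$ is the standard basis vector of $\R^d$ and $\zeta_n=\phi(y_n)-\phi(P_{x_n})\in\mathcal{H}$; these summands are independent over $n$, mean zero (as $\E\phi(y_n)=\phi(P_{x_n})$), and satisfy $\|W_n\|_{HS}=\|\zeta_n\|\le2$. The variance proxy is $\sum_n\E\|\tfrac1N\zeta_n\|^2=\frac1{N^2}\sum_n(1-\|P_{x_n}\|_2^2)\le1/N$. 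Applying the Hilbert-space Bernstein/Hoeffding inequality of Pinelis~\citep{pinelis1994optimum} to this average yields a sub-gamma tail of the form $\|E\|_{HS}\le\sqrt{2\log(2/\delta)/N}+\tfrac23\log(2/\delta)/N$ on the stated event. Plugging this into $2\|E\|_2+\|E\|_2^2\le2\|E\|_{HS}+\|E\|_{HS}^2$ and using $\log(2/\delta)\le\log(2d/\delta)$ gives the claimed bound $4\sqrt{\log(2d/\delta)/N}+2\log(2d/\delta)/N$, with slack left in the constants.

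The main obstacle is obtaining operator-norm control of $E$ at rate $\sqrt{\log/N}$ without a spurious dimension factor. A naive per-label argument—apply Pinelis to each $\|\bar\mu_x-\mu_x\|$ and union bound over $x$—only controls the individual column norms and loses a factor $\sqrt d$ when passing to $\|E\|_2$ through $\|E\|_{HS}\le\sqrt d\,\max_x\|\bar\mu_x-\mu_x\|$. The fix is to never separate the columns: bounding $\|E\|_{HS}$ in one shot exploits that the $d$ disjoint label-groups make the columns independent, so the total variance is $\sum_x\E\|\bar\mu_x-\mu_x\|^2\le1/N$ rather than $d$ times a per-column variance. Beyond this, I would only verify two routine points: that the $n=n'$ diagonal contributions genuinely coincide between $\bar{\mG}$ and $\hat{\mG}$ once the inner products are taken, and that the chosen form of Pinelis's inequality inverts to the $\sqrt{\cdot}+(\cdot)$ shape with constants small enough to sit under the stated bound.
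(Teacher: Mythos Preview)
Your approach is correct and yields the stated bound, but it differs from the paper's in a notable way. The paper \emph{does} run a per-label argument: it applies Pinelis's inequality to each column error $\ve_i=\bar\vv_i-\vv_i$ separately, union-bounds over the $d$ labels (this is where the $\log(2d/\delta)$ originates), and then bounds $\|E\|_{HS}^2=\sum_i\|\ve_i\|^2$ by $\sum_i 2Nq(i)\ln(2d/\delta)=2N\ln(2d/\delta)$, using that the per-label variances scale with $q(i)$ and $\sum_i q(i)=1$. So your warning that a per-label strategy ``loses a factor $\sqrt d$'' is only true of the crude route $\|E\|_{HS}\le\sqrt d\,\max_x\|\bar\mu_x-\mu_x\|$; the paper's version avoids exactly this loss by tracking the label-dependent variance and summing squares. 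Your one-shot Hilbert--Schmidt concentration on $E=\tfrac1N\sum_n\zeta_n e_{\hat x_n}^\intercal$ is an equally valid and arguably cleaner alternative: it gets $\log(2/\delta)$ rather than $\log(2d/\delta)$ (which you then give back to match the statement), and it sidesteps the union bound entirely. One small inaccuracy: the $n=n'$ diagonal contributions to $\bar{\mG}$ and $\hat{\mG}$ are \emph{not} the same (they are $1$ versus $\|P_{x_n}\|_2^2$), but this is immaterial since your argument never relies on $\E[\bar{\mG}]=\hat{\mG}$, only on the exact Gram identities $\bar{\mG}=\bar\Phi^*\bar\Phi$ and $\hat{\mG}=\Phi^*\Phi$, which hold as written.
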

\begin{proof}[Proof of \cref{thm:plug-in}]
By~\cref{thm:appro_det2}, we have
$$\frac{|\det(\bar{\mG})-\det(\hat{\mG})|}{\det(\hat{\mG})}\le \left(1+\|\hat{\mG}^{-1}\|_2{\|\bar{\mG}-\hat{\mG}\|_2}\right)^d-1.$$
Hence with \cref{lem:concent_emp} and $\delta = 1/N$, we have $\frac{|\det(\bar{\mG})-\det(\hat{\mG})|}{\det(\hat{\mG})}= o(1)$, with probability greater than $1-1/N$.  Additionally, because the random variable $\det(\bar{\mG})$ is always bounded by $1$, the expectation \begin{equation}\label{eq:plugin1}
    \E[\det(\bar{\mG})] = (1+o(1))\det(\hat{\mG})
\end{equation}
when $\hat{\mG}^{-1}$ exists.  If $\hat{\mG}$ is not invertible, by~\cite[Theorem 2.12]{doi:10.1137/070704770},
\begin{equation}\label{eq:plugin2}
    |\det(\bar{\bm{G}})-\det(\hat{\bm{G}})| \le d\|\bar{\bm{G}}-\hat{\bm{G}}\|_2 \max\{\|\hat{\bm{G}}\|_2, \|\bar{\bm{G}}\|_2\}^{d-1} = o(1).
    \end{equation}
For all $\mP$ and $\mQ, \mQ'$, if
$\det(\hat{\mG}) = \det(\mQ^\intercal \mG\mQ)> \det((\mQ')^\intercal \mG\mQ') = \det(\hat{\mG}')>0$, by \cref{eq:plugin1} there exists a large enough $N_0$ so that any $\vx, \hat{\vx}, \hat{\vx}'$ with length at least $N_0$ and $\mQ, \mQ'$ so that $\E[\det(\bar{\mG})]>\E[\det(\bar{\mG}')]$.  Similar argument holds for $\det(\hat{\mG}) >\det(\hat{\mG}') = 0$ by~\cref{eq:plugin2}.  Therefore, the plug-in estimator asymptotically preserves all reliability orderings as \cref{thm:gram_preserve}.
\end{proof}

\begin{proof}[Proof of \cref{lem:concent_emp}]
Let $N_i = Nq_{\hat{\vx}}(i)$ be the number of report $i$ which is nonzero as $\mQ\in \mathcal{Q}_{\text{reg}}$.  Let $|\mathcal{Y}| = k$, and we can set $\phi: \mathcal{Y}\to \R^{k}$ be the delta vector $y\mapsto \mathbf{1}_y$.  We define $\bar{\vv}_i = \sum_{n: \hat{x}_n = i} \phi(y_n)$ and $\vv_i = \E[\bar{\vv}_i]$ as the sum of (empirical) mean of report $i\in \mathcal{X}$, and error $\ve_i = \bar{\vv}_i-\vv_i$.  Hence
    for all $i,j$,
    $\bar{\mG}(i,j) = \frac{1}{N^2}\sum_{n, n': \hat{x}_n = i, \hat{x}_{n'} = j}\langle \phi(y_n), \phi(y_{n'})\rangle = \frac{1}{N^2}\bar{\vv}_i^\intercal\bar{\vv}_j$, $\hat{\mG}(i,j) = \frac{1}{N^2}{\vv}_i^\intercal{\vv}_j$, and
    \begin{equation}\label{eq:concent_emp1}
        \bar{\mG}(i,j)-\hat{\mG}(i,j) = \frac{1}{N^2}\left(\vv_i^\intercal\ve_j+\ve_i^\intercal \vv_j + \ve_i^\intercal\ve_j \right)
    \end{equation}
    To bound the spectral norm of $\bar{\mG}-\hat{\mG}\in \R^{d\times d}$, for any $a\in \R^d$ with $\|a\|_2 = 1$, we define $\vv(a) = \sum_i a_i \vv_i$, $\ve(a) = \sum_i a_i\ve_i\in \R^{k}$, and $R_\vv = \sup_{\|a\| = 1} \|\vv(a)\|, R_\ve = \sup_{\|a\| = 1} \|\ve(a)\|$.  By \cref{eq:concent_emp1}
\begin{equation}\label{eq:concent_emp2}
    a^\intercal (\bar{\mG}-\hat{\mG})a = \frac{1}{N^2}(2 \vv(a)^\intercal \ve(a)+ \ve(a)^\intercal \ve(a))
    \le \frac{1}{N^2}(2R_\vv R_\ve+R_\ve^2).
\end{equation}
We first bound $R_\vv$.  For all $a$ with $\|a\| = 1$, let $\mV = N^2\hat{\mG}\in \R^{d\times d}$ where $\mV(i,j) = \vv_i^\intercal\vv_j$ which is positive semi definite
\begin{align*}
    \|\vv(a)\|^2 =& \sum_{i,j} a_i a_j \vv_i^\intercal\vv_j\\
    =& a^\intercal \mV a\\
    \le& \sum_i \vv_i^\intercal \vv_i\tag{Rayleigh quotient is upper bounded by the trace}\\
    =& \sum_i N^2\hat{\mG}(i,i)\tag{definition of $\vv_i$}\\
    \le& \sum_i N^2q_{\hat{\vx}}(i)^2\tag{because $\langle P_{x}, P_{x'}\rangle\le 1$ for any $x,x'$}\\
    \le& N^2\max_i q_{\hat{\vx}}(i)
\end{align*}
Therefore,
\begin{equation}\label{eq:concent_emp3}
    R_\vv\le N\sqrt{\max_i q_{\hat{\vx}}(i)}\le N
\end{equation}
We bound $R_\ve$ using Chernoff bound.   For each $i\in \mathcal{X}$, $\ve_i = \bar{\vv}_i-\vv_i = \sum_{n: \hat{x}_n = i}\phi(y_n)-\E \phi(y_n)$ is sum of $Nq_{\vx}(i)$ independent vectors in $\R^k$, and the norm of each vector is bounded by $1$.  Therefore, by~\cite[Theorem 3.5]{pinelis1994optimum}, for all $r_i\ge 0$
$$\Pr[\|\ve_i\|\ge r_i]\le 2\exp\left(-\frac{r_i^2}{2Nq_{\vx}(i)}\right)$$
Given any $\delta>0$ and $a$ with $\|a\| = 1$, we set $r_i = \sqrt{2Nq_{\vx}(i)\ln(2d/\delta)}$, and we have
$$\|\ve(a)\|^2\le \sum_i \|\ve_i\|^2\le \sum_i 2Nq_{\vx}(i)\ln(\frac{2d}{\delta}) = 2N\ln \frac{2d}{\delta}$$
Therefore,
\begin{equation}\label{eq:concent_emp4}
    R_\ve\le \sqrt{2N\ln \frac{2d}{\delta}}
\end{equation}
with probability at least $1-\delta$.  Plugging in \cref{eq:concent_emp3,eq:concent_emp4} to \cref{eq:concent_emp2}, we have
$$\|\bar{\mG}-\hat{\mG}\|_2\le \frac{1}{N^2}\left(2N\sqrt{2N\ln \frac{2d}{\delta}}+2N\ln \frac{2d}{\delta}\right) \le \frac{4\sqrt{\ln 2d/\delta}}{\sqrt{N}}+\frac{2\ln 2d/\delta}{N}.$$
\end{proof}

With~\cref{lem:concent_emp}, we further provide sample complexity bounds for both multiplicative and additive errors of the plug-in Gram determinant score which are polynomial in all relevant parameters.

\begin{proposition}
There exists $\epsilon_0>0$, so that the following holds
\begin{itemize}
    \item For any $d\in \mathbb{N}, \delta, \lambda>0$, and $0<\epsilon\le \epsilon_0$, if $\hat{\bm{G}}\succ \lambda \mathbb{I}$, the plug-in Gram determinant score satisfies
$$\Pr[|\bar{S}(\hat{\bm{x}}, \bm{y})-\Gamma|\le \epsilon \Gamma]\ge 1-\delta$$
for all $N\ge \frac{100 d^2 \ln \frac{d}{\delta}}{\lambda^2 \epsilon^2}$ where $\Gamma = det(\hat{\bm{G}})$.
\item For any $d\in \mathbb{N}, \delta>0$, $0<\epsilon\le \epsilon_0$ and $\hat{\bm{G}}$, the plug-in Gram determinant score satisfies
$$\Pr[|\bar{S}(\hat{\bm{x}}, \bm{y})-\Gamma|\le \epsilon]\ge 1-\delta$$
for all $N\ge \frac{100 d^2 \ln \frac{d}{\delta}}{\epsilon^2}$.
\end{itemize}
\end{proposition}

For the multiplicative error, we show that when $\hat{\bm{G}}$ is invertible, the required sample size is $O(\frac{d^2 \ln \frac{d}{\delta}}{\lambda^2 \epsilon^2})$ where $\lambda$ can be the smallest eigenvalue of $\hat{\bm{G}}$ as $\hat{\bm{G}}$ is positive semidefinite.
For the additive error, we can get a stronger bound $O(\frac{d^2 \ln \frac{d}{\delta}}{\epsilon^2})$ independent of the minimum eigenvalue.

This result implies \cref{thm:plug-in}, which shows that the reliability order is asymptotically preserved for any fixed pair of reports.
However, it does not yield a uniform, non-asymptotic guarantee on order preservation, as two report Gram matrices can be arbitrarily close, $\hat{\bm{G}} \approx \hat{\bm{G}}'$.
\begin{proof}
First, \Cref{lem:concent_emp} implies that
$$\|\bar{\bm{G}}-\hat{\bm{G}}\|_2\le 4\sqrt{\frac{\log 2d/\delta}{N}}+2\frac{\log 2d/\delta}{N}$$
with probability larger than $1-\delta$.  By taking $N_{\delta,\epsilon} = 100\cdot \frac{d^2 \ln \frac{d}{\delta}}{\lambda^2 \epsilon^2}$ and $N\ge N_{\delta,\epsilon}$, we can bound the error as
$$4\sqrt{\frac{\log 2d/\delta}{N}}+2\frac{\log 2d/\delta}{N} \le 4\sqrt{\frac{\lambda^2 \epsilon^2 \ln 2}{100 d^2}}+2 \frac{\lambda^2 \epsilon^2 \ln 2}{100 d^2}\le 5\sqrt{\frac{\lambda^2 \epsilon^2}{100 d^2}} = \frac{\lambda \epsilon}{2d}.$$
The first inequality is due to $N\ge N_{\delta,\epsilon}$.  The second holds when $\epsilon_0$ is small enough and $\epsilon\le \epsilon_0$ as $4\sqrt{\ln 2}<5$.  Note that the choice of $\epsilon_0$ can be independent of other parameters, because $\lambda\le 1$ and $d\ge 1$.

Second, by Theorem A.2, we have
\begin{align*}
    \frac{|\det(\bar{\bm{G}})-\det(\hat{\bm{G}})|}{\det(\hat{\bm{G}})}\le& \left(1+{\|\hat{\bm{G}}^{-1}\|_2}{\|\bar{\bm{G}}-\hat{\bm{G}}\|_2}\right)^d-1.\tag{by Theorem A.2}\\
    \le& \left(1+\frac{1}{\lambda}\cdot\frac{\lambda \epsilon}{2d}\right)^d-1.\tag{$\|\hat{\bm{G}}^{-1}\|_2\le 1/\lambda$}\\
    \le& e^{\epsilon/2}-1.\tag{$(1+x/d)^d\le e^x$}\\
    \le& \epsilon.\tag{by taking $\epsilon_0\le \ln 2$}
\end{align*}
This proves the first part.

For the second, part, we use \citet[Theorem 2.12]{doi:10.1137/070704770},
\begin{align*}
    |\det(\bar{\bm{G}})-\det(\hat{\bm{G}})|\le& d\|\bar{\bm{G}}-\hat{\bm{G}}\|_2 \max\{\|\hat{\bm{G}}\|_2, \|\bar{\bm{G}}\|_2\}^{d-1}\\
    \le& d\|\bar{\bm{G}}-\hat{\bm{G}}\|_2\tag{by Perron–Frobenius theorem}\\
    \le& d\cdot \frac{\epsilon}{2d}\le \epsilon\tag{by taking $N_{\delta,\epsilon}' = 100\cdot \frac{d^2 \ln \frac{d}{\delta}}{\epsilon^2}$}
\end{align*}
\end{proof}

\paragraph{Proof of \cref{thm:stratified}}
The core idea relies on the multi-linearity of the determinant, and we can approximately get samples of $\hat{\mG} = \mQ^\intercal\mG\mQ$ in the detail-free setting.  However, one caveat is that we may not have access to multiple independent samples from $\hat{\mG}$ as $\vx, \hat{\vx}$ are deterministic.  To circumvent this issue, we first observe that if $\hat{\vx} = \vx$, the observations are independently and identically distributed for each label, allowing an unbiased estimator for $\hat{\mG}$ and thus $\det(\hat{\mG})$.  If $\hat{\vx} \neq \vx$, our sampling scheme ensures that the expectation is bounded above by the Gram determinant score. This guarantees that exact match orderings are preserved, as truthful reports yield higher scores in expectation than any nontruthful reports.
\begin{proof}[Proof of \cref{thm:stratified}]
    By the definition of exact ordering, it is sufficient to show for any $\vx, \hat{\vx}$ with $\vx\succ_{\exact}^\vx \hat{\vx}$ and $\mP\in \mathcal{P}_{\text{indep}}$, $$\E_{\vy\sim P(\vx)}[score(\vx, \vy)]>\E_{\vy\sim P(\vx)}[score(\hat{\vx}, \vy)].$$
When the minimum occurrence is at least two, the expectation of \cref{eq:stratified} involves three sources of randomness: observation $\vy$, permutations $\sigma$, and the choice of $Col$ and $Row$.  The expectation of $score(\vx, \vy)$ only depends on the first two as difference indexing does not change the distribution of score.  However, for $score(\hat{\vx}, \vy)$, the third part will kick in.

Given the index sets $Col, Row$, we define $\mQ^{Col}, \mQ^{Row}\in \R^{d\times d}$ so that
\begin{equation}\label{eq:stratified1}
    \mQ^{Col}(i,j) = {q_{\hat{\vx}}(j)}\sum_{n\in Col} \mathbf{1}[x_n = i, \hat{x}_n = j] = q_{\hat{\vx}}(j) \mathbf{1}[x_{j,Col} = i]
\end{equation}
and $\mQ^{Row}(i,j)$ similarly
which are the misreport matrix when restricting reports in $Col$ and $Row$ respectively. As $Col$ can be seen as stratified sampling where each report has exactly one element in $Col$, $\sum_{n\in Col} \mathbf{1}[x_n = i, \hat{x}_n = j] = \mQ_{\vx|\hat{\vx}}(i,j)$, and the expectation over the choice of index is
\begin{equation}\label{eq:stratified2}
    \E[\mQ^{Col}] = \E[\mQ^{Row}] = \mQ.
\end{equation}
With the above notation, we first compute the expectation of \cref{eq:stratified} conditional on $Col$ and $Row$.
\begin{align*}
    &\E[score(\hat{\vx}, \vy)\mid Col, Row]\\
    =& \E\left[d! sgn(\sigma)\prod_{k,l\in [d], l = \sigma(k)} \mathbf{1}[y_{k,Row} = y_{l,Col}]q_{\hat{\vx}}(k)q_{\hat{\vx}}(l)\mid Col, Row\right]\\
    =& \E\left[\sum_{\sigma\in sym(d)} sgn(\sigma)\prod_{k,l\in [d], l = \sigma(k)} \mathbf{1}[y_{k,Row} = y_{l,Col}]q_{\hat{\vx}}(k)q_{\hat{\vx}}(l)\mid Col, Row\right]\tag{random $\sigma$}\\
    =& \E\left[\sum_{\sigma\in sym(d)} sgn(\sigma)\prod_{k,l\in [d], l = \sigma(k)} \langle P_{x_{k,Row}}, P_{x_{l,Col}}\rangle q_{\hat{\vx}}(k)q_{\hat{\vx}}(l)\mid Col, Row\right]\tag{$Col\cap Row = \emptyset$}\\
    =& \E\left[\sum_{\sigma\in sym(d)} sgn(\sigma)\prod_{k,l\in [d], l = \sigma(k)} \sum_{i,j} \mQ^{Row}(i,k)\mQ^{Col}(j,l)\langle P_{i}, P_{j}\rangle \mid Col, Row\right]\tag{by \cref{eq:stratified1}}\\
    =&  \E\left[\sum_{\sigma\in sym(d)} sgn(\sigma)\prod_{k,l\in [d], l = \sigma(k)} \left((\mQ^{Row})^\intercal \mG \mQ^{Col}\right)(k,l) \mid Col, Row\right]\\
    =&  \E\left[\det\left((\mQ^{Row})^\intercal \mG \mQ^{Col}\right) \mid Col, Row\right]
\end{align*}
Therefore,
\begin{equation}\label{eq:stratified3}
    \E[score(\hat{\vx}, \vy)] = \E\left[\det\left((\mQ^{Row})^\intercal \mG \mQ^{Col}\right)\right] = \E\left[\det\left((\mQ^{Row})^\intercal \mQ^{Col}\right)\right]\det( \mG)
\end{equation}
First, when $\hat{\vx} = \vx$, because $\mQ\in \mathcal{Q}_{L}$, and $N\ge 2Ld$, every label has at least $N\min_{i}q_{\vx}(i)\ge 2Ld\frac{1}{Ld-L+1}\ge 2$ reports by \cref{lem:bounded_ratio}, and the minimum occurrence is at least two.  Moreover, $\mQ^{Col} = \mQ^{Row} = \mQ$ are identity matrices regardless the choice of $Col$ and $Row$, by \cref{eq:stratified3}, we have
\begin{equation}\label{eq:stratified4}
    \E[score(\vx, \vy)] = \det(\mG).
\end{equation}

On the other hand, for $\hat{\vx}$ with $\vx\succ^\vx_{\exact}\hat{\vx}$, if the minimum occurrence is less than two, the score would be zero and less than \cref{eq:stratified_kernel}.  Otherwise, by Cauchy–Schwarz inequality, we have
\begin{equation}\label{eq:stratified5}
\E\left[\det\left((\mQ^{Row})^\intercal \mQ^{Col}\right)\right]\le \E\left[\det\left((\mQ^{Row})\right)\right]\E\left[\det\left(\mQ^{Col}\right)\right]
\end{equation}
Formally, consider $\mathcal{I}$ the collection of all possible index set of size $d$ where each label occurs exactly once.  Then we can generate $Col$ and $Row$ by sampling two distinct $(i,j)$ element of $\mathcal{I}$ uniformly at random.  In particular, if we set $a_i$ be the determinant of the misreporting matrix of the $i$-th index set in $\mathcal{I}$, the joint distribution of $(\det(\mQ^{Col}), \det(\mQ^{Row}))$ equals $(a_i, a_j)$ and
\begin{align*}
&\E\left[\det\left((\mQ^{Row})\right)\right]\E\left[\det\left(\mQ^{Col}\right)\right]-\E\left[\det\left((\mQ^{Row})^\intercal \mQ^{Col}\right)\right]\\
=& \left(\frac{1}{|\mathcal{I}|}\sum_{i} a_i\right)\left(\frac{1}{|\mathcal{I}|}\sum_{j} a_j\right)-\frac{1}{|\mathcal{I}|(|\mathcal{I}|-1)}\sum_{i\neq j\in \mathcal{I}} a_i a_j\\
=& \frac{1}{|\mathcal{I}|^2}\sum_i a_i^2-\frac{1}{|\mathcal{I}|^2(|\mathcal{I}|-1)}\sum_{i\neq j} a_ia_j\\
=& \frac{1}{2|\mathcal{I}|^2(|\mathcal{I}|-1)}\sum_{i\neq j}(a_i-a_j)^2\ge 0
\end{align*}
which proves \cref{eq:stratified5}.
Finally, using the first part of \cref{thm:gram_preserve} and \cref{eq:stratified2,eq:stratified3,eq:stratified4,eq:stratified5}
$$\E[score(\hat{\vx}, \vy)]\le \det(\mQ^\intercal \mG\mQ) = \det(\hat{\mG})< \det(\mG) = \E[score(\vx, \vy)].$$
\end{proof}

\section{Proofs for \cref{sec:examples}}\label{app:examples}

\begin{proof}[Proof of \cref{thm:gram_preserve_kernel}]
To use \cref{lem:preserve_exact,lem:preserve_hamming,lem:preserve_blackwell}, it is sufficient to show that $\mG_K$ is positive definite for all $\mP\in \mathcal{P}_{\text{indep}}$ so that for any nonzero sequence $a:\mathcal{X}\to \R$, the quadratic form is positive,
\begin{equation}\label{eq:quadratic_form}
    \sum_{x,x'\in \mathcal{X}} a(x)G(x,x')a(x')>0.
\end{equation}

First for any integrally strictly positive definite kernel, recall that the kernel mean embedding of $P_x$ is $\phi(P_x) = \E_{y\sim P_x}[\phi(y)]\in \mathcal{H}$, so
$\sum_{x,x'} a(x)G_K(x,x')a(x') = \|\sum_x a(x)\phi(P_x)\|^2\ge 0$ which shows $\mG_K$ is positive semi definite.
If the equality happens, by linearity of integration, $0 = \|\sum_x a(x)\phi(P_x)\|^2 = \iint_\mathcal{Y} K(y,y') d\mu(y)d\mu(y')$ where $\mu = \sum_x a(x) P_x$ is a finite signed measure.  Therefore, $\mu = \sum_x a(x) P_x = 0$ because $K$ is integrally strictly positive definite.  Finally $a(x) = 0$ as columns of $\mP$ are linearly independent.  Therefore the statement holds for integrally strictly positive definite kernels.  Additionally, by \cite[Theorem 3.1]{ziegel2022characteristickernelshilbertspaces}, the Gaussian kernel is integrally strictly positive definite.

Second, given a feature map $\phi:\mathcal{Y}\to \R^k$, \cref{eq:quadratic_form} becomes $\|\sum_{x,y} a(x)P(y,x)\phi(y)\|_2^2$.  Because $\mP\in \mathcal{P}_{\text{indep}}$ and $\phi$ is injective, the quadratic form equals zero if and only if $a(x) = 0$ for all $x$.  Moreover, delta kernel is injective, so the statement also holds.

Finally, for any pseudo-posterior observations, \cref{eq:quadratic_form} can be written as
\begin{align*}
    &\langle \sum_{x,y} a(x) \mP(y,x)y, \sum_{x',y'} a(x') \mP(y',x')y'\rangle\\
    =& \sum_{x,x',y,y'} a(x)a(x')\mP(y,x)\mP(y',x')\langle \tilde{P}[\rx|y],\tilde{P}[\rx|y']\rangle\\
    =& \sum_{x,x',y,y'} a(x)a(x')\mP(y,x)\mP(y',x')\sum_{x''}\tilde{P}[\rx = x''|y]\tilde{P}[\rx = x''|y']
\end{align*}
Let $b(y) = \sum_x a(x)\mP(y,x)$, $w(y) = \sum_{x} \mP(y,x)\tilde{q}(x)$. Then $\sum_{x}\tilde{P}[\rx = x|y]\tilde{P}[\rx = x|y']= \sum_{x} \mP(y,x)\\\frac{\tilde{q}(x)}{w_(y)}\mP(y',x)\frac{\tilde{q}(x)}{w_(y')}$\footnote{Here we set $0/0 = 0$ if $w(y)= 0$} and
\begin{align*}
    &\sum_{x,x',y,y'} a(x)a(x')\mP(y,x)\mP(y',x')\sum_{x''}\tilde{P}[\rx = x''|y]\tilde{P}[\rx = x''|y']\\
    =& \sum_{y,y'} b(y)b(y')\sum_{x} \mP(y,x)\frac{\tilde{q}(x)}{w_(y)}\mP(y',x)\frac{\tilde{q}(x)}{w_(y')}\\
    =& \sum_{x} \tilde{q}(x)^2\left(\sum_y \mP(y,x)\frac{b(y)}{w_(y)}\right)^2
\end{align*}
Because $\tilde{q}$ has full support, the quadratic form equals zeros if and only if $\sum_y \mP(y,x)\frac{b(y)}{w_(y)} = 0$ for all $x$.  Equivalently, if we set vector $\vb = \mP a\in \R^{|\mathcal{Y}|}$ and $\mD_w$ be the diagonal matrix with $w$, we have $\vzero = \vb^\intercal \mD_w \mP = a^\intercal \mP^\intercal\mD_w\mP$. Since $\mP$ has full column rank and $w(y) = 0$ when $\mP(x,y) = 0$ for all $x$, $a(x) = 0$ for all $x$.
\end{proof}

\section{Alternatives to Gram Determinant Score}
\label{app:alternative_score}
There is a long line of research on measuring the stochastic relationship between random variables.  We may view them as data reliability scores applied to the reported data $\hat{\vx}$ and observations $\vy$.  While we do not attempt to prove a formal reliability ordering among these candidates, we introduce them here to support the comparative analysis.

\paragraph{$\Phi$-mutual information}
\begin{definition}[$\Phi$-divergence~\citep{csiszar1964informationstheoretische, Morimoto1963, ali1966general}]\label{def:fdiv} Let $\Phi:[0,\infty)\to \R$ be a convex function with $\Phi(1) = 0$.  Let $P$ and $Q$ be two probability distributions on a common measurable space $(\Omega, \mathcal{F})$.  The \emph{$\Phi$-divergence of $Q$ from $P$} where $P\ll Q$\footnote{$P$ is absolutely continuous with respect to $Q$: for any measurable set $A\in \mathcal{F}$, $Q(A) = 0\Rightarrow P(A) = 0$.} is defined as
$D_\Phi(P\| Q):= \E_Q\left[\Phi\left(P/Q\right)\right].$\footnote{$P/Q$ is the Radon-Nikodym derivative between measures $P$ and $Q$, and it is equal to the ratio of density function.}
\end{definition}
We can use these divergences to measure how interdependent two random variables $\rx$ and $\ry$ are.  Formally, Let $P_{\rx,\ry}$ be a distribution over $(\rx,\ry)\in \mathcal{X}\times\mathcal{Y}$, and $P_\rx$ and $P_\ry$ be marginal distributions of $\rx$ and $\ry$ respectively.  We set $P_\rx P_\ry$ be the tensor product between $P_\rx$ and $P_\ry$ such that $P_\rx P_\ry(x,y) = P_\rx(x)P_\ry(y)$.  We call $D_\Phi(P_{\rx,\ry}\|P_\rx P_\ry)$ the \emph{$\Phi$-mutual information between $\rx$ and $\ry$}.
\begin{enumerate}
    \item Total variation has $\Phi(a)$ as $\frac{1}{2}|a-1|$.
    \item KL-divergence has $a\log a$
    \item $\chi^2$-divergence has $a^2-1$
    \item Squared Hellinger distance has $\left(1-\sqrt{a}\right)^2$
\end{enumerate}

In the partial knowledge setting, we can access the $\mJ:=\mP\mQ$ which can be seen as a joint distribution between reported data and observation $\mJ = P_{\rx,\ry}$, and set $$S_\Phi(\mP\mQ) = D_\Phi(P_{\rx,\ry}\|P_\rx P_\ry).$$

This family of score satisfy the data processing inequality, which is analogous to our \emph{weak} Blackwell dominant ordering so that garbling the report can only decrease the score.  Nevertheless, the impossibility results in \cref{sec:impossible} still apply. In addition, they are generally not experiment‑agnostic, and lack kernelized extensions as in \cref{sec:examples} for complicated observation space $\mathcal{Y}$.


\paragraph{Family of symmetric gauge on singular values}
Our Gram determinant is essentially a functional on the singular values of $\mJ = \mP\mQ$ and sub multiplicative under right multiplication by contraction.  One may additionally consider functional on the singular values of the whitened matrix.  Formally, given a joint distribution $\mJ:= \mP\mQ$, let
$$\bar{\mJ} = \mD_{\vy}^{-1/2}(\mJ-\mu_{\vy}\mu_{\hat{\vx}}^\intercal)\mD_{\hat{\vx}}^{-1/2}$$
where $\mu_{\hat{\vx}}$ and $\mu_\vy$ are marginal distributions and $\mD_{\hat{\vx}}, \mD_\vy$ are diagonal matrix of them respectively.
Given a matrix $\mA$, $\sigma(\mA)$ denote the singular value list of $\mA$, we can find a symmetric gauge $\psi$ and define our score as
$$S_\psi(\mJ) = \psi(\sigma(\bar{\mJ})).$$
Let $\bar{\vs} = \sigma(\bar{\mJ}) = (\bar{s}_1,\dots,\bar{s}_d)$ with $\bar{s}_1\ge \bar{s}_2\ge\dots \bar{s}_d\ge 0$.
\begin{enumerate}
    \item Top-$k$ volume has $\psi_{\wedge k}(\vs) = \prod_{i = 1}^k \bar{s}_i$
    \item  Maximal correlation $\psi_{\max} = \bar{s}_1$.  The maximum correlation can be also written as $$\max_{(f, g)\in \mathcal{F}} \E [f(\rx)g(\ry)]$$ where $\mathcal{F}$ is the collection of real-valued random variables so that $\E f(\rx) = \E g(\ry) = 0$ and $\E f(\rx)^2 = \E g(\ry)^2 = 1$.
    \item Ky-Fan $k$-sum $\sum_{i = 1}^k \bar{s}_i$
    \item $\chi^2$-mutual information $I_{\chi^2}(\rx,\ry) = \sum_{x,y}\mu_{\hat{\vx}}(x)\mu_\vy(y)(\frac{\mJ(y,x)}{\mu_{\hat{\vx}}(x)\mu_\vy(y)}-1)^2 = \|\bar{\mJ}\|_F = \sum_i \bar{s}_i^2$
\end{enumerate}
Similarly, the impossibility results in \cref{sec:impossible} still apply and they are generally not experiment‑agnostic.

\section{Additional Experiments and Details}

\subsection{Experiment Details}

Due to space limitations, we omit some settings in the main paper. Here, we provide the details of how we compute error bars and how we obtain the ranking‐accuracy across sample sizes in \cref{fig:delta2}.

\paragraph{Error bars.}
Let $M$ be the number of independent trials.  For each trial $m\in[M]$, let $score^{(m)}$ denote the determinant score, and similarly let $\hamming^{(m)}$ and $\ell_2^{(m)}$ denote the Hamming distance and $\ell_2$‐norm error, respectively.  We compute the sample mean
$$
\overline{score} = \frac{1}{M}\sum_{m=1}^M score^{(m)}
$$
and the standard error of the mean
$$
SE(\overline{score})
=
\frac{1}{\sqrt{M(M-1)}}
\sqrt{\sum_{m=1}^M\bigl(score^{(m)}-\overline{score}\bigr)^2}.
$$
Under approximate normality, we report a 95\% confidence interval as
$
\overline{score}\pm1.96 SE(\overline{score})
$ in \cref{fig:det_vs_p2}, \cref{fig:hamming_vs_det2} and \cref{fig:l2_vs_det2}.
The same procedure is applied to $\hamming^{(m)}$ and $\ell_2^{(m)}$ to yield their error bars in \cref{fig:hamming_vs_det2,fig:l2_vs_det2}.

\paragraph{Ranking accuracy across sample sizes.}
In \cref{fig:delta2}, we plot the fraction of trials in which the reversed ranking induced by the determinant score agrees with the ranking induced by each baseline metric—namely the reporting probability $p$, the Hamming distance, and the $\ell_2$‐norm—over six noise levels
$
\mathcal{P} = \{0.0,\,0.1,\,0.2,\,0.3,\,0.4,\,0.5\}.
$
Concretely, in each trial $m$ we form three vectors
\[
\bigl(score^{(m)}_p\bigr)_{p\in\mathcal{P}},\quad
\bigl(\hamming^{(m)}_p\bigr)_{p\in\mathcal{P}},\quad
\bigl({\ell_2}^{(m)}_p\bigr)_{p\in\mathcal{P}}.
\]
We then check whether the total order of $\bigl(score^{(m)}_p\bigr)$ in decreasing order matches the order of $\bigl(\hamming^{(m)}_p\bigr)$ in increasing order (and similarly for $\ell_2$ and for $p$ itself).  If they coincide, trial $m$ is counted as a “correct” ranking.  The plotted accuracy is
\[
\frac{1}{M}\sum_{m=1}^M \mathbf{1}\bigl\{\text{orders agree in trial }m\bigr\}.
\]
A random guess among the $6!$ possible orderings yields a baseline accuracy of $1/6!\approx 0.00139$.

\subsection{Additional Experiments}

\subsubsection{Comparison on Delta and Gaussian Kernel}

\begin{table}[htbp]
    \centering
    \begin{tabular}{c c c c}
    \toprule
        Figure & Experiments & Manipulation & Kernel \\
        \hline
        \Cref{fig:all_results2} &  Random exp & uniform~\cref{eq:uniform_misreport} & delta\\
        \Cref{fig:all_results_exp2nd} &  Random exp & normal~\cref{eq:normal_misreport} & delta\\
        \Cref{fig:all_results_exp2ug} &  Random exp & uniform & Gaussian\\
        \Cref{fig:all_results_exp2ng} &  Random exp & normal & Gaussian\\
        \Cref{fig:all_results_exp1gu} &  Gaussian & uniform & Gaussian\\
        \Cref{fig:all_results_exp1gn} &  Gaussian & normal & Gaussian\\
        \Cref{fig:all_results_exp1du} &  Gaussian & uniform & delta\\
        \Cref{fig:all_results_exp1dn} &  Gaussian & normal & delta\\
        \bottomrule
    \end{tabular}
    \caption{Table for settings of the experiments on synthetic data.}
\end{table}
In this section, we still use the dataset and experiment settings from Experiment 1. However, besides the uniform random manipulation introduced in \cref{eq:uniform_misreport}, we consider \emph{normal manipulation}:
\begin{equation}\label{eq:normal_misreport}
    \hat{\vx}_k \;\sim\;
    \operatorname{clip}\bigl(1,\,d,\,
        \operatorname{round}\bigl(\mathcal{N}(\vx,\sigma_0)\bigr)
    \bigr).
\end{equation}
This manipulation introduces localized perturbations to the data.
We adopt $\sigma_0 \in \{0.30, 0.37, 0.44, \dots, 1.00\}$ in our experiments and refer to this type of manipulation as normal manipulation. For the matched rankings results, the rankings are computed for 6 data points with $\sigma_0 \in \{0.30, 0.44, 0.58, 0.72, 0.86, 1.00\}$.

Moreover, we also use the \emph{Gaussian kernel}
$$K(y,y') = \exp\left(-100\left\|y - y'\right\|_2^2\right)$$
besides the delta kernel $K(y,y') = \mathbf{1}[y = y']$.

\begin{figure}[ht]
  \centering
  \begin{subfigure}[b]{0.24\textwidth}
    \includegraphics[width=\textwidth]{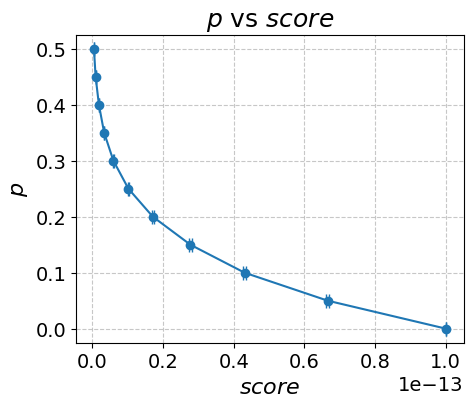}
    \caption{$score$ vs.\ $\sigma_0$}
    \label{fig:exp2-p}
  \end{subfigure}
  \hfill
  \begin{subfigure}[b]{0.24\textwidth}
    \includegraphics[width=\textwidth]{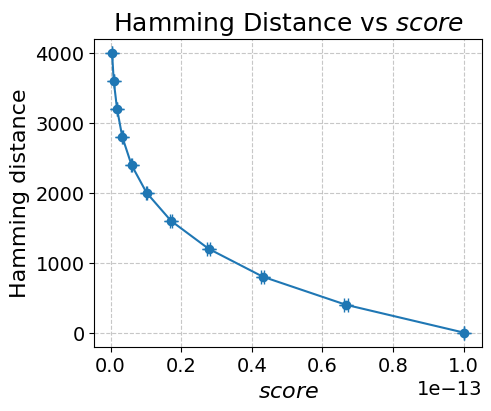}
    \caption{Hamming vs.\ $score$}
    \label{fig:exp2-hamming}
  \end{subfigure}
  \hfill
  \begin{subfigure}[b]{0.24\textwidth}
    \includegraphics[width=\textwidth]{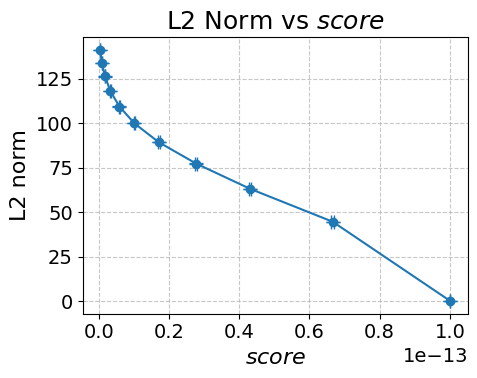}
    \caption{$\ell_2$ norm vs.\ $score$}
    \label{fig:exp2-norm}
  \end{subfigure}
  \hfill
  \begin{subfigure}[b]{0.24\textwidth}
    \includegraphics[width=\textwidth]{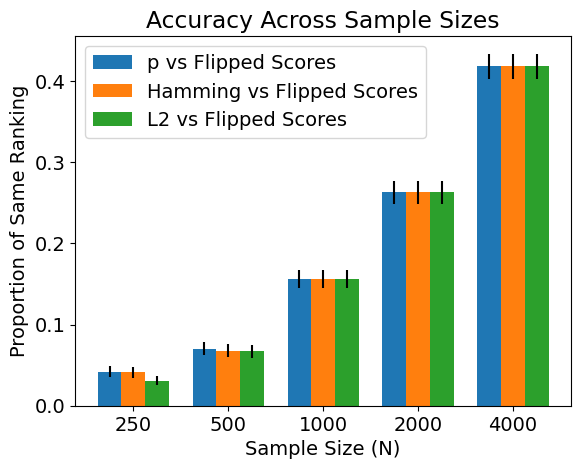}
    \caption{matched rankings}
    \label{fig:exp2}
  \end{subfigure}
  \caption{Experiments of plug-in Gram determinant reliability
score with delta kernel on categorical synthetic data with uniform manipulation in \cref{eq:uniform_misreport}.}
\end{figure}

\begin{figure}[ht]
  \centering
  \begin{subfigure}[b]{0.24\textwidth}
    \includegraphics[width=\textwidth]{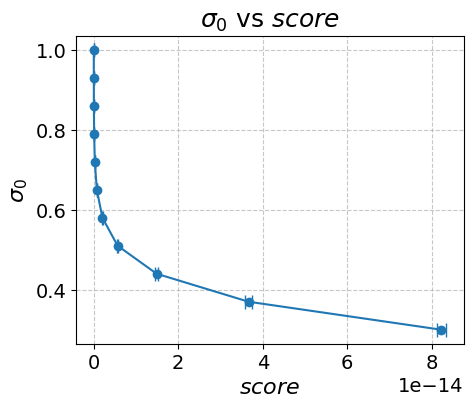}
    \caption{$score$ vs.\ $\sigma_0$}
    \label{fig:exp2nd-p}
  \end{subfigure}
  \hfill
  \begin{subfigure}[b]{0.24\textwidth}
    \includegraphics[width=\textwidth]{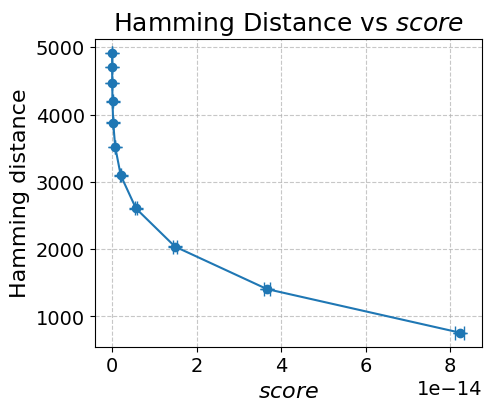}
    \caption{Hamming vs.\ $score$}
    \label{fig:exp2nd-hamming}
  \end{subfigure}
  \hfill
  \begin{subfigure}[b]{0.24\textwidth}
    \includegraphics[width=\textwidth]{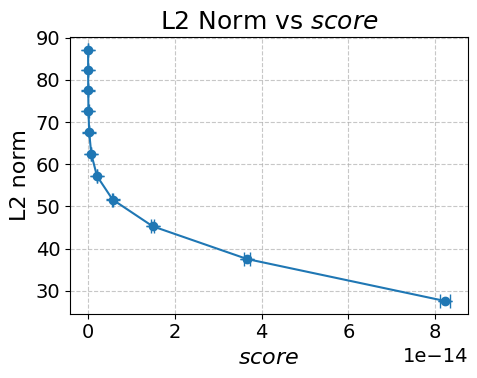}
    \caption{$\ell_2$ norm vs.\ $score$}
    \label{fig:exp2nd-norm}
  \end{subfigure}
  \hfill
  \begin{subfigure}[b]{0.24\textwidth}
    \includegraphics[width=\textwidth]{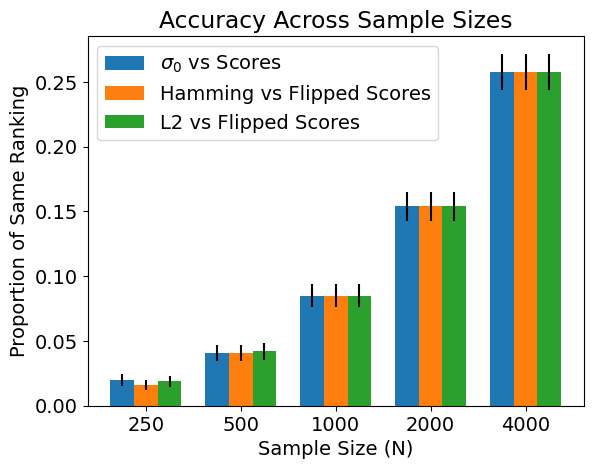}
    \caption{matched rankings}
    \label{fig:exp2nd}
  \end{subfigure}
  \caption{Experiments of plug-in Gram determinant reliability
score with delta kernel on categorical synthetic data with normal manipulation in \cref{eq:normal_misreport}.}
  \label{fig:all_results_exp2nd}
\end{figure}

\begin{figure}[ht]
  \centering
  \begin{subfigure}[b]{0.24\textwidth}
    \includegraphics[width=\textwidth]{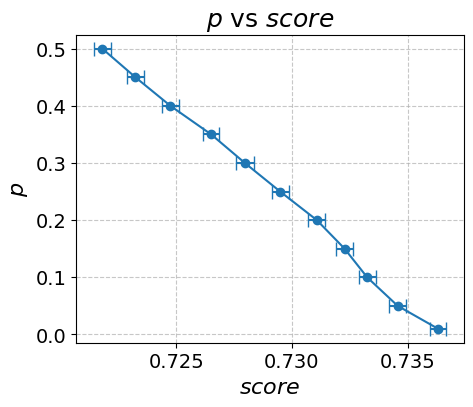}
    \caption{$score$ vs.\ $p$}
    \label{fig:exp2ug-p}
  \end{subfigure}
  \hfill
  \begin{subfigure}[b]{0.24\textwidth}
    \includegraphics[width=\textwidth]{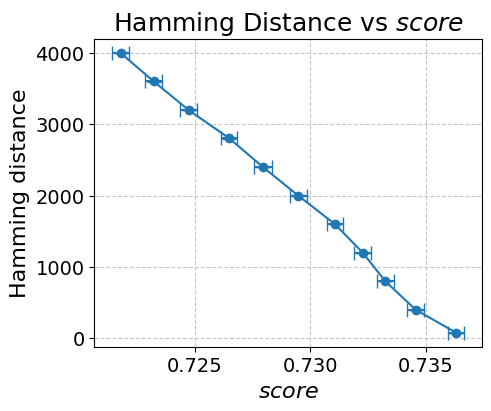}
    \caption{Hamming vs.\ $score$}
    \label{fig:exp2ug-hamming}
  \end{subfigure}
  \hfill
  \begin{subfigure}[b]{0.24\textwidth}
    \includegraphics[width=\textwidth]{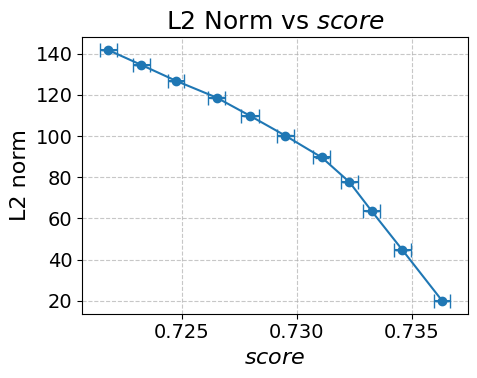}
    \caption{$\ell_2$ norm vs.\ $score$}
    \label{fig:exp2ug-norm}
  \end{subfigure}
  \hfill
  \begin{subfigure}[b]{0.24\textwidth}
    \includegraphics[width=\textwidth]{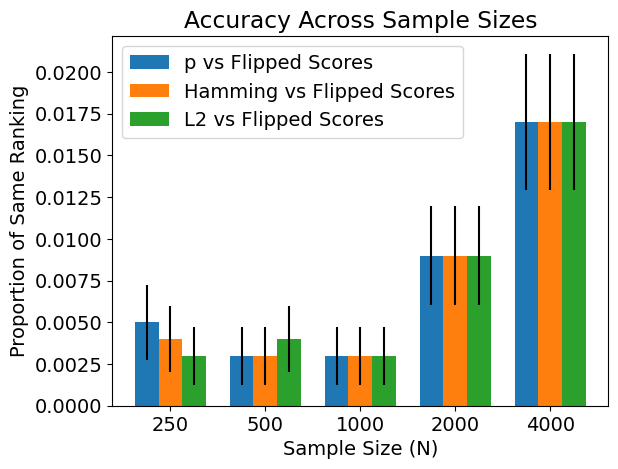}
    \caption{matched rankings}
    \label{fig:exp2ug}
  \end{subfigure}
  \caption{Experiments of plug-in Gram determinant reliability
score with Gaussian kernel on categorical synthetic data with uniformly random manipulation in \cref{eq:uniform_misreport}.}
  \label{fig:all_results_exp2ug}
\end{figure}

\begin{figure}[ht]
  \centering
  \begin{subfigure}[b]{0.24\textwidth}
    \includegraphics[width=\textwidth]{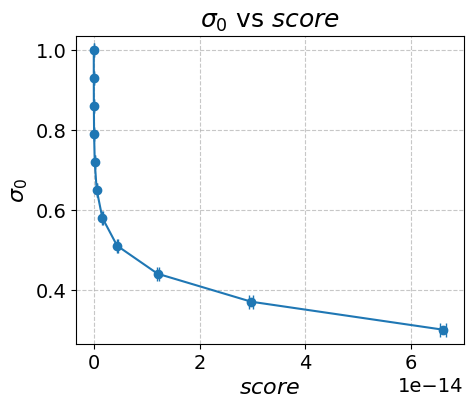}
    \caption{$score$ vs.\ $\sigma_0$}
    \label{fig:exp2ng-p}
  \end{subfigure}
  \hfill
  \begin{subfigure}[b]{0.24\textwidth}
    \includegraphics[width=\textwidth]{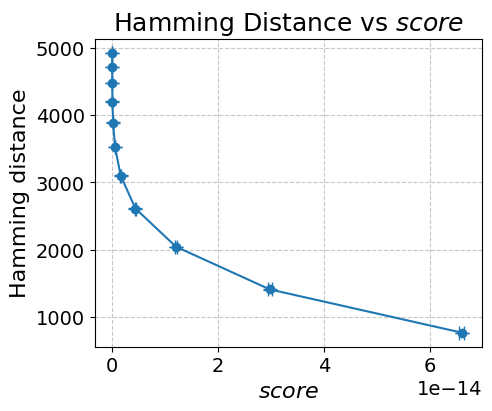}
    \caption{Hamming vs.\ $score$}
    \label{fig:exp2ng-hamming}
  \end{subfigure}
  \hfill
  \begin{subfigure}[b]{0.24\textwidth}
    \includegraphics[width=\textwidth]{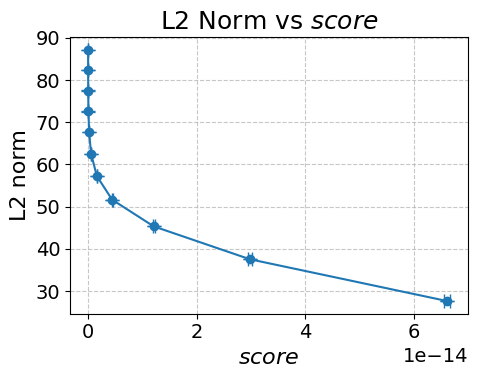}
    \caption{$\ell_2$ norm vs.\ $score$}
    \label{fig:exp2ng-norm}
  \end{subfigure}
  \hfill
  \begin{subfigure}[b]{0.24\textwidth}
    \includegraphics[width=\textwidth]{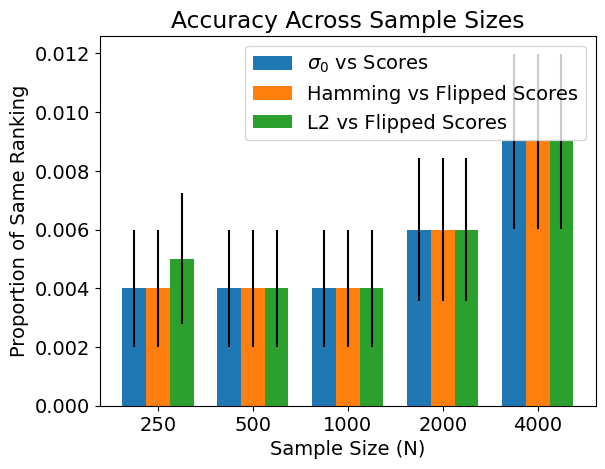}
    \caption{matched rankings}
    \label{fig:exp2ng}
  \end{subfigure}
  \caption{Experiments of plug-in Gram determinant reliability
score with Gaussian kernel on categorical synthetic data with normal manipulation in \cref{eq:normal_misreport}.}
  \label{fig:all_results_exp2ng}
\end{figure}

From \cref{fig:all_results_exp2nd,fig:all_results_exp2ug,fig:all_results_exp2ng}, we find that the Gram determinant score behaves consistently as a measure of reliability across the kernels considered. It is negatively correlated with all reliability metrics. As the sample size increases, its alignment with the other reliability metrics also improves. For this categorical dataset, the delta kernel outperforms the Gaussian kernel, achieving both lower standard error and higher accuracy for matched rankings across sample sizes.

We next create a synthetic dataset with continuous observations. Instead of a random categorical experiment, each $y_i$ is sampled from a normal distribution $\mathcal{N}(x_i, \sigma)$ centered at $x_i$. We set $\sigma = 0.1$ and $d=4$; all other experimental settings remain the same as in Experiment~1.

Because $\mathcal{Y}$ is continuous rather than categorical, we cannot directly apply the plug-in Gram determinant score with the delta kernel. Instead, we create a sequence $\bar{\mathbf{y}}$ by bucketing $\mathbf{y}$ into $d$ bins using empirical quantiles, and then apply the plug-in Gram determinant score to $\hat{\vx}$ and $\bar{\mathbf{y}}$.

\begin{figure}[htbp]
  \centering
  \begin{subfigure}[b]{0.24\textwidth}
    \includegraphics[width=\textwidth]{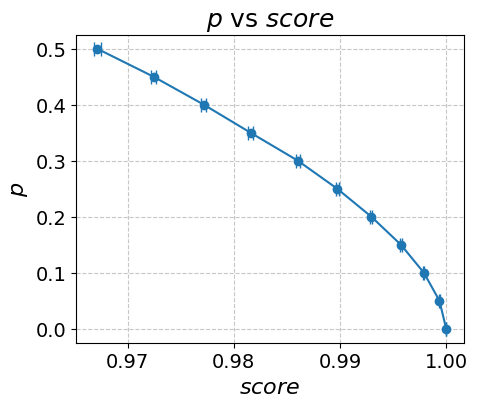}
    \caption{$score$ vs.\ $p$}
    \label{fig:exp1gu-p}
  \end{subfigure}
  \hfill
  \begin{subfigure}[b]{0.24\textwidth}
    \includegraphics[width=\textwidth]{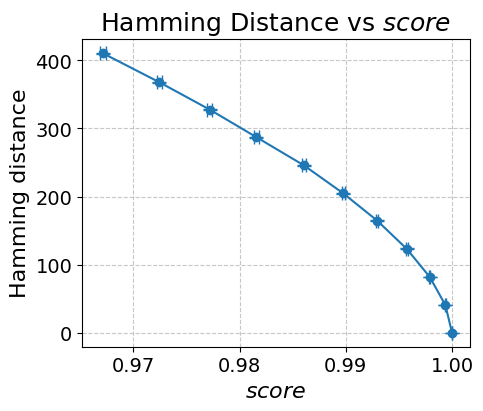}
    \caption{Hamming vs.\ $score$}
    \label{fig:exp1gu-hamming}
  \end{subfigure}
  \hfill
  \begin{subfigure}[b]{0.24\textwidth}
    \includegraphics[width=\textwidth]{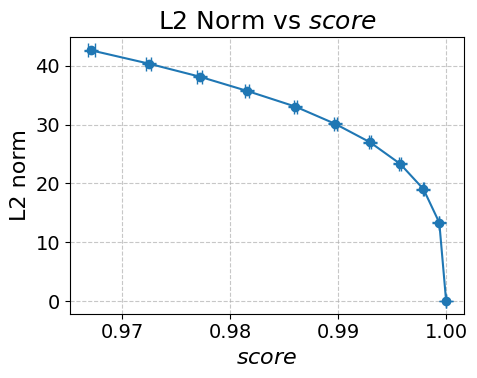}
    \caption{$\ell_2$ norm vs.\ $score$}
    \label{fig:exp1gu-norm}
  \end{subfigure}
  \hfill
  \begin{subfigure}[b]{0.24\textwidth}
    \includegraphics[width=\textwidth]{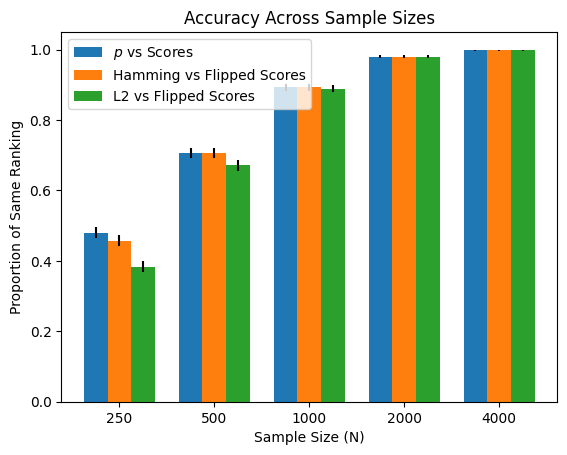}
    \caption{matched rankings}
    \label{fig:exp1gu}
  \end{subfigure}
  \caption{Experiments of plug-in Gram determinant reliability
score with Gaussian kernel on Gaussian synthetic data with uniformly random manipulation in \cref{eq:uniform_misreport}.}
  \label{fig:all_results_exp1gu}
\end{figure}

\begin{figure}[htbp]
  \centering
  \begin{subfigure}[b]{0.24\textwidth}
    \includegraphics[width=\textwidth]{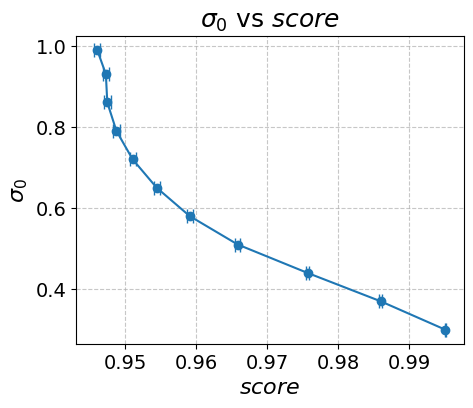}
    \caption{$score$ vs.\ $\sigma_0$}
    \label{fig:exp1gn-p}
  \end{subfigure}
  \hfill
  \begin{subfigure}[b]{0.24\textwidth}
    \includegraphics[width=\textwidth]{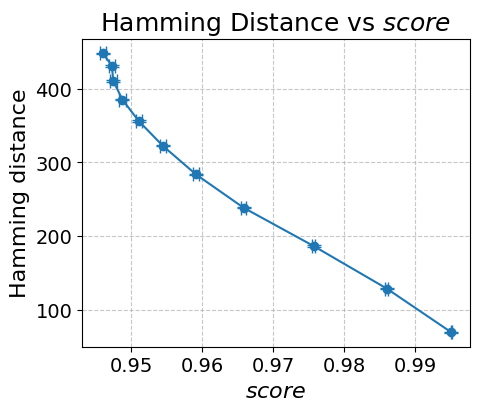}
    \caption{Hamming vs.\ $score$}
    \label{fig:exp1gn-hamming}
  \end{subfigure}
  \hfill
  \begin{subfigure}[b]{0.24\textwidth}
    \includegraphics[width=\textwidth]{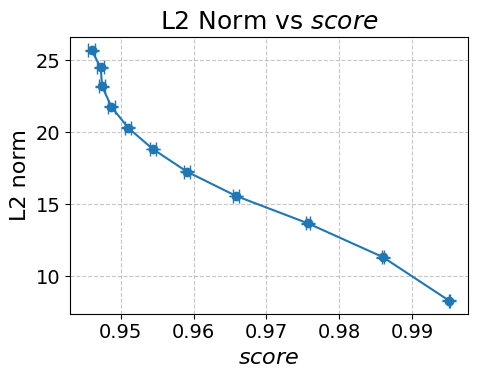}
    \caption{$\ell_2$ norm vs.\ $score$}
    \label{fig:exp1gn-norm}
  \end{subfigure}
  \hfill
  \begin{subfigure}[b]{0.24\textwidth}
    \includegraphics[width=\textwidth]{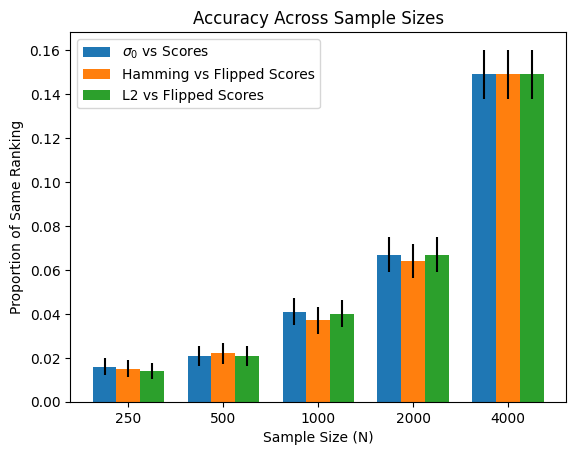}
    \caption{matched rankings}
    \label{fig:exp1gn}
  \end{subfigure}
  \caption{Experiments of plug-in Gram determinant reliability
score with Gaussian kernel on Gaussian synthetic data with normal manipulation in \cref{eq:normal_misreport}.}
  \label{fig:all_results_exp1gn}
\end{figure}

\begin{figure}[htbp]
  \centering
  \begin{subfigure}[b]{0.24\textwidth}
    \includegraphics[width=\textwidth]{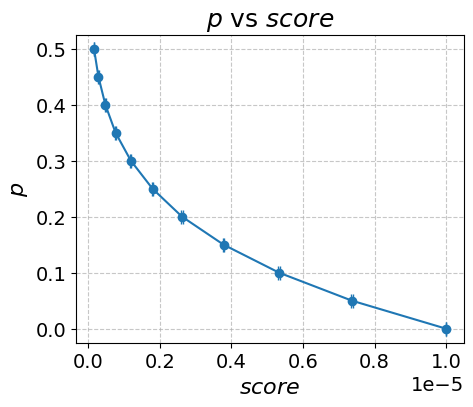}
    \caption{$score$ vs.\ $p$}
    \label{fig:exp1du-p}
  \end{subfigure}
  \hfill
  \begin{subfigure}[b]{0.24\textwidth}
    \includegraphics[width=\textwidth]{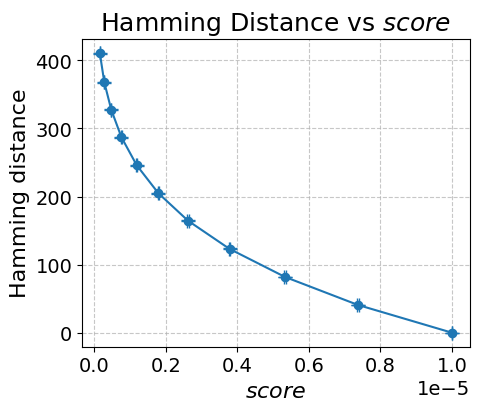}
    \caption{Hamming vs.\ $score$}
    \label{fig:exp1du-hamming}
  \end{subfigure}
  \hfill
  \begin{subfigure}[b]{0.24\textwidth}
    \includegraphics[width=\textwidth]{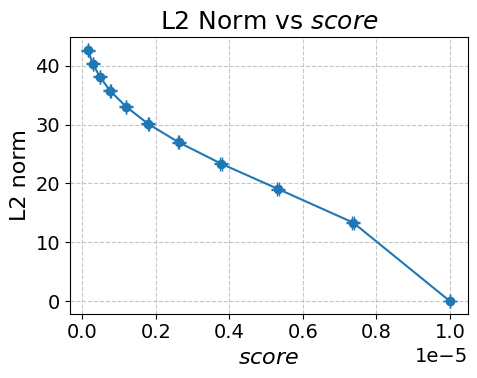}
    \caption{$\ell_2$ norm vs.\ $score$}
    \label{fig:exp1du-norm}
  \end{subfigure}
  \hfill
  \begin{subfigure}[b]{0.24\textwidth}
    \includegraphics[width=\textwidth]{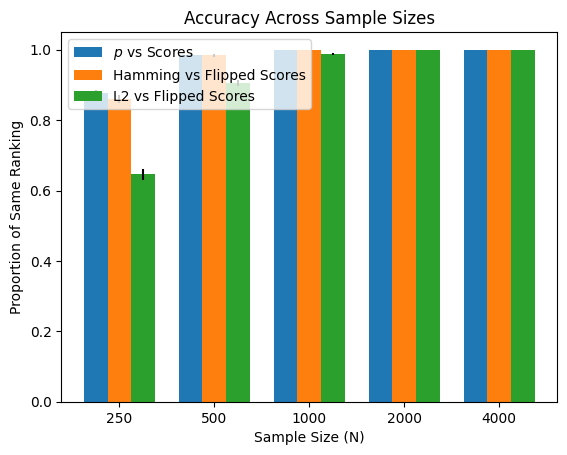}
    \caption{matched rankings}
    \label{fig:exp1du}
  \end{subfigure}
  \caption{Experiments of plug-in Gram determinant reliability
score with delta kernel on Gaussian synthetic data with uniformly random manipulation in \cref{eq:uniform_misreport}.}
  \label{fig:all_results_exp1du}
\end{figure}

\begin{figure}[htbp]
  \centering
  \begin{subfigure}[b]{0.24\textwidth}
    \includegraphics[width=\textwidth]{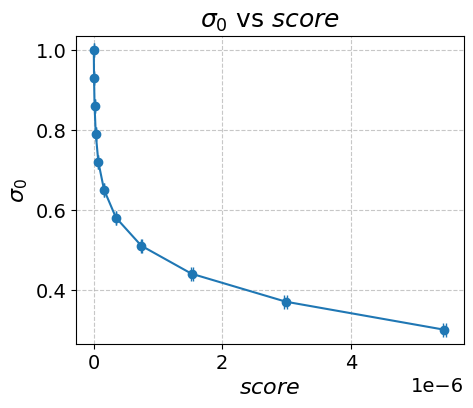}
    \caption{$score$ vs.\ $\sigma_0$}
    \label{fig:exp1dn-p}
  \end{subfigure}
  \hfill
  \begin{subfigure}[b]{0.24\textwidth}
    \includegraphics[width=\textwidth]{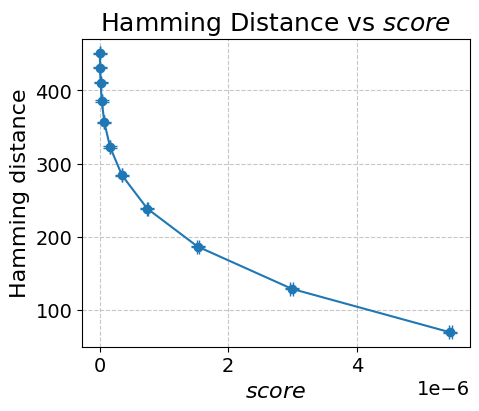}
    \caption{Hamming vs.\ $score$}
    \label{fig:exp1dn-hamming}
  \end{subfigure}
  \hfill
  \begin{subfigure}[b]{0.24\textwidth}
    \includegraphics[width=\textwidth]{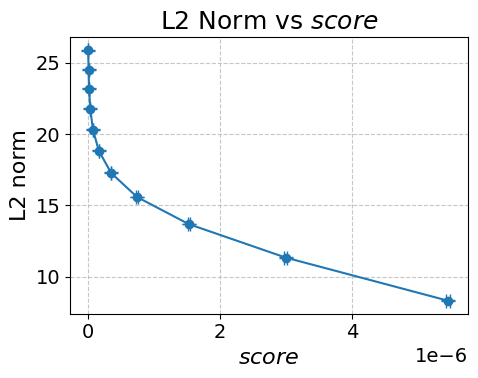}
    \caption{$\ell_2$ norm vs.\ $score$}
    \label{fig:exp1dn-norm}
  \end{subfigure}
  \hfill
  \begin{subfigure}[b]{0.24\textwidth}
    \includegraphics[width=\textwidth]{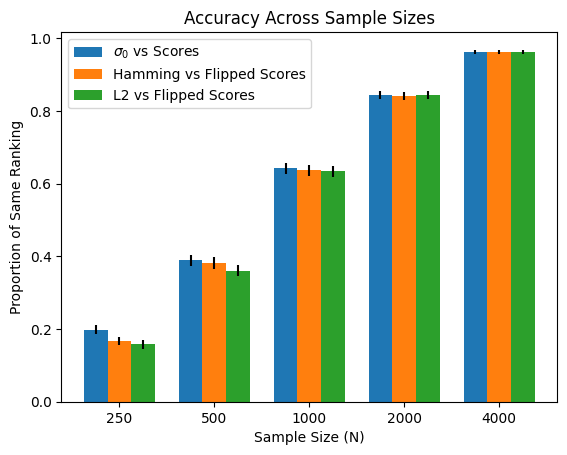}
    \caption{matched rankings}
    \label{fig:exp1dn}
  \end{subfigure}
  \caption{Experiments of plug-in Gram determinant reliability
score with delta kernel on Gaussian synthetic data with normal manipulation in \cref{eq:normal_misreport}.}
  \label{fig:all_results_exp1dn}
\end{figure}

From \cref{fig:all_results_exp1dn,fig:all_results_exp1du,fig:all_results_exp1gn,fig:all_results_exp1gu}, we observe that both the delta kernel and the Gaussian kernel perform well as reliability measures across all three metrics, under both normal and uniformly random manipulations. In particular, the delta kernel variant using bucketed $\vy$ achieves consistently strong performance. Empirically, the plug-in Gram determinant score with the delta kernel generally outperforms the version with the Gaussian kernel in most situations, despite the lack of theoretical guarantees for this delta kernel variant. For reported data with small $\ell_2$ norm error, the Gaussian kernel outperforms the approximate delta kernel score.

\subsubsection{Robustness Under Conditional Linear Dependence}
\begin{figure}[t]
    \centering
    \begin{subfigure}{0.48\linewidth}
        \centering
        \includegraphics[width=\linewidth]{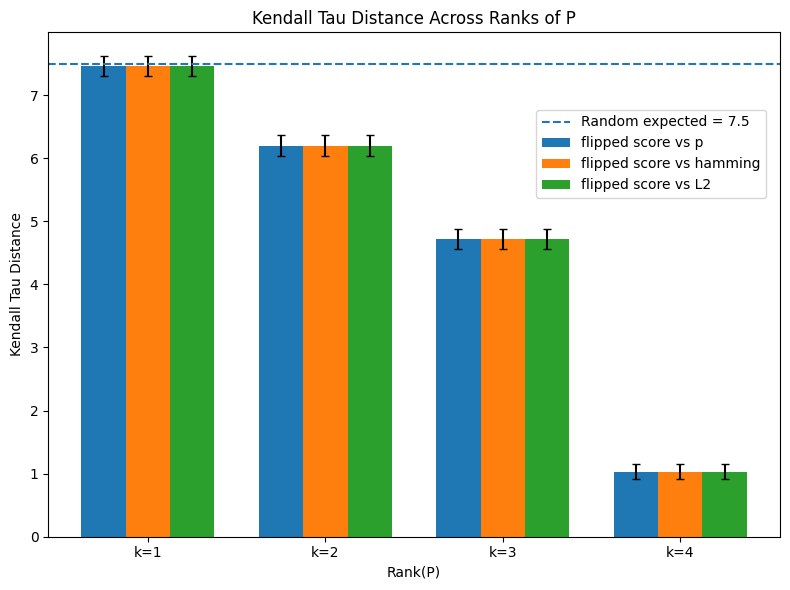}
        \caption{Kendall Tau distance between reliability order and Gram determinant score.}
        \label{fig:rank_gram}
    \end{subfigure}
    \hfill
    \begin{subfigure}{0.48\linewidth}
        \centering
        \includegraphics[width=\linewidth]{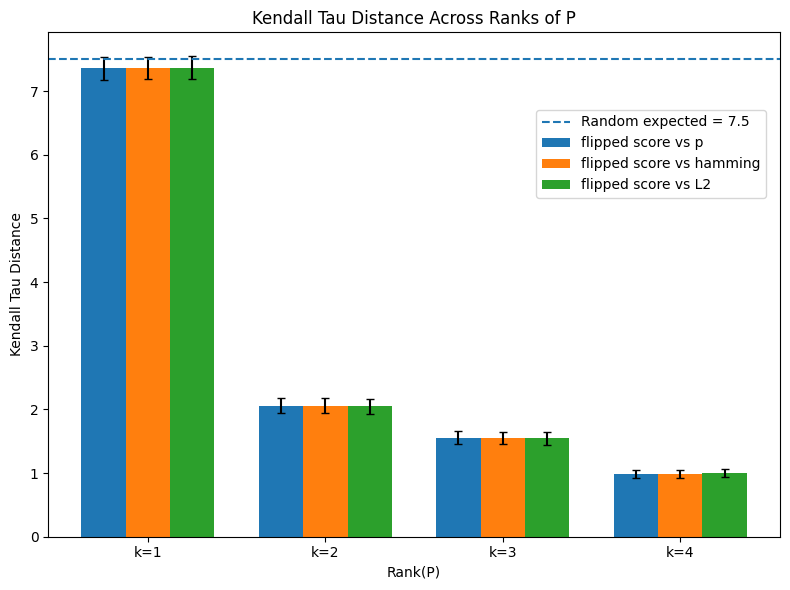}
        \caption{Kendall Tau distance between reliability order and Top-$k$ score ($k=2$).}
        \label{fig:rank_topk}
    \end{subfigure}
    \caption{Comparison of robustness under rank-deficient conditional structures.}
\end{figure}

In this experiment, we investigate how violations of conditional linear
independence affect the robustness of our proposed reliability scores.
To introduce controlled linear dependence, we construct conditional
distributions by generating a stochastic matrix $\mP$ of a prescribed
rank $\text{k}$: we sample $\text{k}$ independent basis rows on the simplex
and express the remaining rows as random convex combinations of these bases,
ensuring that $\mathrm{rank}(\mP)=\text{k}$.

We run 300 independent trials.
In each trial, we sample a ground-truth dataset $(\vx,\vy)$ of size $N=2000$
with $d=4$ using the same procedure as in Experiment~1, except that the
conditional distribution $\mP(\cdot\mid x)$ is now determined by the
rank-$\text{k}$ matrix $\mP$.
To model varying corruption, for
$p\in\{0.50,0.60,\dots,1.00\}$ we use uniformly random manipulation to generate perturbed labels
\[
\hat{x}_n =
\begin{cases}
x_n, & \text{with probability } p,\\[2pt]
Z_n, & \text{with probability } 1-p,
\end{cases}
\qquad Z_n\sim\mathrm{Uniform}\{1,\dots,d\}.
\]

For each corruption level, we compute the plug-in Gram determinant score and the
top-$\text{k}$ singular-value score and rank the six corrupted datasets accordingly,
allowing us to assess whether each scoring function responds monotonically to
increasing corruption even when $\mP$ is rank-deficient.

From \Cref{fig:rank_gram}, we observe that even when the conditional
distribution of $\vy$ given $\vx$ exhibits linear dependence, the Gram determinant
score retains nontrivial discriminative power: datasets with higher corruption
levels consistently yield lower scores, resulting in a meaningful correlation
with the true reliability order.
This indicates that the determinant-based score does not rely on full-rank
structure in order to capture relative reliability.

When the linear dependence is strong (e.g., the experiment matrices $\mP$
we construct are explicitly rank-deficient), a top-$k$ singular-value score
becomes a natural alternative.
As shown in \Cref{fig:rank_topk}, choosing $\text{k}$ equal to the true underlying
rank yields performance comparable to the Gram determinant score in the
full-rank setting, while being substantially more stable under rank
deficiency.
This confirms that top-$k$ volume scores can better adapt to structured,
low-rank conditional models, especially when only a subset of singular
directions carries meaningful information.

\subsubsection{Imbalanced Data}

\begin{figure}[htbp]
    \centering
    \begin{subfigure}[b]{0.33\textwidth}
        \includegraphics[width=\textwidth]{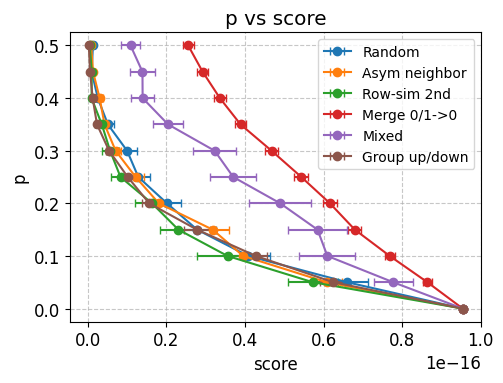}
        \caption{Gram determinant score vs.\ corruption probability $p$.}
        \label{fig:kyfan-p}
    \end{subfigure}\hfill
    \begin{subfigure}[b]{0.33\textwidth}
        \includegraphics[width=\textwidth]{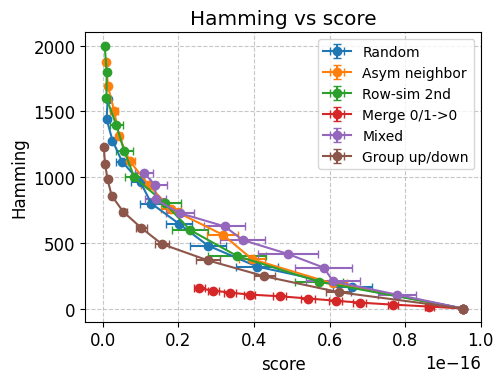}
        \caption{Hamming error vs.\ Gram determinant score.}
        \label{fig:kyfan-hamming}
    \end{subfigure}\hfill
    \begin{subfigure}[b]{0.33\textwidth}
        \includegraphics[width=\textwidth]{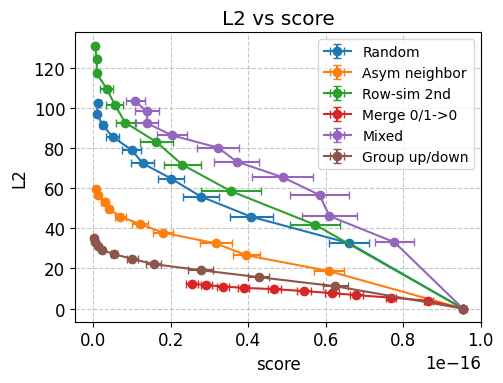}
        \caption{$\ell_2$ deviation vs.\ Gram determinant score.}
        \label{fig:kyfan-l2}
    \end{subfigure}
    \caption{Performance of the Gram determinant score under data imbalance.}
    \label{fig:imbalance}
\end{figure}

In this experiment, we study how data imbalance in the marginal distribution of
$\vx$ affects the behavior of the Gram determinant score.
This complements Experiment~1 by examining robustness not only to corruption
mechanisms but also to skewed label frequencies, a common characteristic of
real-world datasets.

The setting follows Experiment~1 exactly, except that the marginal
distribution of $\vx$ is now imbalanced: we draw $x_n=1$ with probability
$0.7$, and sample uniformly from $\{2,3,4,5\}$ with probability $0.3$.
The conditional distribution $\vy\mid \vx$ and all corruption schemes remain
unchanged.

As shown in \Cref{fig:imbalance}, the Gram determinant score continues to
separate datasets of different reliability levels even under substantial class
imbalance.
The monotonic decrease of the score with increasing corruption probability $p$
is preserved, and higher scores still correspond to lower Hamming error and
smaller $\ell_2$ deviation.
However, compared to the balanced case in Experiment~1, the score exhibits
higher variance due to the reduced effective sample size in underrepresented
classes.
This indicates that while the Gram determinant score remains informative under
imbalance, larger dataset sizes are beneficial for stabilizing the estimate.

\subsection{Experiments of Experiment Agnostic on Alternative Scores}

\begin{figure}[t]
    \centering
    \includegraphics[width=0.7\linewidth]{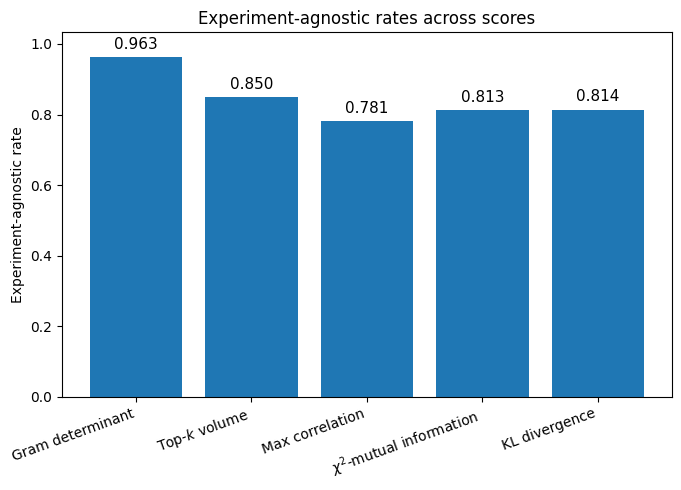}
    \caption{Experiment-agnostic rates across different scoring functions.}
    \label{fig:experiment_agnostic}
\end{figure}

Then we evaluate the robustness of each scoring function to
changes in the underlying representation.
Our goal is to assess whether a score preserves the relative quality ordering
between two corruption channels even after the data undergoes an additional,
task-specific transformation.

For each trial, we first sample a marginal distribution $\pi_X$ over
$\{1,\dots,d\}$ with $d=4$, together with a ground-truth conditional model
$\mP_{Y|X}\in\mathbb{R}^{d\times d}$ whose rows are normalized to be
stochastic.
We then draw two independent corruption channels
$\mQ_1,\mQ_2\in\mathbb{R}^{d\times d}$, each representing a
distinct noisy mapping from $x$ to corrupted reports.
All matrices are row-stochastic, so they define valid conditional
distributions.

We generate $1\times 10^8$ i.i.d.\ samples by drawing
$x\sim\pi_X$, sampling
$y\sim\mP_{Y|X}(\cdot\mid x)$, and obtaining two corrupted versions
$\hat{x}\sim\mQ_1(\cdot\mid x)$ and
$\hat{x}'\sim\mQ_2(\cdot\mid x)$.
Collecting these over all draws yields the datasets
$\vx, \vy, \hat{\vx}, \hat{\vx}'$.
For each scoring function $S(\cdot)$ (Gram determinant, top-$\text{k}$
volume, max correlation, $\chi^2$-mutual information, and KL divergence),
we compute
\[
S(\hat{\vx},\vy),\quad S(\hat{\vx}',\vy),\qquad
S(\hat{\vx},\vx),\quad S(\hat{\vx}',\vx),
\]
corresponding to evaluating channel quality before and after the transformation
$x\!\to\! y$.

A scoring function is said to preserve the ordering between the two channels
if
\[
\bigl(S(\hat{\vx},\vx)-S(\hat{\vx}',\vx)\bigr)\,
\bigl(S(\hat{\vx},\vy)-S(\hat{\vx}',\vy)\bigr)
> 0,
\]
that is, if the sign of the score difference is invariant under the
data-processing transformation.
A violation corresponds to a \emph{ranking flip}.
The experiment-agnostic rate is defined as
$1 - \text{flip rate}$, representing how often the ordering is preserved.

We repeat this procedure for $1000$ independent trials and report the average
experiment-agnostic rate for each scoring function in
\Cref{fig:experiment_agnostic}.
Higher rates indicate greater robustness to changes in representation, i.e.,
a stronger ability to maintain consistent channel rankings across different
tasks and observation models.

As shown in \Cref{fig:experiment_agnostic}, the Gram determinant score achieves a
substantially higher experiment-agnostic rate than the alternative scoring
functions.
This empirical advantage is consistent with our theoretical guarantee in
\Cref{prop:exp_agn}, which shows that the Gram determinant is uniquely robust to
changes in the underlying representation and preserves channel orderings under
a broad class of transformations.
These results confirm that the Gram determinant score not only performs well in
specific synthetic settings, but also provides the most stable and robust
measure of reliability across heterogeneous tasks and observation models.

\end{document}